\documentclass{article} 
\usepackage{iclr2026_conference,times}

\usepackage{amsmath,amsfonts,bm}

\def\eqref#1{equation~\ref{#1}}

\def\1{\bm{1}}

\DeclareMathAlphabet{\mathsfit}{\encodingdefault}{\sfdefault}{m}{sl}
\SetMathAlphabet{\mathsfit}{bold}{\encodingdefault}{\sfdefault}{bx}{n}

\newcommand{\E}{\mathbb{E}}

\newcommand{\R}{\mathbb{R}}

\usepackage{url}

\usepackage{amsthm}
\usepackage{bbm}
\usepackage{subcaption}
\usepackage{graphicx}
\usepackage{amssymb}
\usepackage{titletoc}
\usepackage{wrapfig}
\usepackage{booktabs}
\usepackage{longtable}

\usepackage{hyperref}

\usepackage{cleveref}
\usepackage{framed}

\theoremstyle{plain}
\newtheorem{theorem}{Theorem}[section]
\newtheorem{lemma}{Lemma}
\newtheorem{proposition}{Proposition}

\theoremstyle{definition}
\newtheorem{definition}[theorem]{Definition}

\theoremstyle{remark}

\crefname{lemma}{lemma}{lemmas}
\Crefname{lemma}{Lemma}{Lemmas}
\crefname{proposition}{proposition}{propositions}
\Crefname{proposition}{Proposition}{Propositions}
\crefname{corollary}{corollary}{corollaries}
\Crefname{corollary}{Corollary}{Corollaries}
\crefname{definition}{definition}{definitions}
\Crefname{definition}{Definition}{Definitions}
\crefname{example}{example}{examples}
\Crefname{example}{Example}{Examples}
\crefname{remark}{remark}{remarks}
\Crefname{remark}{Remark}{Remarks}
\crefname{note}{note}{notes}
\Crefname{note}{Note}{Notes}
\crefname{table}{table}{tables}
\Crefname{table}{Table}{Tables}
\crefname{conjecture}{conjecture}{conjectures}
\crefname{conjecture}{Conjecture}{Conjectures}

\crefname{figure}{figure}{figures}
\Crefname{figure}{Figure}{Figures}

\newcommand{\stab}{\text{Stab}}
\newcommand{\N}{\mathbb{N}}
\newcommand{\norm}[1]{\left\lVert #1 \right\rVert}
\newcommand{\inner}[2]{\left\langle #1, #2 \right\rangle}
\newcommand{\gauss}{\gamma}
\newcommand{\Hermite}{H}
\newcommand{\Wk}[1]{W^{(#1)}[f]}

\title{Noise Stability of Transformer Models}

\author{Themistoklis Haris\thanks{This work was conducted while the author was an intern at NII.}\\
Department of Computer Science\\
Boston University\\
Boston, USA\\
\texttt{tharis@bu.edu}\\
\And
Zihan Zhang \\
National Institute of Informatics \\
Tokyo, Japan \\
\texttt{zihan@nii.ac.jp} \\
\AND
Yuichi Yoshida \\
National Institute of Informatics\\
Tokyo, Japan \\
\texttt{yyoshida@nii.ac.jp}
}

\iclrfinalcopy 
\begin{document}

\maketitle

\begin{abstract}
Understanding simplicity biases in deep learning offers a promising path toward developing reliable AI. A common metric for this, inspired by Boolean function analysis, is average sensitivity, which captures a model's robustness to single-token perturbations. We argue that average sensitivity has two key limitations: it lacks a natural generalization to real-valued domains and fails to explain the "junta-like" input dependence we empirically observe in modern LLMs. To address these limitations, we propose \textit{noise stability} as a more comprehensive simplicity metric. Noise stability expresses a model's robustness to correlated noise applied to \textit{all} input coordinates simultaneously. We provide a theoretical analysis of noise stability for single-layer attention and ReLU MLP layers and tackle the multi-layer propagation problem with a covariance interval propagation approach. Building on this theory, we develop a practical \textit{noise stability regularization} method. Experiments on algorithmic and next-token-prediction tasks show that our regularizer consistently catalyzes grokking and accelerates training by approximately $35$\% and $75\%$ respectively. Our results sculpt a new connection between signal propagation in neural networks and interpretability, with noise stability emerging as a powerful tool for understanding and improving modern Transformers.
\end{abstract}

\section{Introduction}
Simplicity Biases have been a promising direction of study in recent years \citep{shah2020pitfalls, vasudeva2024simplicity, bhattamishra2022simplicity} as they provide a unifying framework for generalization, interpretability and robustness. Neural networks, including Large Language Models (LLMs), often converge to the simplest possible functions that explain the training data. Because simpler functions are inherently more interpretable and robust, this bias provides a solid theoretical framework for improving model reliability.

To quantify simplicity, current research often draws on concepts from Boolean function analysis \citep{o2021analysis}. In particular, theoretical work on Transformers \citep{vaswani2017attention} has highlighted average sensitivity, the expected change in a model's output given a single-token perturbation, as a key metric. \cite{bhattamishra2022simplicity} formally studied this metric in Transformers, showing that they learn functions with lower sensitivity than LSTMs. Subsequent work has linked average sensitivity to learnability: \cite{hahn2020theoretical} and \cite{hahn2024sensitive} demonstrated that Transformers struggle to learn functions with high sensitivity, such as \textsc{Parity}. Further validating its utility, \cite{vasudeva2024simplicity} connected average sensitivity to the \textit{grokking} phenomenon and proposed an extension of the metric beyond the Boolean domain.

Despite its usefulness, average sensitivity has notable drawbacks. Theoretically, its origins in Boolean analysis do not readily extend to real-valued domains. Empirically, it fails to fully explain the ``junta-like'' input dependence we observe in models like \textsc{GPT-2}, \textsc{Gemma}, and \textsc{RoBERTa}, where outputs rely on a small subset of inputs.

To address these shortcomings, we propose to instead quantify simplicity bias via \textit{noise stability}. Unlike average sensitivity’s one-by-one perturbations, noise stability measures a function's resilience to noise applied to\textit{ all inputs simultaneously}, offering more robust spectral concentration guarantees. This approach naturally extends to real-valued domains via the Ornstein-Uhlenbeck operator in the Gaussian measure, preserving a formal connection to the function spectrum and enabling a more powerful theoretical analysis.

\subsection{Our Contributions}
Our primary contributions are the following:
\begin{enumerate}
    \item We observe that LLMs like \textsc{GPT-2} exhibit a ``junta-like'' input dependence (\Cref{fig:3-model-comparison-total-influence}), a phenomenon not fully captured by average sensitivity and its extensions (\Cref{sec:junta-like-behavior}). To better characterize this behavior, we propose \textbf{noise stability} (\Cref{sec:noise-stability-def}) as a comprehensive simplicity metric that naturally extends to real-valued domains.

    \item We derive theoretical results on noise stability for single-layer Transformers and ReLU FFNs. We also provide novel insights for the multi-layer setting (\Cref{sec:noise-stability-in-transformers}) through a proxy recurrence-based analysis and a stability interval propagation technique.

    \item We propose \textbf{noise stability regularization} (\Cref{sec:noise-stability-regularization}), a method that consistently accelerates grokking across synthetic (modular addition, sparse parity) and non-synthetic (next-token-prediction) benchmarks, reducing the training time required for generalization by $\approx 35\%$ and $75\%$ respectively.
\end{enumerate}

\subsection{Related Work}

\paragraph{Simplicity Bias in Deep Learning.}
The tendency of neural networks to converge to simple functions has been a subject of intense recent study. This simplicity bias (SB) is analyzed from several perspectives. One line of research connects SB to spectral concentration in the Conjugate or Neural Tangent Kernel of networks \citep{yang2019fine, emami2021implicit, vasudeva2024simplicity}. Another uses Fourier analysis to characterize SB as a bias toward low-frequency functions \citep{xu2019frequency, rahaman2019spectral}. A large body of work investigates SB through the lens of training dynamics, often in shallow or linear networks \citep{morwani2023simplicity, zhang2019explicitizing, yun2020unifying, chen2020shape, boursier2024simplicity, chizat2020implicit, gatmiry2024simplicity, tsoy2024simplicity}. Beyond its mechanisms, SB has been linked to generalization \citep{valle2018deep}, though it can sometimes lead to degenerate solutions \citep{shah2020pitfalls}, and has been correlated with adversarial robustness \citep{min2024can, chen2020shape}. Specific to our focus, recent work has begun to explore SB in Transformers, particularly through the lens of token-to-token interactions in shallow models \citep{teney2025we, rende2024distributional}.

\paragraph{Sensitivity Analysis in Transformers.}
To develop a computationally tractable proxy for spectral concentration, recent work has adopted \textit{average sensitivity} from Boolean function analysis. \cite{bhattamishra2022simplicity} showed that Transformers are more biased towards low-sensitivity functions than LSTMs, enabling generalization even with noisy labels. \cite{hahn2020theoretical, hahn2024sensitive} established that Transformers struggle to learn high-sensitivity functions like parity, despite having the capacity to represent them. Further, \cite{vasudeva2024simplicity} demonstrated that average sensitivity can also serve as a metric for tracking progress in grokking.

\paragraph{Signal Propagation in Neural Architectures} 
Motivated by the challenge of gradient instability during training (e.g., vanishing or exploding gradients), a rich body of literature has established a theory of signal propagation in neural networks \citep{poole2016exponential, ji2023signal, lee2019signal, kohan2023signal, lou2022feature, wang2024nonlinear}. Leveraging tools from statistical physics, mean-field dynamical systems \citep{schoenholz2016deep}, kernel methods \citep{martens2021rapid}, and random matrix theory \citep{pennington2017resurrecting}, this now-mature theory offers critical insights into maintaining models at the ``edge of chaos''—a state of criticality where signal information is preserved without diverging or collapsing.  Although theoretical analyses are frequently constrained to models at initialization, they provide valuable guidelines regarding the efficacy of optimization techniques such as dropout \citep{pretorius2018critical} and residual connections \citep{fischer2023optimal}, the mechanics of which were previously under-theorized. 

Central to this literature are two key quantities: the correlation between the pre-activations of distinct inputs, and the pre-activation variance\footnote{\cite{poole2016exponential} explicitly define iterative maps for these quantities: $Q$-maps capture variance and $C$-maps capture correlation.}. The notion of noise stability proposed in our work can be viewed as a more parsimonious analogue to signal correlation and $C$-maps. Indeed, our results on propagating noise stability through MLP and Attention layers mirror similar findings in the signal propagation literature. However, we instead arrived at noise stability through questions of spectral concentration and simplicity bias, rather than initialization and training stability. This distinction suggests a compelling link between the two fields of study that can be further explored in future work.

Signal propagation has also been examined within Transformer architectures to address pathological issues such as unstable gradients and rank collapse \citep{saada2024mind, noci2022signal, huang2023understanding}. These works also generally adopt a dynamical systems perspective; in contrast, we utilize a discrete viewpoint and explicitly draw connections to simplicity biases. Unifying these theoretical landscapes to encompass both discrete and continuous domains remains a promising direction for future research.

Finally, noise-related regularization is not unprecedented. \cite{hua2023improving}, for instance, propose a noise-stability regularization method for fine-tuning Transformers. Our approach differs fundamentally in motivation, scope of application, and the definition of correlated noise.

\paragraph{Generalizing Sensitivity to Continuous Domains.}
The concept of average sensitivity has been generalized to real-valued domains via \textit{geometric influences} \citep{keller2012geometric, keller2014geometric}. This formulation is equipped with an analogue of Friedgut's junta theorem for continuous spaces \citep{bouyrie2017unified}, unifying prior results across various discrete and continuous measures \citep{benjamini2016juntas, wimmer2014low}.

\paragraph{Noise Stability and Sensitivity.}
Our work is most closely related to that of \cite{li2025noise}, who study \textit{noise sensitivity}---a dual measure to our noise stability---for hierarchical functions. They use an inductive argument to propagate sensitivity bounds across layers in a simple, non-intersecting neural network. Though they do not study Transformers in practice, their layer-wise propagation strategy directly inspired our approach for noise stability intervals in multi-layer Transformers.

\section{Setup}
We define a simplified \textit{Transformer} as an $L$-layer model that maps an input sequence $X \in \mathbb{R}^{n\times d}$ to a distribution over $n_c$ classes. Each layer $i \in [L]$ contains $H$ attention heads. A head $j$ takes the layer input $Y_i \in \mathbb{R}^{n\times d}$ and computes an output $a_{i,j} \in \mathbb{R}^{n\times d}$ via:
$$a_{i,j} = \sigma(Y_i^T W_{Q,i,j}^T W_{K,i,j} Y_i) (Y_i^T W_{V,i,j})$$
Here, $W_{Q,i,j}, W_{K,i,j}, W_{V,i,j} \in \mathbb{R}^{d\times d}$ are weight matrices and $\sigma$ is the row-wise softmax\footnote{For convenience, in our theoretical results we will ignore the $1/\sqrt{d}$ factor that appears in these expressions.}. The head outputs are concatenated and passed through a Multi-Layer Perceptron (MLP) with a ReLU activation, $\phi(x) = \max\{0, x\}$: $\hat{a}_{i} = \phi( (a_{i,1}\circ\cdots\circ a_{i,H}) W_{\phi}^{(i)} )$. The final layer's output $\hat{a}_L$ is then mean-pooled and projected to produce class logits $z \in \mathbb{R}^{n_c}$ using an output matrix $W_O \in \mathbb{R}^{d \times n_c}$.

For theoretical simplicity, this definition omits elements like residual connections, layer normalization, and attention masks, though we include them in our experiments.

\subsection{Boolean Function Analysis}
Our work draws on Boolean function analysis, which studies functions $f:\{\pm 1\}^n \to \mathbb{R}$ by expanding them as multilinear polynomials via the \textit{Fourier spectrum}: $f = \sum_{U \subseteq [n]} \hat{f}_U \chi_U$, where $\chi_U(x) := \prod_{i \in U} x_i$ are the basis functions. For a full overview, see \Cref{sec:boolean-function-prelims-appx}.

A key property is the \textit{influence} of a coordinate $i \in [n]$, which measures the expected impact of flipping the input $x_i$:
\begin{align}
    \label{eq:influence-definition}
    \text{Inf}_i[f] := \mathop{\mathbb{E}}\limits_{x\sim\{\pm 1\}^n}\left[\left(\frac{f(x)-f(x^{\oplus i})}{2}\right)^2\right] = \sum_{S \ni i} \hat{f}_S^2
\end{align}
The \textit{total influence} across all coordinates is the \textbf{average sensitivity}, $I[f] = \sum_{i=1}^n \text{Inf}_i[f]$, a common measure of a function's robustness to noise.

\section{Models are often ``simpler'' than expected}
\label{sec:junta-like-behavior}
The ``simplicity'' of a Boolean function $f:\{\pm 1\}^n \to \mathbb{R}$ is formalized by its dependence on a few variables. One way to characterize simplicity is \textit{spectral concentration}, where the most of the Fourier mass is on low-degree coefficients. A function is $(\varepsilon, k)$-spectrally concentrated if its mass on terms of degree $k$ or higher is bounded:
$$
\sum_{j=k}^n W^j[f] \le \varepsilon\cdot ||f||_2^2, \quad \text{where} \quad W^j[f] := \sum_{|U|=j}\hat{f}_U^2
$$
A stricter notion is a \textit{$k$-junta}, a function that depends on at most $k$ coordinates. A function with low average sensitivity is simple in both senses: a function is always $(\varepsilon,I[f]/\varepsilon)$-spectrally concentrated, and \textit{Friedgut's Junta Theorem} \citep{kelman2021theorems, friedgut1998boolean} shows it must also be structurally close to a junta:
\begin{theorem}
\label{thm:friedgut}
For every $\epsilon > 0$, there exists a $k$-junta $g: \{-1, 1\}^n \to \{-1, 1\}$ such that $\mathbb{P}[f(x) \neq g(x)] \le \epsilon$, where the number of variables $k$ on which $g$ depends is bounded by $k \le 2^{O(I(f)/\epsilon)}$.
\end{theorem}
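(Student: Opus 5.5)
We follow Friedgut's original argument, taking $f:\{\pm1\}^n\to\{\pm1\}$ Boolean-valued (implicit in the statement) and using the hypercontractive (Bonami) inequality from the standard toolkit of \Cref{sec:boolean-function-prelims-appx}. The proof has three steps. First, truncate the spectrum at low degree. Second, find a small set $J$ of coordinates carrying almost all of the low-degree Fourier mass. Third, round the resulting real-valued approximation to a Boolean junta.

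Fix $k_0 := 2I[f]/\epsilon$. Since $\sum_S |S|\hat f_S^2 = I[f]$, the Fourier mass on sets with $|S|> k_0$ is at most $I[f]/k_0=\epsilon/2$. Fix a threshold $\tau>0$ to be chosen, and let $J := \{i : \text{Inf}_i[f]\ge \tau\}$. Then $|J|\le I[f]/\tau$. Define
\begin{equation*}
g_0 := \sum_{S\subseteq J,\ |S|\le k_0} \hat f_S\chi_S ,
\end{equation*}
and take the junta $g := \mathrm{sgn}(g_0)$. Because $f$ is $\pm1$-valued, $\mathbb{P}[f\ne g]\le \|f-g_0\|_2^2$. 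By Parseval, this is the high-degree mass (at most $\epsilon/2$) plus the low-degree mass on sets $S$ that meet $[n]\setminus J$. That remaining mass is bounded by
\begin{equation*}
\sum_{i\notin J}\ \sum_{S\ni i,\ |S|\le k_0}\hat f_S^2 .
\end{equation*}

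The main obstacle is bounding this sum by $\epsilon/2$. The naive bound is $\sum_{i\notin J}\text{Inf}_i[f]$, which can be as large as $I[f]$, so it is useless. The key is Boolean-valuedness. Let $D_i f$ denote the $i$-th derivative. Then $D_if$ takes values in $\{0,\pm1\}$, so $\|D_if\|_{4/3}^{2}=\text{Inf}_i[f]^{3/2}$. Apply Bonami's $(4/3,2)$-hypercontractivity to the degree-$\le k_0$ part of $D_if$:
\begin{equation*}
\sum_{S\ni i,\ |S|\le k_0}\hat f_S^2 \le 3^{k_0}\,\|D_if\|_{4/3}^2 = 3^{k_0}\text{Inf}_i[f]^{3/2}.
\end{equation*}
For $i\notin J$ we have $\text{Inf}_i[f]^{3/2}\le \tau^{1/2}\,\text{Inf}_i[f]$, so summing over $i\notin J$ gives at most $3^{k_0}\tau^{1/2} I[f]$. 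Choose $\tau := \epsilon^2 3^{-2k_0}/(4I[f]^2)$, which makes this at most $\epsilon/2$.

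Combining the two halves, $\mathbb{P}[f\ne g]\le\epsilon$. The function $g$ depends only on the coordinates in $J$, and
\begin{equation*}
|J|\le I[f]/\tau = 4I[f]^3\, 3^{4I[f]/\epsilon}/\epsilon^2 = 2^{O(I[f]/\epsilon)}.
\end{equation*}
The last equality needs a small remark. If $I[f]\ge\epsilon$, the polynomial factor $I[f]^3/\epsilon^2$ is absorbed into the exponent. If $I[f]<\epsilon$, then $f$ is $\epsilon$-close to a constant, which is a $0$-junta, so the theorem holds trivially.
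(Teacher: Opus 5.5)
Your argument is correct. Note that the paper does not prove this theorem. It states it as background and cites Friedgut (1998) and Kelman et al.\ (2021), so there is no in-paper proof to compare against. What you wrote is Friedgut's original hypercontractivity proof in its textbook form: truncate at degree $2I[f]/\epsilon$, keep the coordinates with influence at least $\tau$, use Boolean-valuedness to get $\sum_{S\ni i,|S|\le k_0}\hat f_S^2\le 3^{k_0}\,\text{Inf}_i[f]^{3/2}$, and round $g_0$ to its sign. The second cited reference (Kelman et al.) instead reaches the same $2^{O(I[f]/\epsilon)}$ bound via random restrictions and a log-Sobolev inequality, avoiding hypercontractivity. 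Your route is the more direct one and gives the explicit bound $|J|\le 4I[f]^3\,3^{4I[f]/\epsilon}/\epsilon^2$. You are also right to make the hypothesis that $f$ is $\pm1$-valued explicit. The paper's statement leaves $f$ unspecified, and the surrounding text speaks of real-valued $f$. However, $\mathbb{P}[f\neq g]\le\epsilon$ with $\pm1$-valued $g$ is only meaningful for Boolean $f$. Your proof also uses Boolean-valuedness essentially, both in the rounding step and in $\|D_if\|_{4/3}^2=\text{Inf}_i[f]^{3/2}$.

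Two small points of rigor. First, do not phrase the key step as applying Bonami to the truncation $(D_if)^{\le k_0-1}$. That would give a bound in terms of $\|(D_if)^{\le k_0-1}\|_{4/3}$, and low-degree projection is not a contraction on $L^{4/3}$. By duality it suffices to check $L^4$, where already $\|P_{\le 1}\mathrm{Maj}_3\|_4>\|\mathrm{Maj}_3\|_4$. Instead, apply $(4/3,2)$-hypercontractivity to $D_if$ itself and then discard high degrees: $\sum_{S\ni i,\,|S|\le k_0}\hat f_S^2\le 3^{k_0-1}\|T_{1/\sqrt{3}}D_if\|_2^2\le 3^{k_0-1}\|D_if\|_{4/3}^2$. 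This is your displayed inequality, even with $3^{k_0-1}$. Second, absorbing $I[f]^3/\epsilon^2$ into $2^{O(I[f]/\epsilon)}$ when $I[f]\ge\epsilon$ uses $\epsilon\le 1$. This is harmless, since for $\epsilon\ge 1$ a constant works. The case $I[f]<\epsilon$ follows from the paper's Poincar\'e inequality, because $\min\{\mathbb{P}[f=1],\mathbb{P}[f=-1]\}\le \mathrm{Var}[f]/2\le I[f]/2$.
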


While prior work has used average sensitivity to analyze model simplicity \citep{vasudeva2024simplicity,hahn2024sensitive}, we argue it has significant theoretical and empirical drawbacks as a metric for simplicity bias.

\paragraph{Theoretical Drawbacks: Extending to Real-Valued Domains}
First, the theoretical foundation of average sensitivity in Boolean domains is difficult to extend to the real-valued functions of deep learning. Approaches using generalized domains that mimic finite fields\footnote{See \citep{o2021analysis}, Chapter 8, for an exposition on Boolean Function Analysis on the hypergrid.} are cumbersome, as sensitivity is not naturally defined, and its estimation via sampling is impractical.

A more robust alternative is \textbf{geometric influence} \citep{keller2012geometric}, defined for a smooth function $f:\mathbb{R}^n \to \mathbb{R}$ and measure $\mu$ as:
$$
I^{\mathcal{G}}[f] := \sum_{i=1}^n ||\partial_i f||_{L^1(\mu)}
$$
Crucially, this measure has strong theoretical backing, including an analogue of Friedgut's junta theorem for hypercontractive measures \citep{bouyrie2017unified}. This makes it a more suitable tool for analyzing neural networks and we use it in our empirical study below.

\paragraph{Empirical Drawbacks: Mismatch with LLM Behavior}
Second, average sensitivity fails to capture the empirical simplicity of LLMs. The exponential bound on influential variables from Friedgut's theorem is too loose to explain the behavior of modern Transformers, which exhibit far stronger influence concentration. To demonstrate this, we analyze the geometric influence of each input token for \textsc{Gemma-2b}, \textsc{RoBERTa}, and \textsc{GPT-2} on sequences of $n=256$ tokens. Friegut's theorem predicts $\leq 1024$ variables with influence at least $0.1\cdot I[f]$, yet \Cref{fig:3-model-comparison-total-influence} only shows $5$-$10$ such variables exist, meaning the bound is loose. Our analysis also reveals three key patterns with further experimental details available in \Cref{sec:more-total-influence-experiments}:
\begin{itemize}
    \item \textbf{Junta-like Concentration:} A small subset of tokens have disproportionally high influence.
    \item \textbf{Positional Bias:} The first and last tokens are consistently the most influential. This is part agrees with observations made in the KV Cache Compression literature about ``attention sinks'' \citep{xiao2023efficient}.
    \item \textbf{Sensitivity:} Every token has a non-zero influence, indicating that the models are sensitive to all inputs, even if asymmetrically.
\end{itemize}
\begin{figure}[t]
    \centering
    \includegraphics[scale=0.5]{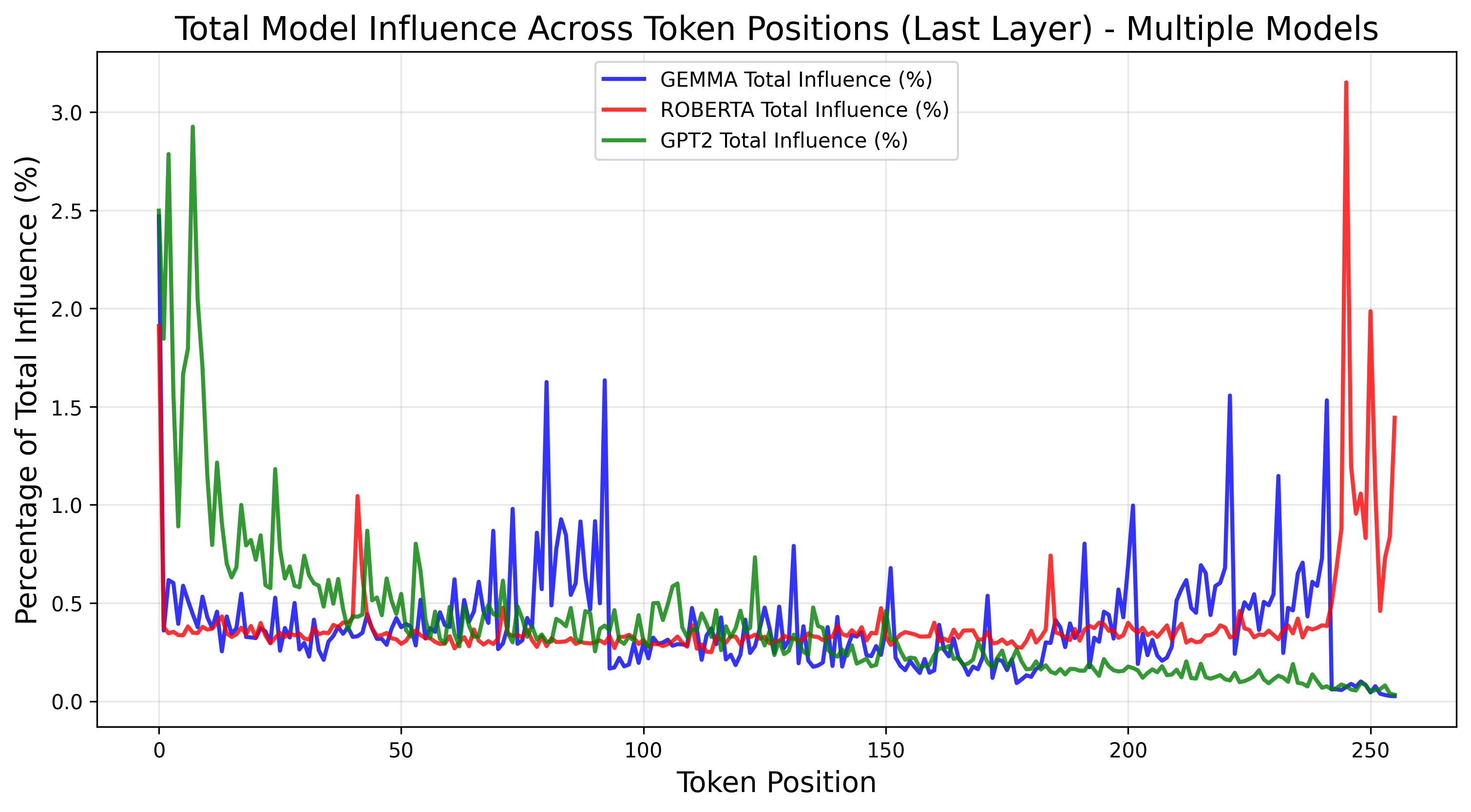}
    \caption{Comparing the per-coordinate geometric influence of three models for $n=256$.}
    \label{fig:3-model-comparison-total-influence}
\end{figure}
\vspace{-3mm}
\section{Noise Stability as a Measure of Concentration}
\label{sec:noise-stability-def}
For a more holistic characterization of simplicity that offers finer control over spectral concentration and easily generalizes to real-valued domains, we propose \textbf{noise stability}, a concept from Boolean Function Analysis \citep{o2021analysis} that measures a function's resilience to correlated noise \textit{applied to all inputs simultaneously}. 

This concept extends naturally to real-valued functions in $L^2(\gamma)$ by leveraging the Ornstein-Uhlenbeck (OU) semigroup $T_\rho$. This framework uses the basis of Hermite polynomials under the standard Gaussian measure $\gamma \equiv \mathcal{N}(0,1)$, allowing for a direct spectral interpretation\footnote{See \Cref{sec:ornstein-uhlenbeck-theory}, following \cite{andersson2012ornstein}, for an introduction to OU Semigroup Theory.}. The correlated pair $(X,Y)$ is generated by adding scaled Gaussian noise to $X$:
\begin{definition}
\label{def:noise-stability}
Let $f \in L^2(\gamma)$ where $\gamma$ is the Gaussian measure on $\mathbb{R}^n$. For $\rho \in (0,1)$, let $X \sim \gamma$ and let $Y = \rho X + Z\sqrt{1-\rho^2}$, where $Z \sim \gamma$ is independent of $X$. The noise stability of $f$ is:
\begin{align}
\stab_\rho(f) := \mathop{\E}\limits_{(X,Y)}[f(X)f(Y)]
\end{align}
\end{definition}

Noise stability is useful because it relates directly to the function's spectrum through its Hermite-Fourier coefficients $\tilde{f}(\alpha)$, as shown in \Cref{sec:ornstein-uhlenbeck-theory}:
\begin{equation}
    \stab_\rho(f) = \sum_{\alpha \in \mathbb{N}^d } \rho^{|\alpha|} \tilde{f}(\alpha)^2
\end{equation}
This connection allows us to formally bound a function's spectral concentration. The following lemma shows that if a function is highly stable (i.e., its stability is close to its total variance), its Fourier mass must be concentrated on low-degree coefficients. For a fixed correlation $\rho$ and spectral tail budget $\varepsilon$, the degree of concentration $T$ becomes smaller as the ratio $\delta/\varepsilon$ approaches zero.

\begin{lemma}[Spectral Concentration via Stability]
\label{lemma:tail-bound-via-stability}
Let $f \in L^2(\gamma^n)$. If $\stab_\rho(f) \geq (1 - \delta) ||f||_{2}^2$ for some $\rho \in (0,1)$ and $0 < \delta < \varepsilon < 1$, then $f$ is $(\varepsilon, T)$-spectrally concentrated for any 
$$
T \geq \log_{\frac{1}{\rho}}\left(1 - \frac{\delta}{\varepsilon}\right)
$$
\end{lemma}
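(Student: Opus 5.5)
The plan is to work entirely in the Hermite expansion. Group the Hermite--Fourier mass by degree: write $W^k[f] := \sum_{|\alpha| = k} \tilde{f}(\alpha)^2$. Parseval gives $\norm{f}_2^2 = \sum_{k \ge 0} W^k[f]$, and the spectral formula for stability stated just before the lemma gives $\stab_\rho(f) = \sum_{k\ge 0} \rho^k W^k[f]$. Both series converge absolutely since $f \in L^2(\gamma^n)$ and $0<\rho<1$, so the terms may be regrouped freely. Fix $T$ and set the tail mass $\tau_T := \sum_{k \ge T} W^k[f]$. The goal is to show $\tau_T \le \varepsilon \norm{f}_2^2$.

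The key step is a one-line upper bound on stability in terms of $\tau_T$. Since $\rho^k \le 1$ for $k<T$ and $\rho^k \le \rho^T$ for $k \ge T$,
\begin{align*}
\stab_\rho(f) \le \sum_{k<T} W^k[f] + \rho^T \tau_T = \norm{f}_2^2 - (1-\rho^T)\,\tau_T .
\end{align*}
Combining this with the hypothesis $\stab_\rho(f) \ge (1-\delta)\norm{f}_2^2$ gives $(1-\rho^T)\tau_T \le \delta \norm{f}_2^2$. For $T \ge 1$ we have $1-\rho^T>0$, so
\begin{align*}
\tau_T \le \frac{\delta}{1-\rho^T}\,\norm{f}_2^2 .
\end{align*}

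It then remains to choose $T$ so that $\delta/(1-\rho^T) \le \varepsilon$. This is equivalent to $\rho^T \le 1 - \delta/\varepsilon$, where the right-hand side lies in $(0,1)$ because $0<\delta<\varepsilon$. Taking logarithms with base $1/\rho>1$, the condition becomes
\begin{align*}
T \ge \log_{1/\rho}\!\left(\frac{1}{1-\delta/\varepsilon}\right) = \log_{\rho}\!\left(1-\frac{\delta}{\varepsilon}\right).
\end{align*}
Since $\tau_T$ is nonincreasing in $T$, every larger $T$ also works.

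The only delicate point is this final inversion, specifically the direction of the logarithm. As printed, $\log_{1/\rho}(1-\delta/\varepsilon)$ is negative, so the condition on $T$ would be vacuous. I would therefore read the threshold as $\log_{1/\rho}\bigl(1/(1-\delta/\varepsilon)\bigr)$, equivalently $\log_\rho(1-\delta/\varepsilon)$, which is what the argument actually yields. This reading also matches the remark that $T$ shrinks as $\delta/\varepsilon \to 0$. I would state this correction explicitly in the proof.
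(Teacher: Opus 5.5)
Your proposal is correct and follows the paper's proof step for step: split the Hermite weight at degree $T$, bound $\stab_\rho(f) \le \|f\|_2^2 - (1-\rho^T)\tau_T$, combine with the hypothesis, and invert $\delta/(1-\rho^T) \le \varepsilon$. Your correction of the threshold is also right: the appendix restatement of the lemma gives $T \ge \log(1-\delta/\varepsilon)/\log\rho = \log_\rho(1-\delta/\varepsilon)$, so the $\log_{1/\rho}(1-\delta/\varepsilon)$ in the main-text statement is a typo.
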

\begin{proof}[Proof Sketch]
The proof follows from analyzing the action of the Ornstein-Uhlenbeck semigroup $T_\rho$ on the Hermite expansion of $f$. The full proof is in \Cref{sec:proof-of-tail-bound-via-stability}. 
\end{proof}
\paragraph{Spectral Concentration Bounds: Sensitivity vs. Stability}
We compare the predicted Fourier tail mass (percentage of weight in degrees $\ge 15$ with $n=256$) for several Transformer models, using bounds derived from average sensitivity versus those from noise stability. The results in \Cref{table:concentration_comp} show that the noise stability bound offers a more accurate estimate of spectral concentration.

\begin{table}[h]
\centering
\begin{tabular}{lcc}
\toprule
\textbf{Model} & \textbf{Tail Mass Bound from $I[f]$} & \textbf{Tail Mass Bound from $\stab_\rho(f)$} \\
\midrule
\textsc{Gpt-2} & 0.003 & \textbf{0.0005} \\
\textsc{Bert} & 0.04 & \textbf{0.02} \\
\textsc{Roberta} & 0.19 & \textbf{ 0.02} \\
\textsc{Gemma} & 0.043 & \textbf{0.0157} \\
\bottomrule
\end{tabular}
\caption{Predicted Fourier tail mass (percentage of weight in degrees $\ge 15$) for Transformer models.}
\label{table:concentration_comp}
\end{table}

\section{Analysis of Noise Stability in Transformer Models}
\label{sec:noise-stability-in-transformers}
We now present our theoretical results on the noise stability of Transformer components. We begin by analyzing a single ReLU MLP layer and an attention layer, and then use these results to analyze the propagation of stability through a multi-layer network.

\subsection{Noise Stability of a Single ReLU MLP Layer}
We first analyze the stability of an MLP layer with a ReLU activation, a result closely related to the arc-cosine kernel. Consider a pair of $\rho$-correlated standard Gaussian inputs $(X,Y)$, whose joint distribution is:
$$
(X, Y) \sim \mathcal{N}\left(0, \begin{pmatrix} 1 & \rho \\ \rho & 1 \end{pmatrix}\right)
$$
The noise stability of the ReLU function is given by the following theorem.

\begin{theorem}
\label{thm:stability-of-mlp}
The noise stability of the ReLU function under the standard Gaussian measure is:
$$
\mathop{\mathbb{E}}_{(X,Y)}[\text{ReLU}(X)\text{ReLU}(Y)] = \frac{1}{2\pi} \left( \sqrt{1 - \rho^2} + \rho(\pi - \arccos \rho) \right)
$$
\end{theorem}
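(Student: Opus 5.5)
We will prove the identity by writing the correlated pair in polar coordinates, which is the standard route to the degree-one arc-cosine kernel of Cho and Saul. We realize the pair $(X,Y)$ from two independent standard Gaussians $G_1,G_2$. Set $\theta := \arccos \rho \in (0,\pi/2)$, and write
\[
X = G_1, \qquad Y = \rho G_1 + \sqrt{1-\rho^2}\, G_2 .
\]
This pair has the required covariance. Equivalently, $X=\langle u,G\rangle$ and $Y=\langle v,G\rangle$ for unit vectors $u,v\in\mathbb{R}^2$ at angle $\theta$, with $G=(G_1,G_2)$ standard Gaussian in the plane. The quantity to compute is then
\[
\mathbb{E}\big[\langle u,G\rangle_+\langle v,G\rangle_+\big].
\]

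The key steps, in order, are as follows.

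First, pass to polar coordinates $G = R(\cos\varphi,\sin\varphi)$. Here $R$ is Rayleigh distributed, $\varphi$ is uniform on $[0,2\pi)$, and the two are independent. Taking $u=(1,0)$ and $v=(\cos\theta,\sin\theta)$, the integrand factors as
\[
R^2\,(\cos\varphi)_+\,(\cos(\varphi-\theta))_+ .
\]

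Second, use $\mathbb{E}[R^2]=2$. This reduces the problem to the angular integral
\[
\frac{2}{2\pi}\int_{\mathcal{A}} \cos\varphi\,\cos(\varphi-\theta)\,d\varphi,
\]
where $\mathcal{A}$ is the arc on which both cosines are positive, namely $\varphi\in[\theta-\pi/2,\ \pi/2]$. This arc has length $\pi-\theta$.

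Third, apply the product-to-sum identity
\[
\cos\varphi\cos(\varphi-\theta)=\tfrac12\big[\cos\theta+\cos(2\varphi-\theta)\big]
\]
and integrate over the arc:
\begin{itemize}
\item The constant term contributes $\tfrac12(\pi-\theta)\cos\theta$.
\item The oscillating term contributes $\tfrac14\big[\sin(\pi-\theta)-\sin(\theta-\pi)\big] = \tfrac12\sin\theta$.
\end{itemize}
The angular integral therefore equals $\tfrac12\big(\sin\theta + (\pi-\theta)\cos\theta\big)$.

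Multiplying by $\tfrac{2}{2\pi}$ gives
\[
\frac{1}{2\pi}\big(\sin\theta+(\pi-\theta)\cos\theta\big).
\]
Substituting $\cos\theta=\rho$ and $\sin\theta=\sqrt{1-\rho^2}$ yields exactly the claimed formula.

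As a sanity check, at $\rho=1$ the formula gives $\tfrac{1}{2\pi}\cdot\pi=\tfrac12=\mathbb{E}[\mathrm{ReLU}(X)^2]$. At $\rho=0$ it gives $\tfrac{1}{2\pi}=(\mathbb{E}[X_+])^2$. Both are correct.

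The main obstacle is determining the integration arc correctly. The condition $\cos\varphi>0$ and $\cos(\varphi-\theta)>0$ must be intersected carefully, mod $2\pi$; the arc length $\pi-\theta$ is where the factor $(\pi-\arccos\rho)$ comes from. The argument also extends by continuity to all $\rho\in[-1,1]$, since the arc description remains valid for $\theta\in[0,\pi]$.

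If the polar approach became messy, the fallback would be a Hermite route. Using the spectral formula for $\stab_\rho$ stated above, expand $\mathrm{ReLU}$ in normalized Hermite polynomials and sum $\sum_k \rho^k \tilde f(k)^2$. This gives the series expansion of the same closed form, but identifying that series is more laborious.
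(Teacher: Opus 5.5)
Your proof is correct, and it takes a genuinely different route from the paper's.

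The paper integrates the correlated bivariate density directly over the quadrant $x,y\ge 0$. It passes to polar coordinates in the $(x,y)$-plane, where the exponent $r^2(1-\rho\sin 2\theta)/(2(1-\rho^2))$ still couples radius and angle, and integrates out $r$. It then substitutes $u=\tan\theta$ to reach $\int_0^\infty u\,(u^2-2\rho u+1)^{-2}\,du$, completes the square, and evaluates the pieces with the antiderivative of $(v^2+a^2)^{-2}$. At the end it converts $\arctan(\rho/\sqrt{1-\rho^2})$ into $\pi/2-\arccos\rho$.

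You instead whiten first, realizing $X$ and $Y$ as projections of an isotropic planar Gaussian onto unit vectors at angle $\arccos\rho$. Radius and angle are then independent, the radial factor is just $\mathbb{E}[R^2]=2$, and only an elementary trigonometric integral over an arc remains.

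Your route buys brevity and transparency:
\begin{itemize}
\item The factor $\pi-\arccos\rho$ appears directly as the length of the arc on which both half-planes are positive.
\item The joint density of $(X,Y)$, which degenerates at $\rho=\pm1$, is never used. So your computation holds verbatim for every $\theta\in[0,\pi]$, and the continuity argument you mention is unnecessary.
\item The same template, with $\mathbb{E}[R^{2n}]$ and $\cos^n\varphi\cos^n(\varphi-\theta)$, gives the higher-order arc-cosine kernels.
\end{itemize}

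The paper's calculation is purely mechanical from the stated joint law and needs no coupling construction. However, it is restricted to $|\rho|<1$, and it hides the geometric origin of the formula behind the final arctangent identity.
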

\begin{proof}
The proof is by direct integration and can be found in \Cref{sec:proof-of-mlp-stability}.
\end{proof}

While exact, this expression is unwieldy for analyzing layer composition. For practical purposes, it is well-approximated by its second-order Taylor expansion around $\rho=0$:
\begin{equation}
    \label{eq:taylor-approximation}
    \frac{1}{2\pi} \left( \sqrt{1 - \rho^2} + \rho (\pi - \arccos \rho) \right) \approx \frac{1}{2\pi} + \frac{1}{4} \rho + \frac{1}{4\pi} \rho^2 + O(\rho^3).
\end{equation}

\subsection{Noise Stability of a Single Attention Layer}
We next analyze the noise stability of a single attention layer, defined as $f(X) = \sigma(XW_QW_K^TX^T) \cdot XW_V$. The analysis depends critically on the structure of the product $W := W_QW_K^T$, so we consider three representative cases.

\paragraph{The Identity Case ($W=I_d$)}
When $W$ is the identity matrix, the attention mechanism simplifies. In the high-dimensional limit, the attention matrix $\sigma(XX^T)$ converges to the identity matrix $I_n$, causing the layer to act as a linear transformation. This results in a linear relationship between input and output stability (see \Cref{fig:stability-single-layer-attn}).

\begin{theorem}
\label{thm:stability-of-attention-linear-case}
Let $X \sim \mathcal{N}(0,I_{n\times d})$ and $Y = \rho X + Z\sqrt{1-\rho^2}$ for $Z \sim \mathcal{N}(0,I_{n\times d})$ independent of $X$. Let $f(X) = \sigma(XX^T) XW_V$. Then for any $i \in [n], j \in [d]$:
$$
\lim_{d\to\infty}\mathbb{E}[f(X)_{ij}f(Y)_{ij}] = \rho \cdot ||(W_V)_{:,j}||_2^2 + o(1)
$$
\end{theorem}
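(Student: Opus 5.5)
The plan is to show that the attention matrix $A(X) := \sigma(XX^T)$ becomes exactly the identity with overwhelming probability as $d\to\infty$, and then compute the correlation of the resulting linear map directly. Write $x_1,\dots,x_n\in\R^d$ for the rows of $X$. Then
\[
(XX^T)_{ii} = \|x_i\|_2^2 \approx d \qquad \text{and} \qquad (XX^T)_{ik} = \langle x_i,x_k\rangle = O_P(\sqrt d) \quad (i\neq k).
\]
Hence for each row $i$,
\[
1 - A(X)_{ii} = \frac{\sum_{k\neq i} e^{\langle x_i,x_k\rangle-\|x_i\|^2}}{1+\sum_{k\neq i} e^{\langle x_i,x_k\rangle-\|x_i\|^2}} \le \sum_{k\neq i} e^{\langle x_i,x_k\rangle-\|x_i\|^2}.
\]
On the event $E_d$ that $\|x_i\|^2\ge d/2$ and $|\langle x_i,x_k\rangle|\le d^{3/4}$ for all $i,k$, this sum is at most $(n-1)e^{-d/4}$. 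By chi-square and Gaussian-product concentration (Bernstein), $\Pr[E_d^c]\le C n^2 e^{-c\sqrt d}$. The same estimates hold for $Y$, since marginally $Y\sim\mathcal N(0,I_{n\times d})$. So on $E_d$ both $A(X)$ and $A(Y)$ are $I_n$ up to entrywise error $ne^{-d/4}$.

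With $n$ fixed, we next decompose $f(X)_{ij} = (XW_V)_{ij} + \big((A(X)-I)XW_V\big)_{ij}$, and do the same for $Y$. For the main term,
\[
\E\big[(XW_V)_{ij}(YW_V)_{ij}\big] = \sum_{a,b}(W_V)_{aj}(W_V)_{bj}\,\E[X_{ia}Y_{ib}] = \rho\sum_a (W_V)_{aj}^2 = \rho\,\|(W_V)_{:,j}\|_2^2,
\]
because $\E[X_{ia}Y_{ib}]=\rho\,\delta_{ab}$.

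The cross and remainder terms are controlled by Cauchy--Schwarz. Entries of $A$ lie in $[0,1]$, so $|(A-I)XW_V|_{ij}\le 2\sum_k |(XW_V)_{kj}|$, which has bounded second moment, of order $n^2\|(W_V)_{:,j}\|^2$. Split each error expectation over $E_d$ and $E_d^c$. On $E_d$ the error is at most $n e^{-d/4}$ times a square-integrable quantity. On $E_d^c$ we use Cauchy--Schwarz with $\Pr[E_d^c]^{1/2}$. Both contributions vanish as $d\to\infty$, giving the limit $\rho\|(W_V)_{:,j}\|_2^2 + o(1)$.

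The main obstacle is the interpretation of the limit when $W_V$ depends on $d$. The quantity $\|(W_V)_{:,j}\|^2$ may grow with $d$, so "$o(1)$" should be read relative to $\|(W_V)_{:,j}\|_2^2$. Since the error bounds above are all proportional to this norm, dividing through handles it uniformly. Apart from this, the only delicate step is the tail estimate on $E_d^c$, and the Bernstein bounds give exponentially small probability, which easily dominates polynomial moments.
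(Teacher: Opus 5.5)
Your proposal is correct and follows essentially the same route as the paper: show that $\sigma(XX^T)$ collapses to $I_n$ with overwhelming probability via chi-square and inner-product concentration, reduce to the linear computation $\E[(XW_V)_{ij}(YW_V)_{ij}]=\rho\|(W_V)_{:,j}\|_2^2$, and control the cross terms with Cauchy--Schwarz. Your version is slightly cleaner than the paper's in two ways: your error term $(A(X)-I)XW_V$ includes the diagonal deficit $(A_{ii}-1)x_iW_V$ that the paper's $E_X$ omits, and because you work with the single column $(W_V)_{:,j}$, every error bound scales with $\|(W_V)_{:,j}\|_2^2$ rather than $\|W_V\|_F^2$ (your event should require $|\langle x_i,x_k\rangle|\le d^{3/4}$ only for $i\neq k$).
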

\begin{proof}[Proof Sketch]
As $d \to \infty$, we show that $\sigma(XX^T) \to I_n$ in probability. The stability calculation then reduces to that of a linear layer, with a cost of $o(1)$. The full proof is in \Cref{sec:proof-of-linear-case}.
\end{proof}

\paragraph{The Low-Rank Case ($W=UU^T$)}
If $W$ is a random low-rank matrix, where $U \in \mathbb{R}^{d \times k}$ with $k \ll d$, the analysis reduces to the identity case. The matrix $U$ acts as a Johnson-Lindenstrauss transform, projecting the input row vectors into a $k$-dimensional space while approximately preserving their inner products \citep{matouvsek2008variants}. Consequently, the projected attention matrix again converges to the identity, and the stability remains linear in $\rho$.


\paragraph{The Unstructured Case ($W \sim \mathcal{N}(0,I_{d\times d})$)}
When $W$ is a random Gaussian matrix, modeling a randomly initialized layer, the behavior changes. For large $d$, we the attention matrix $A_X = \sigma(XWX^T)$ concentrates towards a random permutation matrix, meaning each output token attends to a single, randomly chosen input token.

\begin{figure}[h!]
    \centering
    \begin{subfigure}[T]{0.48\textwidth}
        \includegraphics[width=\textwidth]{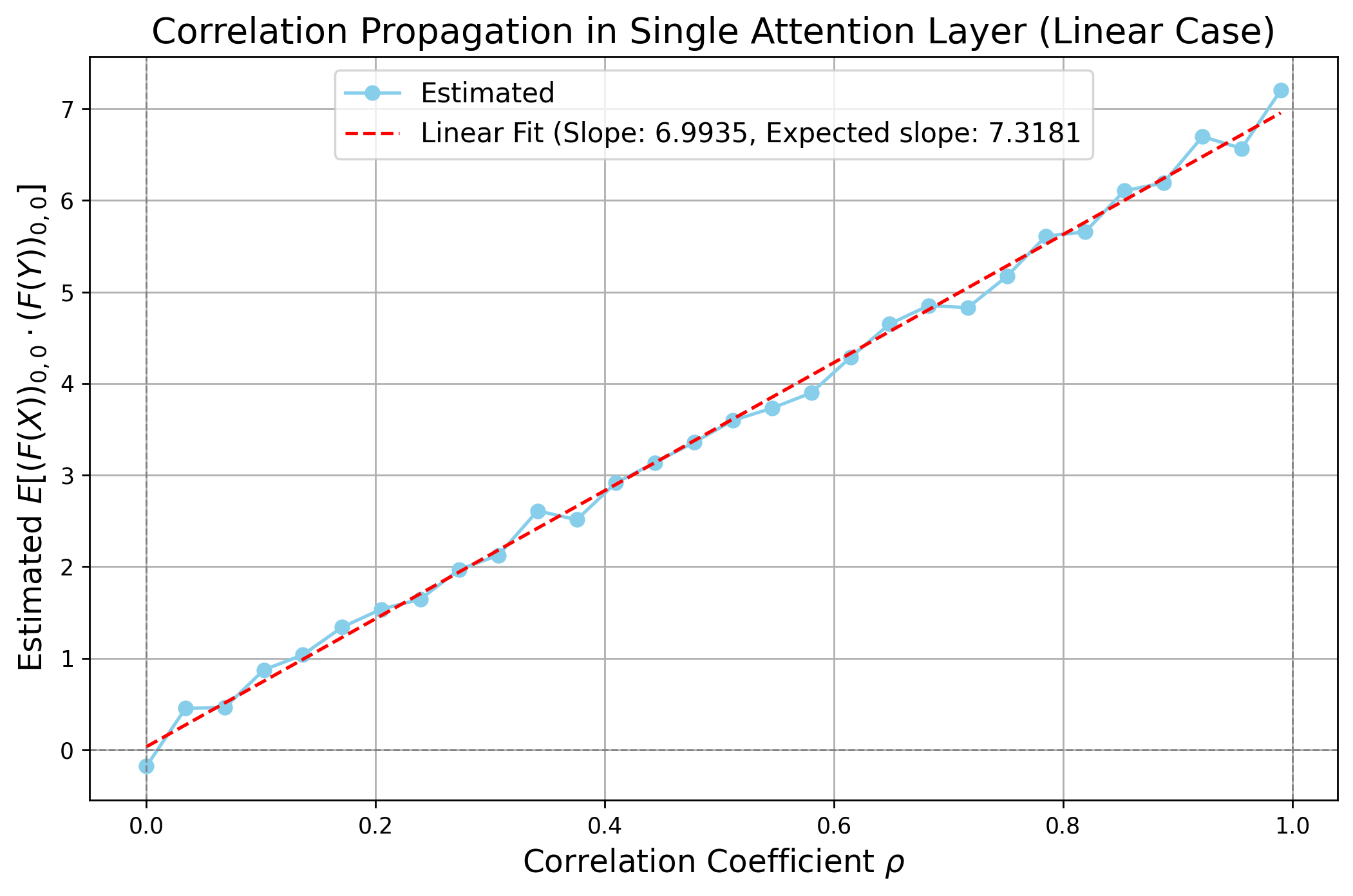}
    \end{subfigure}
    \hfill
    \begin{subfigure}[T]{0.48\textwidth}
        \includegraphics[width=\textwidth]{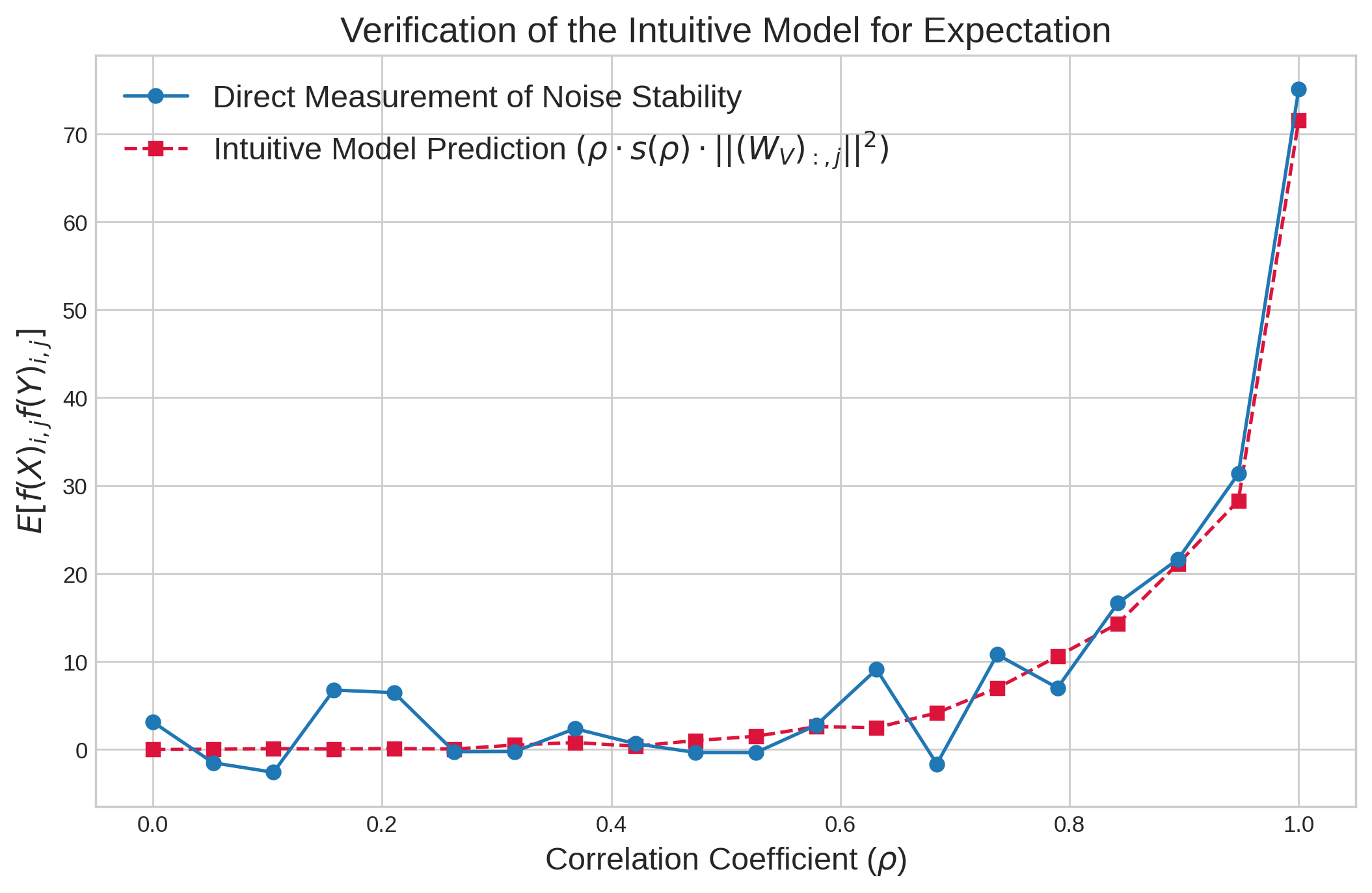}
        \label{fig:unstructured-case}
    \end{subfigure}
    \vspace{-2mm}
    \caption{Stability of Single Layer Attention (Identity and Unstructured)}
    \label{fig:stability-single-layer-attn}
\end{figure}

The stability now depends on the consistency of this permutation. For a given output row $i$, suppose $A_X$ selects input row $k$ and $A_Y$ selects input row $k'$. The stability is non-zero only if the attention pattern is preserved ($k=k'$). Let $s(\rho) := \mathbb{P}(k=k')$ be the probability that the attention pattern is stable (see \Cref{fig:matrix-comp} for an illustration). The total stability is the product of the linear stability and this probability factor:
$$
\mathbb{E}[f(X)_{ij}f(Y)_{ij}] = \begin{cases}
    \rho \cdot ||(W_V)_{:,j}||_2^2 & \text{if } k=k' \\
    0 & \text{if } k \neq k'
\end{cases}
$$
Averaging over the randomness of the patterns we obtain the following result, which we verify empirically in \Cref{fig:stability-single-layer-attn} (see \Cref{sec:proof-unstructured} for the proof).
\begin{theorem} 
Let $X \sim \mathcal{N}(0,I_{n\times d})$ and $Y = \rho X + Z\sqrt{1-\rho^2}$ for $Z \sim \mathcal{N}(0,I_{n\times d})$ independent of $X$. Let $f(X) = \sigma(XX^T) XW_V$. Then for any $i \in [n], j \in [d]$, we have:
\begin{align*}
\lim_{d\to\infty}\mathbb{E}[f(X)_{ij}f(Y)_{ij}] &\overset{p}{=} \rho \cdot s(\rho) \cdot ||(W_V)_{:,j}||_2^2+o(1),\text{ with }
s(\rho) = n\int_{\mathbb{R}^2}\Phi_{\rho^2}(x,y)^{n-1}f_{\rho^2}(x,y)
\end{align*}
where $\Phi_c,f_c$ are the joint CDF and PDF of a bivariate normal distribution with correlation $c$.
\end{theorem}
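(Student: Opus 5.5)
The plan is to split the statement into a structural reduction and a probability computation:
\[
\mathbb{E}[f(X)_{ij}f(Y)_{ij}] = \rho\cdot\mathbb{P}(k=k')\cdot\|(W_V)_{:,j}\|_2^2+o(1),
\]
where $k=\arg\max_m (XX^T)_{im}$ and $k'=\arg\max_m (YY^T)_{im}$ are the indices selected in row $i$. The remaining work is then to identify $\mathbb{P}(k=k')$ with $s(\rho)$.

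\textbf{Step 1: hardmax limit.} Write $A_X=\sigma(XX^T)$. For $m\neq m'$ the logit gaps $(XX^T)_{im}-(XX^T)_{im'}$ have magnitude at least of order $\sqrt d$ with high probability. The softmax row therefore converges in probability to the one-hot vector $e_k$, and the same holds for $A_Y$ with index $k'$. Hence
\[
f(X)_{ij}=X_k\cdot(W_V)_{:,j}+o_p(1),\qquad f(Y)_{ij}=Y_{k'}\cdot(W_V)_{:,j}+o_p(1).
\]
Uniform integrability follows because rows of $A_X$ are stochastic, so $|f(X)_{ij}|\le \max_m|X_m\cdot (W_V)_{:,j}|$, which has bounded moments.

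\textbf{Step 2: conditioning on the selections.} I condition on the events $\{k=a,\,k'=b\}$. If $a=b$, then $\mathbb{E}[(X_a\cdot w)(Y_a\cdot w)]=\rho\|w\|^2$, writing $w=(W_V)_{:,j}$. If $a\neq b$, rows $X_a$ and $Y_b$ are independent and centered, so the product has mean zero. The selection events depend on $X$ only through inner products, which are rotation invariant. Because $w$ is fixed while $d\to\infty$, the dependence between the selection and the single direction $w$ is $O(1/\sqrt d)$, and contributes only to the $o(1)$ term. Summing over $(a,b)$ gives the displayed decomposition.

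\textbf{Step 3: computing $\mathbb{P}(k=k')$.} For $m\neq i$, the normalized off-diagonal logits $d^{-1/2}X_i\!\cdot\! X_m$ and $d^{-1/2}Y_i\!\cdot\! Y_m$ are asymptotically bivariate normal by the CLT. Their correlation is $\mathbb{E}[(X_{i\ell}Y_{i\ell})(X_{m\ell}Y_{m\ell})]=\rho^2$. Since the $X_m$ are independent, these pairs are asymptotically independent across $m$ once we condition on row $i$. For $n$ i.i.d.\ pairs $(U_m,V_m)$ with correlation $c$, exchangeability gives
\[
\mathbb{P}(\arg\max U=\arg\max V)=n\,\mathbb{P}(U_1,V_1\text{ both maximal})=n\int\Phi_c(x,y)^{n-1}f_c(x,y),
\]
which with $c=\rho^2$ is exactly $s(\rho)$. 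Convergence in distribution of the argmax indices follows from the continuous mapping theorem, since ties occur with probability zero in the limit.

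\textbf{Main obstacle: the diagonal entry.} Under $\sigma(XX^T)$ as written, $(XX^T)_{ii}=\|X_i\|^2\approx d$ dominates the off-diagonal logits, which are of order $\sqrt d$. Taken literally, this forces $k=k'=i$ and $\mathbb{P}(k=k')\to1$, recovering \Cref{thm:stability-of-attention-linear-case} rather than $s(\rho)<1$. To obtain the stated formula, all $n$ logits in row $i$ must compete on equal footing, i.e.\ appear as $n$ exchangeable correlated Gaussian pairs.

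The first thing I would try is to show that, after subtracting the common row shift and normalizing by $\sqrt d$, the competition is among comparable logits. This fails for the diagonal: centered, $\|X_i\|^2-d$ has variance $2d$ rather than $d$, and its $X$/$Y$ correlation is $\rho^2$, but it is shifted by $d$. I therefore expect the honest route to be stating precisely the regime in which the $n$ logits of row $i$ are exchangeable. That is, the diagonal must be neutralized, as in the unstructured model the paper has in mind. Under the literal $\sigma(XX^T)$, Steps 1–3 instead give $s\equiv1$. Resolving this discrepancy, rather than any of the computations above, is where the argument is delicate.
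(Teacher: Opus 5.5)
\textbf{Verdict: your proposal has a genuine gap.} It correctly shows that the statement as typed is false, but it never writes down the model in which the stated formula actually holds.

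\textbf{Your diagnosis is right.} With logits $XX^\top$, the diagonal $\|x_i\|_2^2\approx d$ beats every off-diagonal logit, which is $O(\sqrt d)$. So $k=k'=i$ with high probability, and the limit is $\rho\|(W_V)_{:,j}\|_2^2$ by \Cref{thm:stability-of-attention-linear-case}. Since $s(\rho)<1$ whenever $n\ge 2$ and $\rho<1$, the display cannot hold as typed. The formula belongs to the unstructured case of the main text, $f(X)=\sigma(XWX^\top)XW_V$, where $W$ has i.i.d.\ $\mathcal N(0,1)$ entries independent of $(X,Z)$. The paper's appendix proof works with $P_{im}=x_i^\top W x_m$.

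\textbf{Where the proposal stops short.} You stop at the discrepancy, and the fix you gesture at (neutralizing the diagonal) does not produce $s(\rho)$. Centering fails for the reason you give. Masking the diagonal leaves only $n-1$ competitors and gives $(n-1)\int\Phi_{\rho^2}^{\,n-2}f_{\rho^2}$ instead. The missing idea is that a random Gaussian $W$ in the bilinear form makes all $n$ logits of row $i$, diagonal included, mean-zero and exchangeable at scale $d$.

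\textbf{How your steps go through in the correct model.} Normalize by $d^{-1}$ rather than $d^{-1/2}$.
\begin{itemize}
\item Conditional on $(X,Y)$, the vector $(x_i^\top W x_m,\,y_i^\top W y_m)_{m\in[n]}$ is exactly Gaussian, since it is linear in $W$. No CLT is needed.
\item Its conditional means are $0$. Its variances are $\|x_i\|_2^2\|x_m\|_2^2\approx d^2$; for $m=i$ this reads $\|x_i\|_2^4$.
\item The within-pair covariance is $\langle x_i,y_i\rangle\langle x_m,y_m\rangle\approx\rho^2 d^2$.
\item Every cross-covariance, $\|x_i\|_2^2\langle x_m,x_{m'}\rangle$ or $\langle x_i,y_i\rangle\langle x_m,y_{m'}\rangle$ with $m\neq m'$, is $O(d^{3/2})$.
\end{itemize}
After dividing by $d$, the conditional covariance therefore converges in probability to that of $n$ i.i.d.\ bivariate normal pairs with correlation $\rho^2$. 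Your exchangeability formula then gives $s(\rho)$. Gaps of order $d$ give the hardmax limit of Step~1, which is the paper's lemma.

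\textbf{Step 2 also becomes clean.} The quantity $\Pr(k=a,k'=b\mid X,Y)$ is a continuous function of that conditional covariance at its nondegenerate limit. Hence it converges in probability to a constant $p_{ab}$, with $\sum_a p_{aa}=s(\rho)$. Because $(x_a\cdot w)(y_b\cdot w)$ is $(X,Y)$-measurable, Cauchy--Schwarz decouples the selection probability from it directly, with no rotation-invariance heuristic. Using $\E[(x_a\cdot w)(y_b\cdot w)]=\rho\|w\|_2^2\,\mathbf{1}\{a=b\}$, you obtain $\rho\,s(\rho)\,\|(W_V)_{:,j}\|_2^2$. This is the argument the paper sketches.
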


\begin{figure}[h]
    \centering
    \includegraphics[scale=0.55]{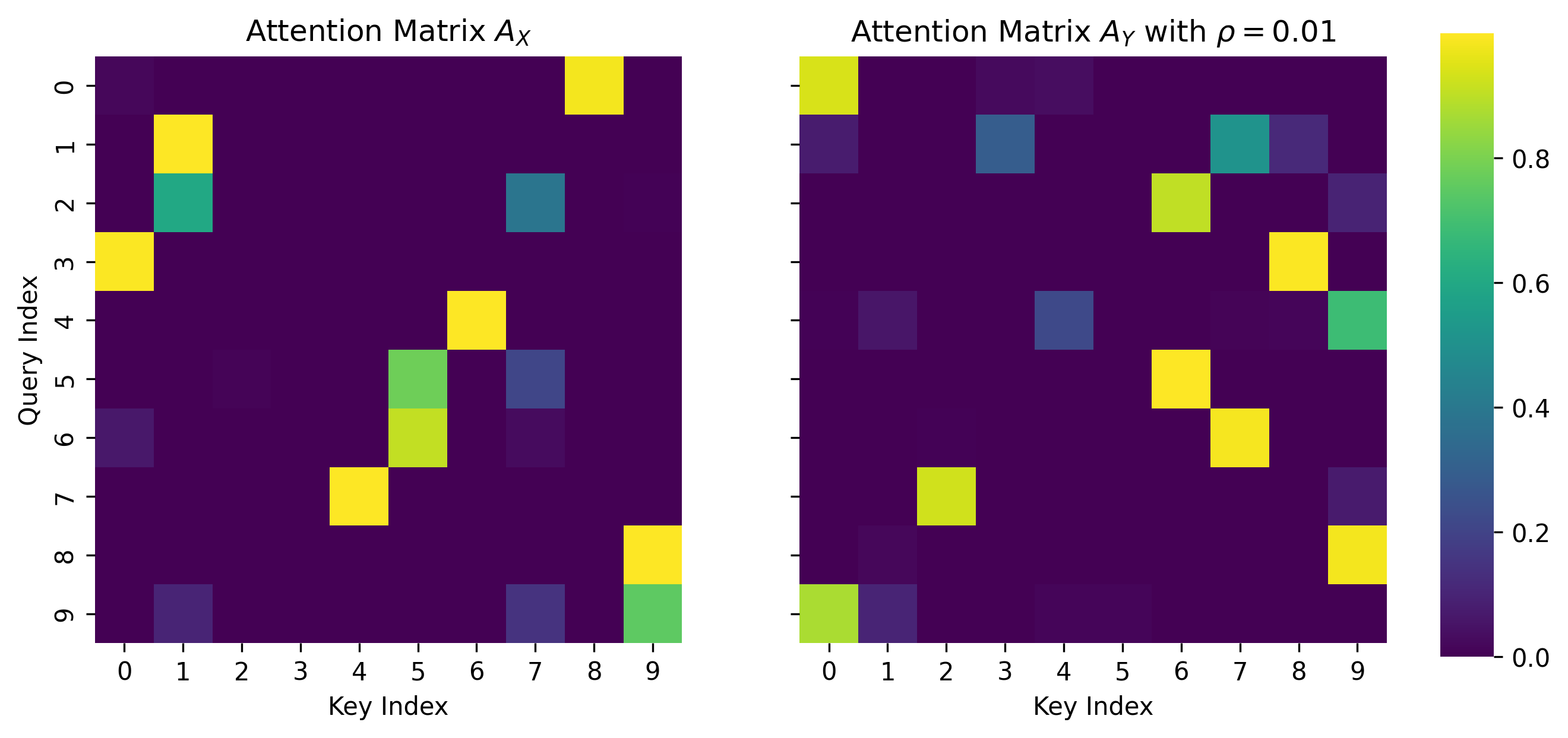}
    \caption{Comparing $A_X$ and $A_Y$ for $d=128$ and $\rho = 0.01$.}
    \label{fig:matrix-comp}
    \vspace{-5mm}
\end{figure}

\subsection{Stability Propagation in Deep Transformers}
In the single-layer setting, we have shown that ReLU MLPs dampen stability (\Cref{thm:stability-of-mlp}), while attention layers can preserve it (\Cref{thm:stability-of-attention-linear-case}). This raises the question whether such an analysis can be performed for the multi-layer setting as well.

\paragraph{FFN Stability Propagation as a Recurrence and Weak Dampening}
Consider a ReLU Feed-Forward Network (FFN) and suppose we ignore inter-layer distributional shifts. Such a thought experiment is not without precedent in the moment propagation literature for neural networks \citep{wright2024analytic}. In this model, the correlation $\rho_L$ after layer $L$ follows the recurrence:
\begin{align}
\label{eq:non-linear-recurrence}
\rho_L = \frac{1}{2\pi} \left( \sqrt{1 - \rho_{L-1}^2} + \rho_{L-1}(\pi - \arccos (\rho_{L-1})) \right)
\end{align}
Solving this non-linear recurrence analytically is difficult, so we shall instead use the linear approximation from \Cref{eq:taylor-approximation}, to get the following proxy recurrence, which can be solved more easily:
$$
    \rho_L = \frac{1}{2\pi} + \frac{1}{4}\rho_{L-1} \implies 
\rho_L = \frac{2}{3\pi} + \left(\frac{1}{2}-\frac{2}{3\pi}\right)\cdot \left(\frac{1}{4}\right)^{L-1}
$$
This suggests that noise stability for multi-layer ReLU FFNs exhibits \textit{weak dampening}, converging to the fixed point $\frac{2}{3\pi}$. This is confirmed by a numerical evaluation of \Cref{eq:non-linear-recurrence} in \Cref{fig:recurrence-sols}. Indeed, \Cref{fig:mlp-stability-decay} shows that a multi-layer MLP does exhibit weak dampening.

\begin{figure}[h!]
    \centering
    \begin{subfigure}[b]{0.48\textwidth}
        \centering
        \includegraphics[width=\textwidth]{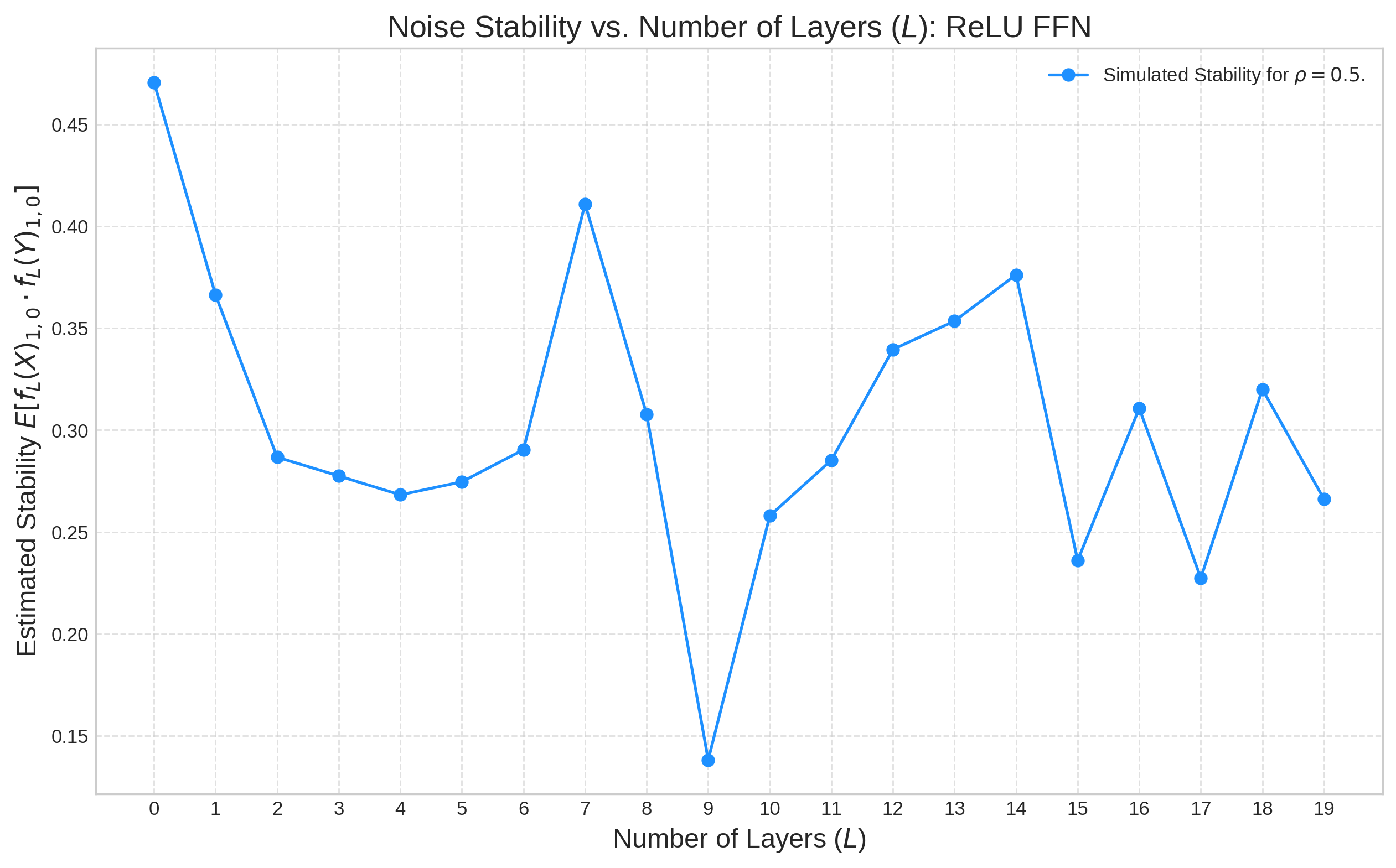}
        \caption{Stability of a deep MLP network.}
        \label{fig:mlp-stability-decay}
    \end{subfigure}
    \hfill
    \begin{subfigure}[b]{0.48\textwidth}
        \centering
        \includegraphics[width=\textwidth]{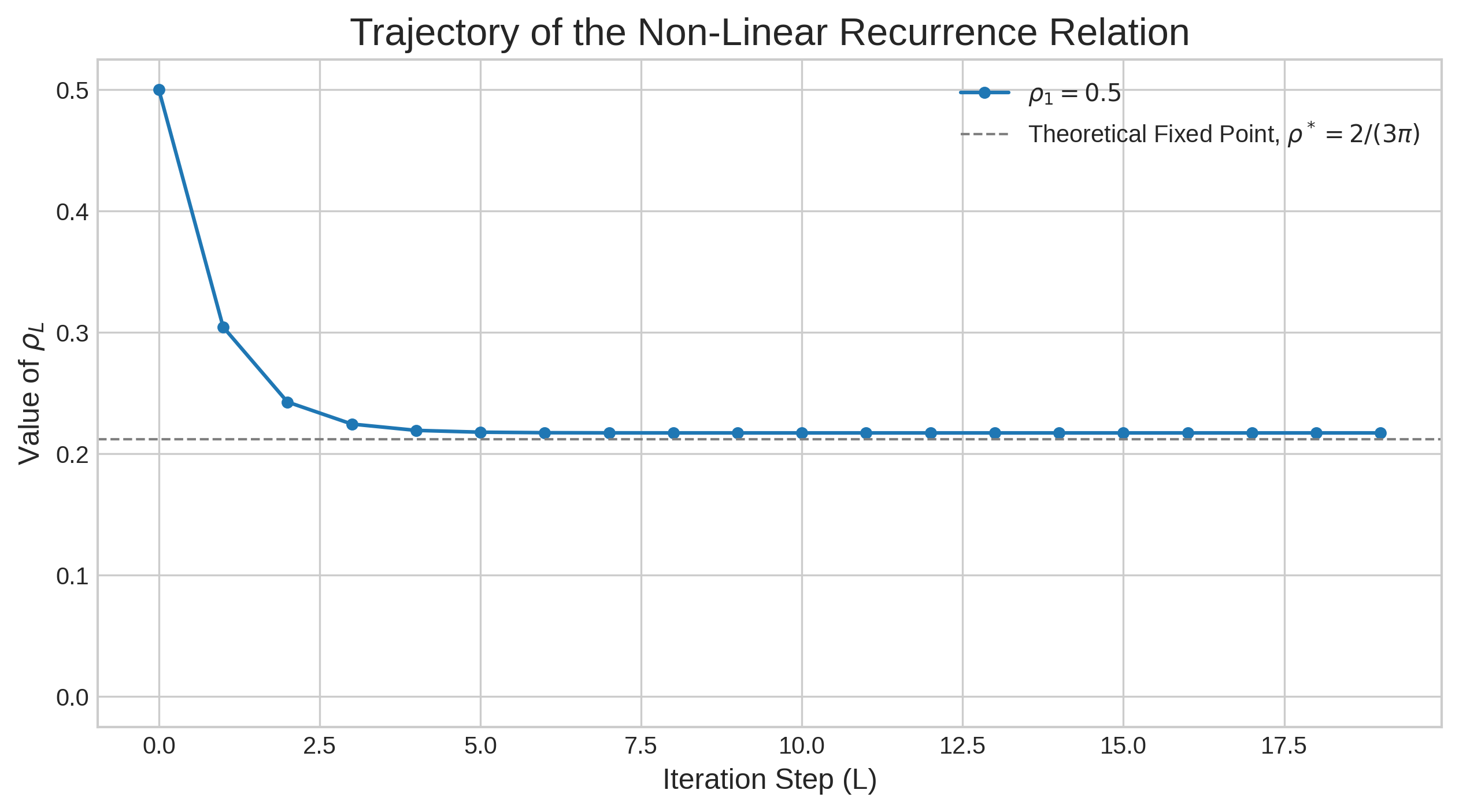}
        \caption{Numerical solution to \Cref{eq:non-linear-recurrence}}
        \label{fig:recurrence-sols}
    \end{subfigure}
\end{figure}

For multi-layer Transformers, a recurrence relation analysis does not yield weak dampening. If we let $W_QW_K^T = I$ and $||(W_V)_{:,j}||_2 = \gamma \leq 1$, a recurrence similar to \Cref{eq:non-linear-recurrence} would yield candidate fixed points of $\frac{2}{\pi(4-\gamma^2)}$. However, we observe that for $\gamma <1$ the noise stability signal actually dampens fully to zero, suggesting that the attention map alters the distribution enough to preclude the weak dampening behavior. For more details, see \Cref{sec:multi-layer-transformer-dampening}. 

\paragraph{Noise Stability Intervals}
A more formal approach is to track rigorous upper and lower bounds on the noise stability as they propagate through the network. We derive such bounds for individual MLP and attention layers (\Cref{section:stability-intervals-mlp} and \Cref{sec:stability-intervals-attention}). Further empirical work is needed to determine the tightness of these bounds in practice.

\section{Noise Stability Regularization and its Benefits}
\label{sec:noise-stability-regularization}
We established in \Cref{sec:noise-stability-def} and \Cref{sec:noise-stability-in-transformers} that high noise stability is a desirable property for creating robust and interpretable models. To this end, we introduce a regularizer designed to orient a model's training towards to or away from stable functions. We designed our regularizer to be (1) \textbf{differentiable} with respect to the model's parameters, and (2) \textbf{data-dependent}, meaning that the regularization should be evaluated on the model's outputs for training data, not just its parameters.

\begin{definition}[Noise Stability Regularization]
Let $M: [U]^N \to \Delta(C)$ be a model, $X \in [U]^N$ be an input sequence, $S \in \{0,1\}$ be the orientation parameter, and $\rho \in [-1, 1]$ be a correlation parameter. The \textbf{$S$-oriented noise stability regularizer} is defined as:
\begin{align}
R_{M, S, \rho}(X) &= (-1)^{S} \cdot \sum_{i=1}^C M(X)_{i} \cdot M(Y)_{i},\,\text{ where:}\\
Y_i &= \begin{cases}
    X_i,&\text{with probability }\,\frac{1+\rho}{2}\\
    Z\sim \text{uniform}([U]),&\text{otherwise.}
\end{cases}
\end{align}
\end{definition}

Setting the orientation parameter \textit{$S=1$ encourages stability}. For a differentiable loss function $\ell$, the regularized loss then becomes $\ell_{\text{reg}}(M,X) := \ell(M,X) + \gamma \cdot R_{M, S, \rho}(X)$ where $\gamma$ is a hyperparameter controlling the regularization strength. We test the effect of positively-oriented noise stability ($S=1$) by training a Transformer from scratch on two tasks known to exhibit ``grokking'': \textit{noisy k-sparse parity} and \textit{modular addition}.

Note that calculating the noise stability regularizer requires just one additional forward pass through the model per training iteration. It is an interesting direction whether the regularization can be applied in a more cost-effective manner.

\paragraph{Experimental Setup}
We use a standard decoder-only Transformer (\Cref{sec:noise-stability-regularization-experiments}). For all experiments, we compare regularized and non-regularized models. All other hyperparameters, including the random seed, are held constant across runs (\Cref{fig:noise-stability-experiment}).

\subparagraph{Noisy $k$-Sparse Parity (NSP)}
We learn the function $f(x) = \bigoplus_{i \in I} x_i$ for an input $x \in \{0,1\}^n$ and a secret sparse index set $I \subset [n]$ of size $k$. Each training label is flipped with a fixed probability $\eta$. This problem is well-studied in learning theory \citep{chen2024algorithms, feldman2009agnostic}, and Transformers can solve it for small values of $k$ \citep{bhattamishra2022simplicity}. We test on inputs of length $n \in [10,100]$ with $k \in \{2,3\}$, using $(\gamma, \rho) = (0.05, 0.05)$. 

\subparagraph{Modular Addition}
This task is to compute $(X+Y) \pmod K$. We study it as a standard benchmark for grokking in Transformers \citep{nanda2023progress}. For our experiments, we use a prime modulus $K=113$, train for $10,000$ iterations, and set $(\gamma, \rho) = (0.75, 0.25)$. 

\begin{figure}[h!]
    \centering
    \begin{subfigure}[b]{0.5\textwidth}
        \centering
        \includegraphics[scale=0.39]{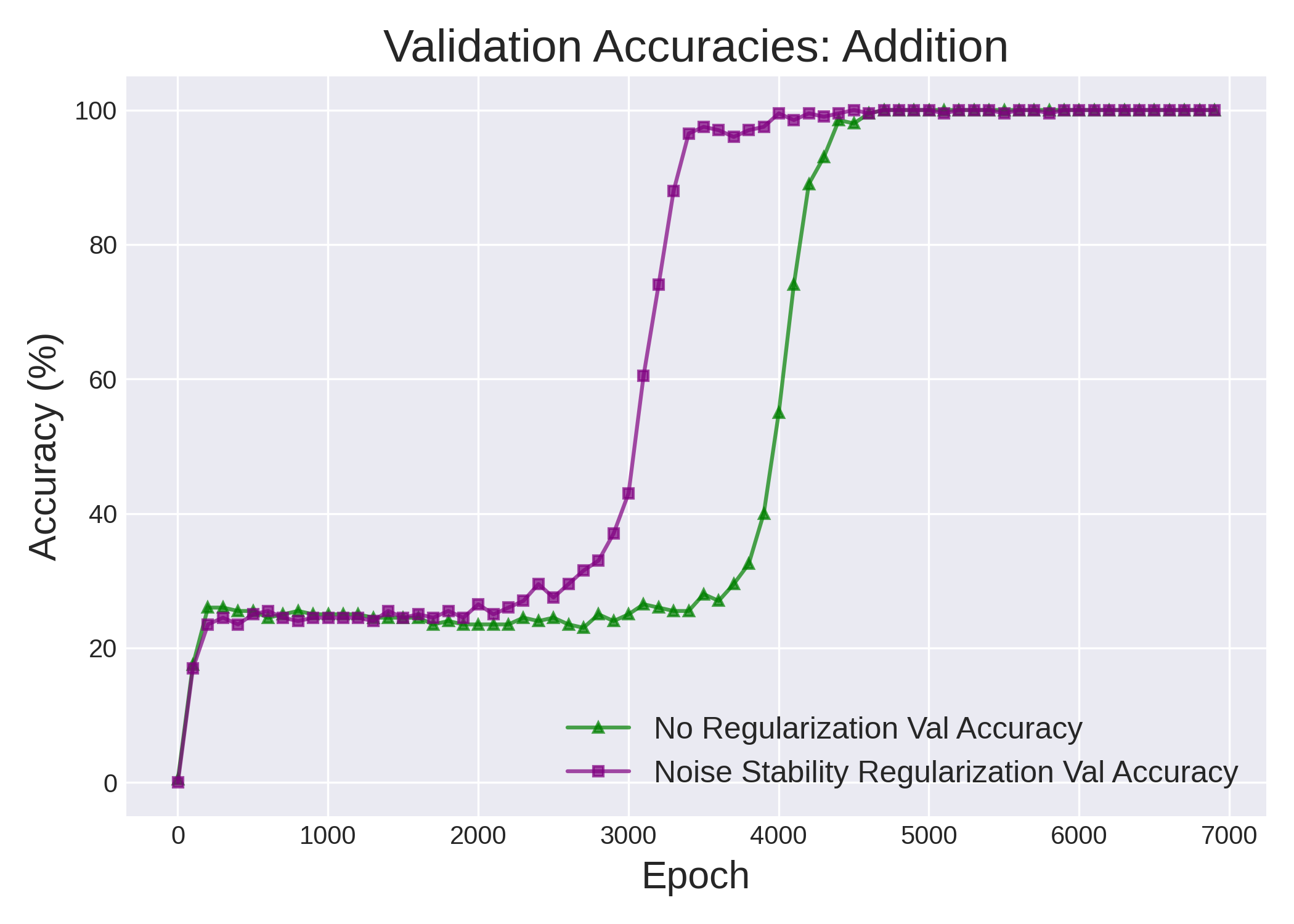}
    \end{subfigure}
    \hspace{-2mm}
    \begin{subfigure}[b]{0.5\textwidth}
        \centering
        \includegraphics[scale=0.39]{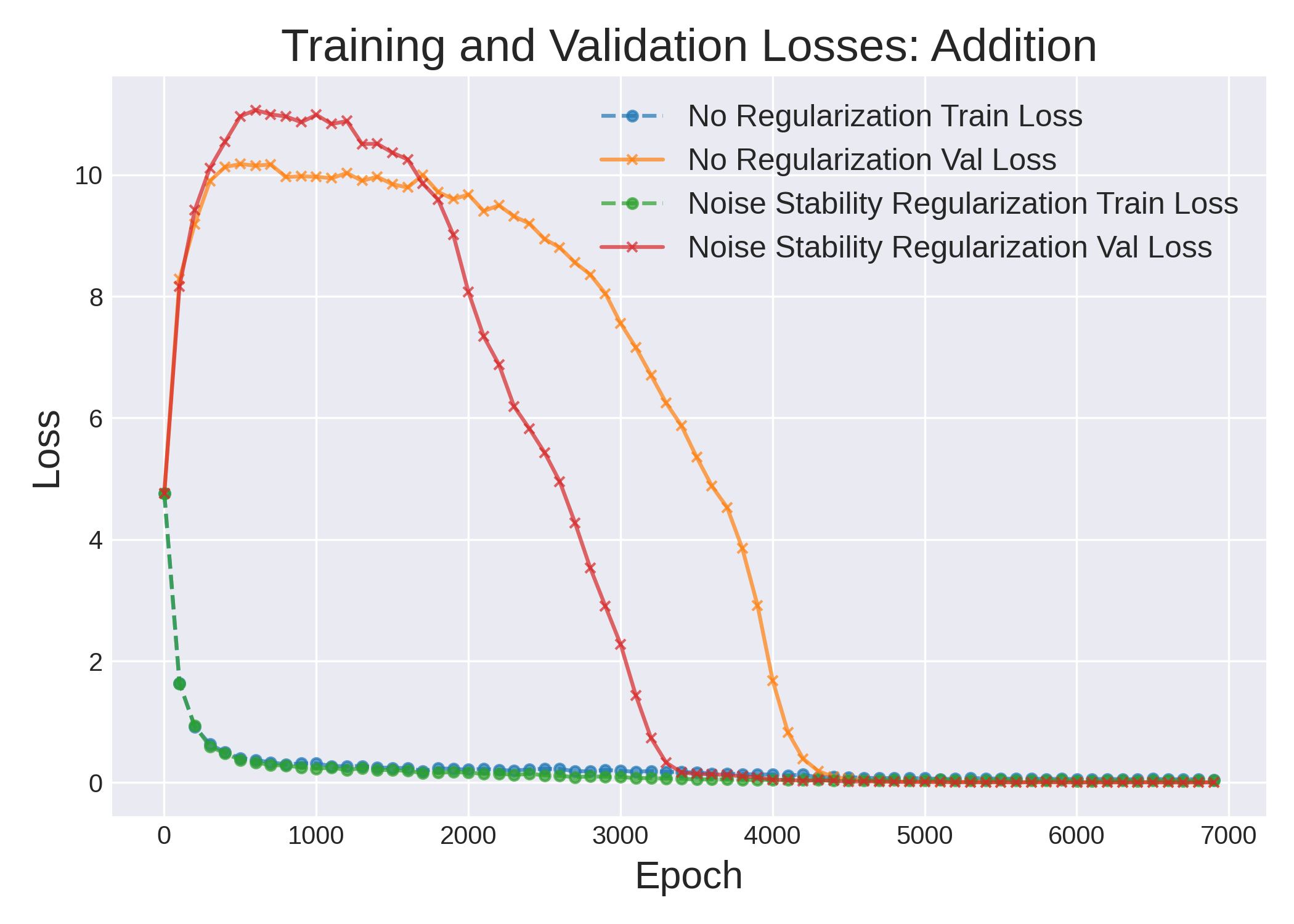}
    \end{subfigure}
    \vspace{-1mm}
    \begin{subfigure}[b]{0.5\textwidth}
        \centering
        \includegraphics[scale=0.39]{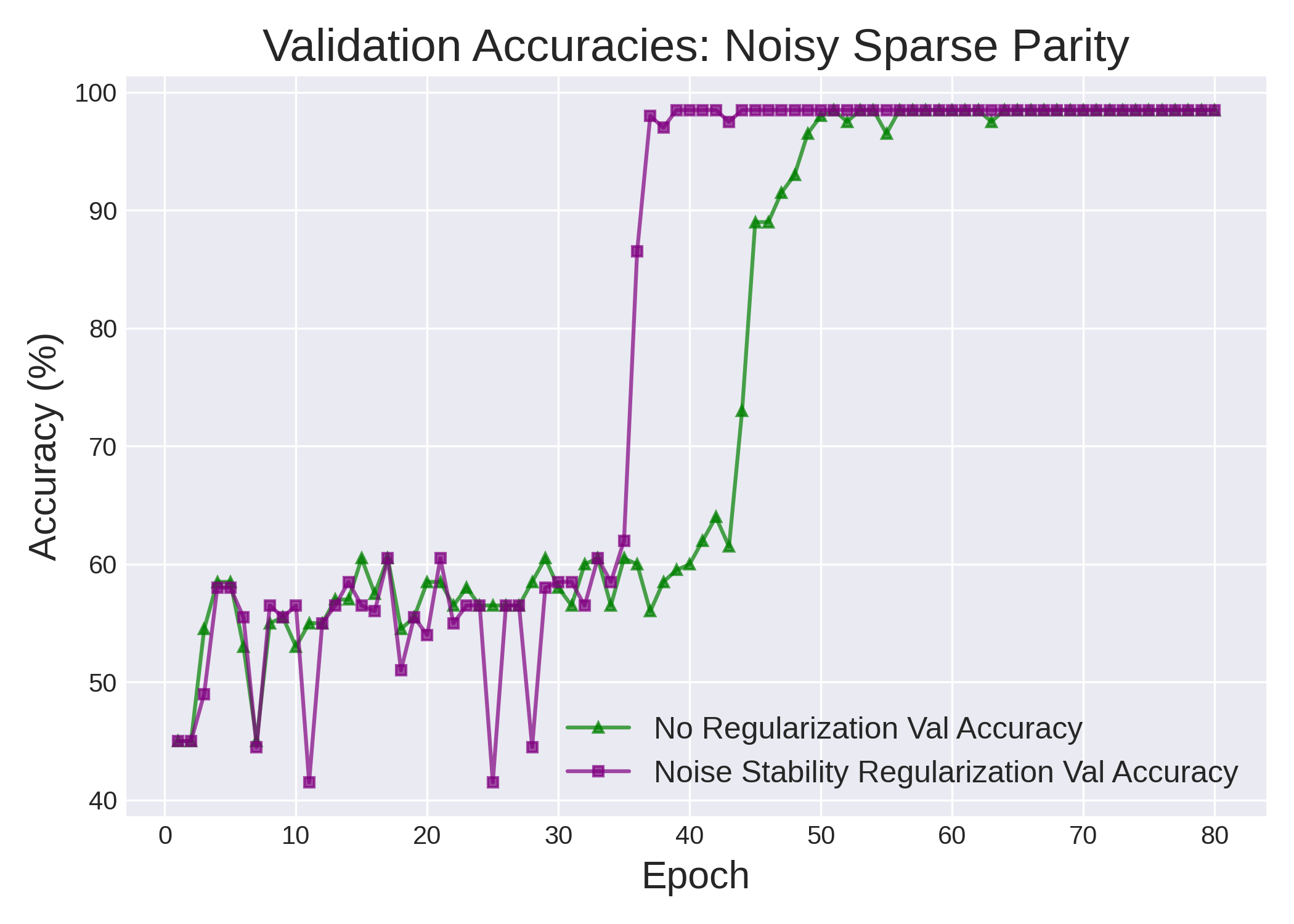}
    \end{subfigure}
    \hspace{-2mm}
    \begin{subfigure}[b]{0.5\textwidth}
        \centering
        \includegraphics[scale=0.39]{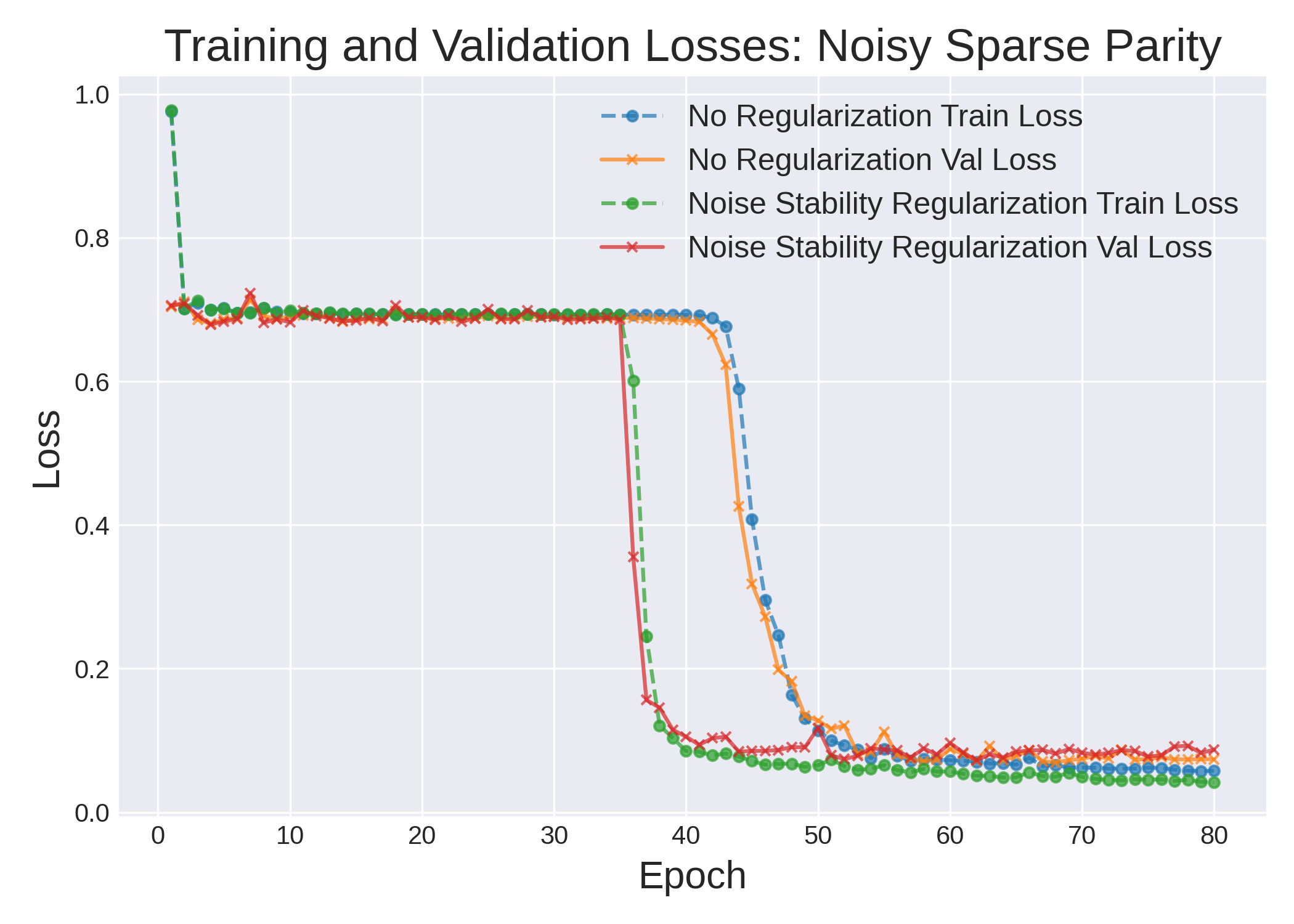}
    \end{subfigure}
    \caption{Noise Stability Regularization accelerates training.}
    \label{fig:noise-stability-experiment}
\end{figure}

\paragraph{Noise Stability Regularization Catalyzes Grokking.}
Training Transformers on these tasks exhibits an ``emergence'' phenomenon, where validation loss drops suddenly after a long period of stagnation. We find that noise stability regularization acts as a catalyst for this emergence. For modular addition, regularization reduces the iterations required from $\approx 4500$ to $\approx 3300$, a $36\%$ acceleration. We observe a similar $\approx 35\%$ speed-up for the noisy sparse parity task (\Cref{fig:noise-stability-experiment}).

\paragraph{Noise Stability Evolution During Training} By observing the noisy sparse parity task, we find that a Transformer's noise stability naturally decreases during training to match the target function, serving as a leading indicator of generalization (\Cref{sec:noise-stability-evolution-training}).

\subsection{Non-Synthetic Experiments on Language Generation}
We also tested noise stability regularization for the task of next-token-prediction on \textit{WikiText-2-v1} (\Cref{sec:wikitext-experiments}). We observed that regularized training reaches high validation accuracy in $\approx 75\%$ fewer iterations (\Cref{fig:ntk}). The noise stability of the regularized model notably stays high, while non-regularized models become increasingly unstable.

\begin{figure}[h!]
    \centering
    \begin{subfigure}[b]{0.5\textwidth}
        \centering
        \includegraphics[scale=0.275]{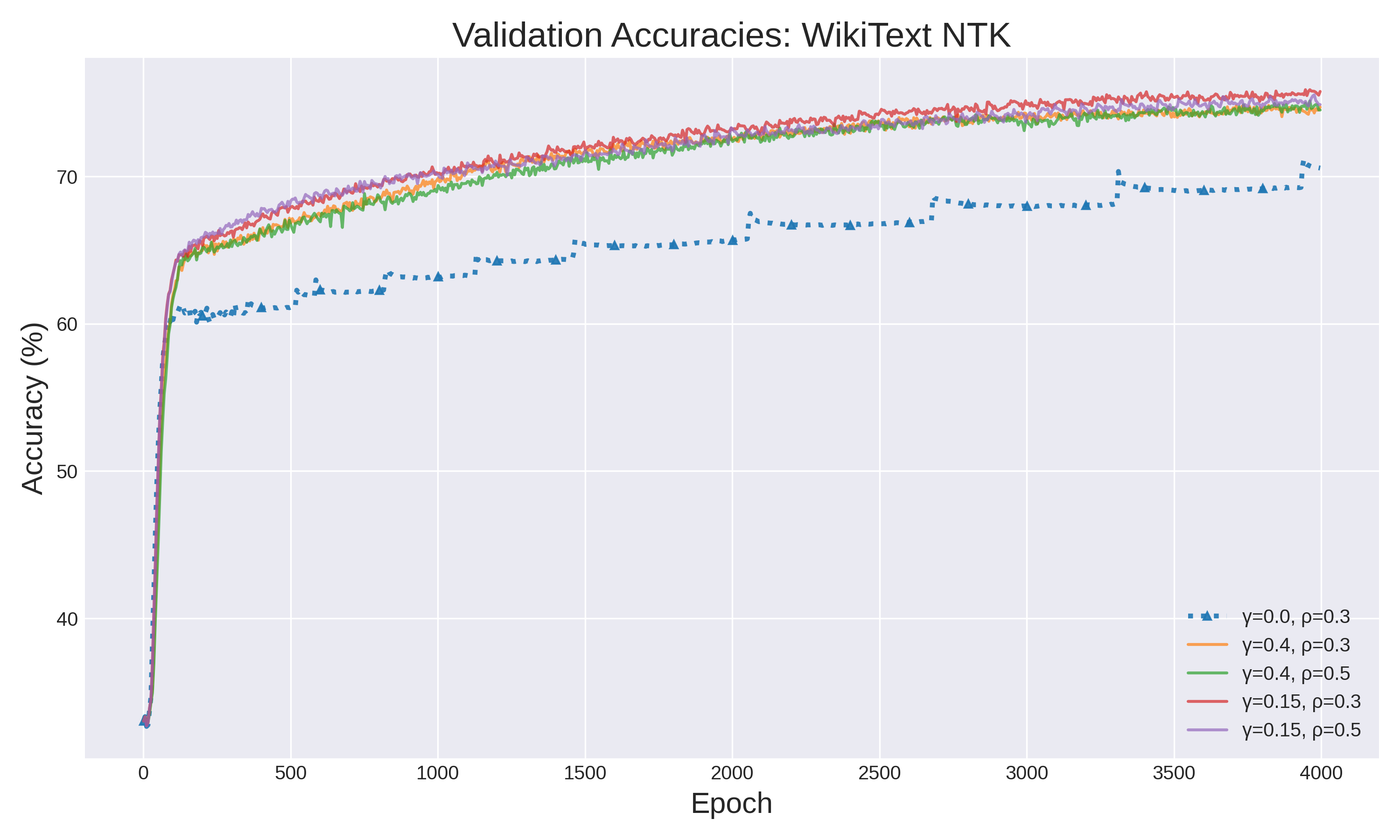}
    \end{subfigure}
    \hspace{-2mm}
    \begin{subfigure}[b]{0.5\textwidth}
        \centering
        \includegraphics[scale=0.275]{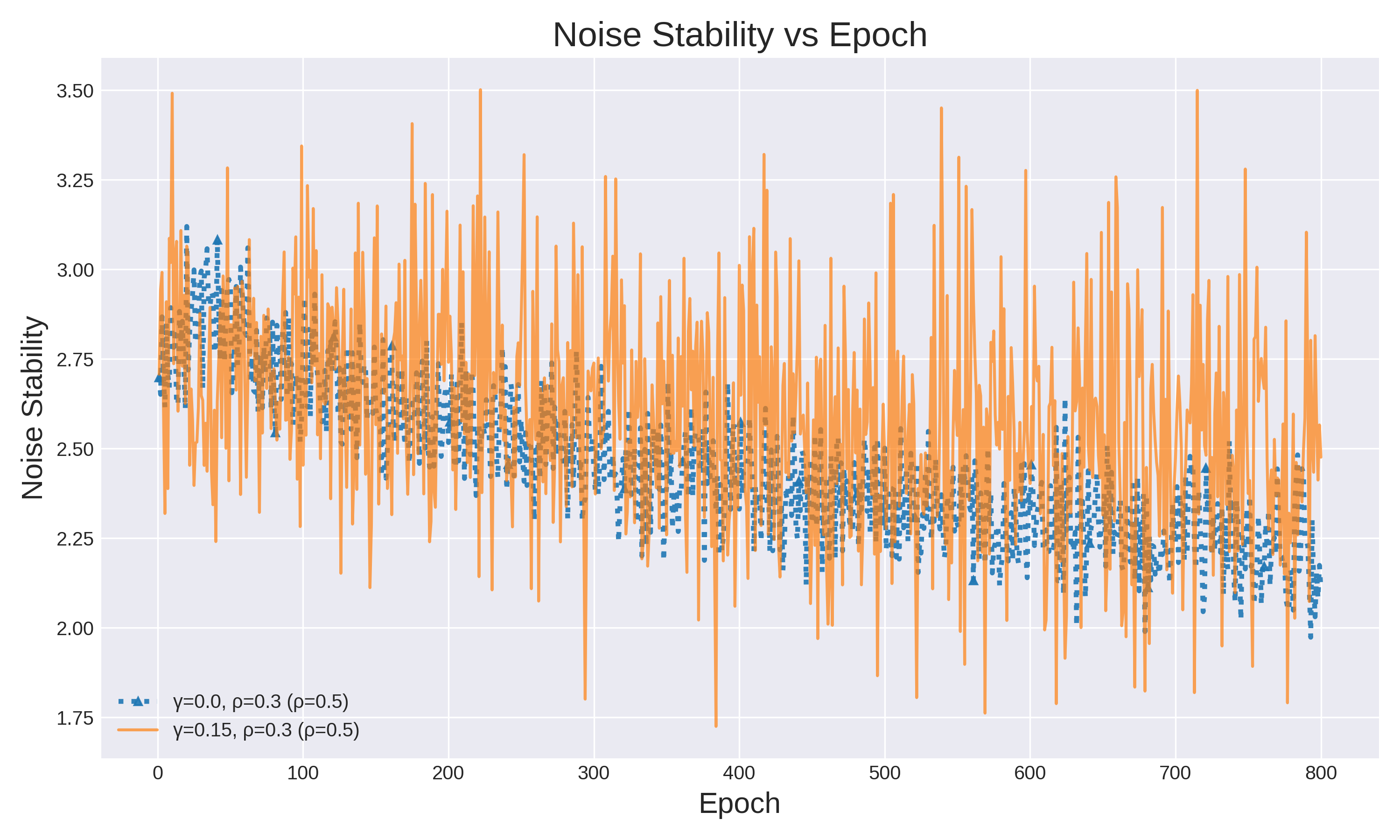}
    \end{subfigure}
    \caption{Noise Stability Regularization for Next-Token-Prediction (NTK) on WikiText-2}
    \label{fig:ntk}
\end{figure}

\vspace{-2mm}
\section{Conclusion}
In this work, we introduced \textit{noise stability} as a measure of simplicity bias in Transformers, arguing theoretically and empirically that it better explains the spectral concentration observed in LLMs than average sensitivity. We also proposed a noise stability regularizer and found that it unexpectedly catalyzes grokking, offering a potential avenue to accelerate model training. Our findings open several avenues for future research, including demystifying the mechanism of moment propagation in deep networks and understanding the practical limits and theoretical underpinnings of noise stability regularization in LLM settings. 

\section*{Acknowledgements}
We thank the anonymous ICLR 2026 Area Chair (QcA8) for highlighting the extensive literature regarding signal propagation. TH additionally thanks the faculty and staff at the National Institute of Informatics for providing a stimulating research environment throughout the summer internship program.
YY is supported by JSPS KAKENHI Grant Number 24K02903.

\newpage

\bibliography{bib}
\bibliographystyle{iclr2026_conference}

\clearpage

\appendix
\startcontents[appendix] 
\printcontents[appendix]{l}{1}{\section*{Appendix Contents}} 

\newpage

\section{Essentials of Boolean Function Analysis}
\label{sec:boolean-function-prelims-appx}
We now review some basic facts on Boolean Function Analysis. A Boolean function is defined on the hypercube: $f : \{\pm 1\}^n\to \mathbb{R}$. Therefore, we can think of the space of boolean functions as $\mathbb{R}^{2^n}$. It is a fundamental fact that every boolean function can be represented \textit{uniquely} as a multilinear polynomial. This is a natural outcome of considering the following set of vectors in $\mathbb{R}^{2^n}$:
$$
B=\left\{\chi_U(x) = \prod\limits_{i\in U}x_i:U\subseteq [n]\right\}
$$
For any $U,V$ we have that:
$$
\langle \chi_U,\chi_V\rangle = 2^n\cdot \mathbb{E}_{x\in\{\pm 1\}^n}[\chi_U(x)\chi_V(x)] = 2^n\prod\limits_{i \in U\triangle V}\mathbb{E}[x_i]
$$
which is equal to $1$ if and only if $U = V$. Thus $B$ is an orthonormal basis for the set of Boolean functions and so we can conclude the following:

\begin{theorem}[Fourier Expansion of Boolean Functions]
Let $f:\{\pm 1\}^n\to \mathbb{R}$ be boolean function. Then we can write $f$ uniquely as a multilinear polynomial Fourier Expansion:
$$
f = \sum\limits_{U \subseteq [n]}\hat{f_U}\cdot \chi_U
$$
where $\hat{f}_U := \langle f,\chi_U\rangle$ are the Fourier Coefficients of $f$. The \textit{degree} $\deg(f)$ of $f$ is defined as the largest cardinality $U$ for which $\hat{f}_U \neq 0$:
$$
\deg(f) := \max\{|U|:\hat{f}_U\neq 0\}
$$
\end{theorem}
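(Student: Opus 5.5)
The statement is the Fourier expansion theorem for Boolean functions. I would prove existence and uniqueness together by showing that $B=\{\chi_U : U\subseteq[n]\}$ is an orthonormal basis of the space of functions $\{\pm1\}^n\to\mathbb{R}$. Throughout, I would use the normalized inner product $\langle f,g\rangle=\mathbb{E}_{x}[f(x)g(x)]$ over uniform $x\in\{\pm1\}^n$. This normalization is needed for $\hat f_U=\langle f,\chi_U\rangle$ to come out correctly; the factor $2^n$ in the preceding display should be dropped, or else absorbed into the definition of $\hat f_U$.

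\textbf{Step 1 (orthonormality).} Since $x_i^2=1$, we have $\chi_U\chi_V=\chi_{U\triangle V}$. The coordinates are independent and each has mean zero, so
$$\mathbb{E}[\chi_{U\triangle V}]=\prod_{i\in U\triangle V}\mathbb{E}[x_i].$$
This product equals $1$ if $U\triangle V=\emptyset$, that is if $U=V$, and equals $0$ otherwise. Hence $\langle\chi_U,\chi_V\rangle=\mathbb{1}[U=V]$.

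\textbf{Step 2 (basis).} Orthonormal vectors are linearly independent. There are $2^n$ of them, and the function space has dimension $2^n$ since a function is determined by its $2^n$ values. Therefore $B$ is a basis.

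\textbf{Step 3 (expansion and uniqueness).} Because $B$ is a basis, every $f$ can be written uniquely as $f=\sum_U c_U\chi_U$. Taking the inner product with $\chi_V$ and applying Step 1 gives $c_V=\langle f,\chi_V\rangle=\hat f_V$. Each $\chi_U$ is a multilinear monomial, so this expansion is a multilinear polynomial that agrees with $f$ on the cube.

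For uniqueness among \emph{all} multilinear polynomials, I would argue as follows. A multilinear polynomial is by definition a linear combination of the monomials $\chi_U$. Two such polynomials that agree on the cube therefore give two representations of the same vector in the basis $B$, so their coefficients coincide. The degree $\deg(f)$ is then well defined, because the coefficients $\hat f_U$ are unique.

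The only subtle point is the normalization of the inner product. The rest is linear algebra, and existence can alternatively be checked by the interpolation identity
$$f(x)=\sum_{a}f(a)\prod_i\frac{1+a_ix_i}{2},$$
which is visibly multilinear in $x$.
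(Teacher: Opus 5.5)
Your proposal is correct and follows the paper's route: orthonormality of the characters via $\chi_U\chi_V=\chi_{U\triangle V}$ and independence of the coordinates, hence a basis, hence unique coefficients $\hat f_U=\langle f,\chi_U\rangle$, with the dimension count and the uniqueness-among-multilinear-polynomials step (which the paper leaves implicit) spelled out. You are also right about the normalization: with the paper's factor $2^n$ one gets $\langle\chi_U,\chi_U\rangle=2^n$, not $1$, so the expectation inner product is what makes $B$ orthonormal and the stated coefficient formula correct.
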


\subsection{Influence and Sensitivity}
Given $f:\{\pm 1\}^n\to \mathbb{R}$, we define can define its sensitivity by considering fluctuations in its output when one single bit is pertrubed:
\begin{definition}[Influence]
We define the \textbf{influence} of a coordinate $i$ as:
$$
\text{Inf}_i[f] := \mathbb{E}_{x}\left[\left(\frac{f(x)-f(x^{\oplus i})}{2}\right)^2\right]
$$
where $x^{\oplus i}$ is $x$ with the $i$-th coordinate flipped. Then we define the \textbf{total influence} of $f$ as:
$$
I[f]:=\sum\limits_{i=1}^n\text{Inf}_i[f]
$$
We can often think of total influence as a measure of \textbf{average sensitivity}.
\end{definition}
The following lemma gives us a Fourier representation of influence:
\begin{lemma}
If $f = \sum \hat{f}_S \chi_S$ then:
$$
\text{Inf}_i[f] = \sum\limits_{S\ni i} \hat{f}(S)^2\quad\text{ and }\quad I[f]=\sum\limits_{S\subseteq [n]} |S|\cdot \hat{f}(S)^2=\sum\limits_{k=0}^n k\cdot W^k[f]
$$
where $W^k[f]=\sum\limits_{|S|=k}\hat{f}(S)^2$.
\end{lemma}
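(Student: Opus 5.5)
We prove the two identities separately. The second follows from the first by summing over $i$.

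For the first identity, we introduce the derivative operator $D_i f(x) := \frac{f(x^{i\mapsto 1}) - f(x^{i\mapsto -1})}{2}$. Here $x^{i\mapsto b}$ denotes $x$ with its $i$-th coordinate set to $b$. Then $\frac{f(x)-f(x^{\oplus i})}{2} = x_i\, D_i f(x)$ for every $x$. Indeed, if $x_i = 1$ the left side equals $D_i f(x)$, and if $x_i=-1$ it equals $-D_i f(x)$. Since $x_i^2 = 1$, this gives $\text{Inf}_i[f] = \mathbb{E}_x[(D_i f(x))^2]$.

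Next we compute the Fourier expansion of $D_i f$. The map $D_i$ is linear, so it suffices to apply it to each character. If $i \in S$, then $\chi_S = x_i \chi_{S\setminus\{i\}}$ and $D_i\chi_S = \chi_{S\setminus\{i\}}$. If $i\notin S$, then $\chi_S$ does not depend on $x_i$ and $D_i\chi_S = 0$. Hence
$$
D_i f = \sum_{S \ni i} \hat{f}(S)\, \chi_{S\setminus\{i\}}.
$$
The sets $S\setminus\{i\}$ are distinct for distinct $S\ni i$. We then apply Parseval's identity in the normalized form $\mathbb{E}_x[g(x)^2] = \sum_U \hat{g}(U)^2$, which follows from orthonormality of the $\chi_U$ under $\mathbb{E}$. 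This yields $\text{Inf}_i[f] = \sum_{S\ni i}\hat{f}(S)^2$.

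For total influence, we swap the order of summation:
$$
I[f] = \sum_{i=1}^n \sum_{S\ni i}\hat f(S)^2 = \sum_{S}\hat f(S)^2 \cdot |\{i: i\in S\}| = \sum_S |S|\,\hat f(S)^2.
$$
Grouping sets $S$ by their size $k$ gives $\sum_k k\, W^k[f]$.

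The only delicate point is the normalization of the inner product. The appendix writes $\langle \chi_U,\chi_V\rangle$ with a factor $2^n$, but Parseval in the form we need requires the expectation inner product $\mathbb{E}_x[\chi_U\chi_V] = \mathbf{1}[U=V]$. We will state explicitly that the coefficients are $\hat f(S) = \mathbb{E}_x[f(x)\chi_S(x)]$, under which the $\chi_U$ are orthonormal. With that convention, the remaining steps are routine.
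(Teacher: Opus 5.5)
Your proof is correct and complete. Each step checks out: the pointwise identity $\frac{f(x)-f(x^{\oplus i})}{2} = x_i\,D_i f(x)$, the computation $D_i\chi_S = \chi_{S\setminus\{i\}}$ for $i\in S$ (and $0$ otherwise), the injectivity of $S\mapsto S\setminus\{i\}$ on sets containing $i$, Parseval, and the double-counting swap for $I[f]$. The paper gives no proof of this lemma; the appendix simply quotes it as a standard fact of Boolean function analysis. Your argument is the standard textbook derivation via the derivative operator (as in O'Donnell's book), so there is no differing route to compare against.

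Your remark on normalization is well taken. With $\langle\chi_U,\chi_V\rangle = 2^n\,\mathbb{E}[\chi_U\chi_V]$, each character has squared norm $2^n$ rather than $1$, so the appendix's orthonormality claim is off by that factor. Note, however, that the lemma's hypothesis $f=\sum_S\hat f_S\chi_S$ already forces $\hat f_S=\mathbb{E}_x[f(x)\chi_S(x)]$ by uniqueness of the expansion. So your proof does not depend on adopting a convention; it only uses orthonormality of the $\chi_U$ under $\mathbb{E}$, which holds regardless.
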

The influence is related to other important quantities about $f$: its variance and its degree.
\begin{lemma}[Poincaré's inequality]
The \textbf{variance} of $f$ is:
$$
\text{Var}[f] = \mathbb{E}[f^2]-\mathbb{E}[f]^2=\sum\limits_{S\neq \emptyset}\hat{f}(S)^2
$$
It is true that:
$$
\text{Var}[f] \leq I[f]
$$
\end{lemma}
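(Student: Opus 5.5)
Poincaré's inequality follows directly from the Fourier expansion. I would prove the two claims separately.

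For the variance identity, the key tool is orthonormality of the characters $\chi_U$ under the uniform measure: $\mathbb{E}_x[\chi_U(x)\chi_V(x)] = \mathbf{1}[U=V]$. This is the expectation-normalized version of the computation above, since $\chi_U\chi_V=\chi_{U\triangle V}$ and $\mathbb{E}[\chi_W]=0$ for $W\neq\emptyset$. Expanding $f=\sum_S \hat f(S)\chi_S$ and squaring, Parseval gives $\mathbb{E}[f^2]=\sum_{S}\hat f(S)^2$. Separately, $\mathbb{E}[f]=\sum_S \hat f(S)\,\mathbb{E}[\chi_S]=\hat f(\emptyset)$, because $\chi_\emptyset\equiv 1$ and every other character has mean zero. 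Subtracting yields $\mathrm{Var}[f]=\sum_{S\neq\emptyset}\hat f(S)^2$.

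For the inequality, I would invoke the preceding lemma, $I[f]=\sum_S |S|\,\hat f(S)^2$. The $S=\emptyset$ term contributes $0$. Every nonempty $S$ has $|S|\ge 1$, and each summand is nonnegative, so
\[
I[f]=\sum_{S\neq\emptyset}|S|\,\hat f(S)^2\;\ge\;\sum_{S\neq\emptyset}\hat f(S)^2=\mathrm{Var}[f].
\]

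No step is a real obstacle; the only point needing care is normalization. The appendix defines $\langle\cdot,\cdot\rangle$ with a factor $2^n$, while Parseval and the influence formula must both use the expectation inner product $\mathbb{E}_x[f g]$. I would therefore state at the outset that $\hat f(S)=\mathbb{E}_x[f(x)\chi_S(x)]$, and use this convention consistently so that both lemmas hold as written.
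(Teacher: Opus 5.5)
Your proof is correct. The paper states this lemma without proof, directly after recording $I[f]=\sum_S |S|\,\hat{f}(S)^2$, and your argument is the intended one: Parseval together with $\mathbb{E}[f]=\hat f(\emptyset)$ gives the variance identity, and since $|S|\ge 1$ for every $S\neq\emptyset$, the inequality follows termwise. Your normalization remark is also justified: with the paper's $2^n$-weighted inner product one gets $\langle \chi_U,\chi_U\rangle = 2^n$ rather than $1$, so adopting $\hat f(S)=\mathbb{E}_x[f(x)\chi_S(x)]$ is what makes both lemmas hold as written.
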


The relationship between total influence and degree is very important because it allows us to represent functions with low total influence as low degree polynomials. 

\section{More Experimental Results with Total Influence}
\label{sec:more-total-influence-experiments}
In this section we provide additional experimental details from examining the total influence of four widely used models: \textsc{GPT2, GEMMA-2-2B, RoBERTa}, and \textsc{BERT}. As before, we run our experiments on $n=256$ tokens and experiment both with $\mu$ being the uniform distribution and the distribution the model learns. 

We analyze the geometric influence of these models by using forward hooks in Pytorch and collecting the gradients with respect to the input embeddings. We analyze positional influence both across layers and different attention heads, finding that in deep Transformers there are often layers with minimal influence towards the output. Such sparsity is an interesting phenomenon that could warrant further investigation.

\begin{figure}[h!]
    \centering
    \begin{subfigure}[b]{0.49\textwidth}
        \includegraphics[width=\textwidth]{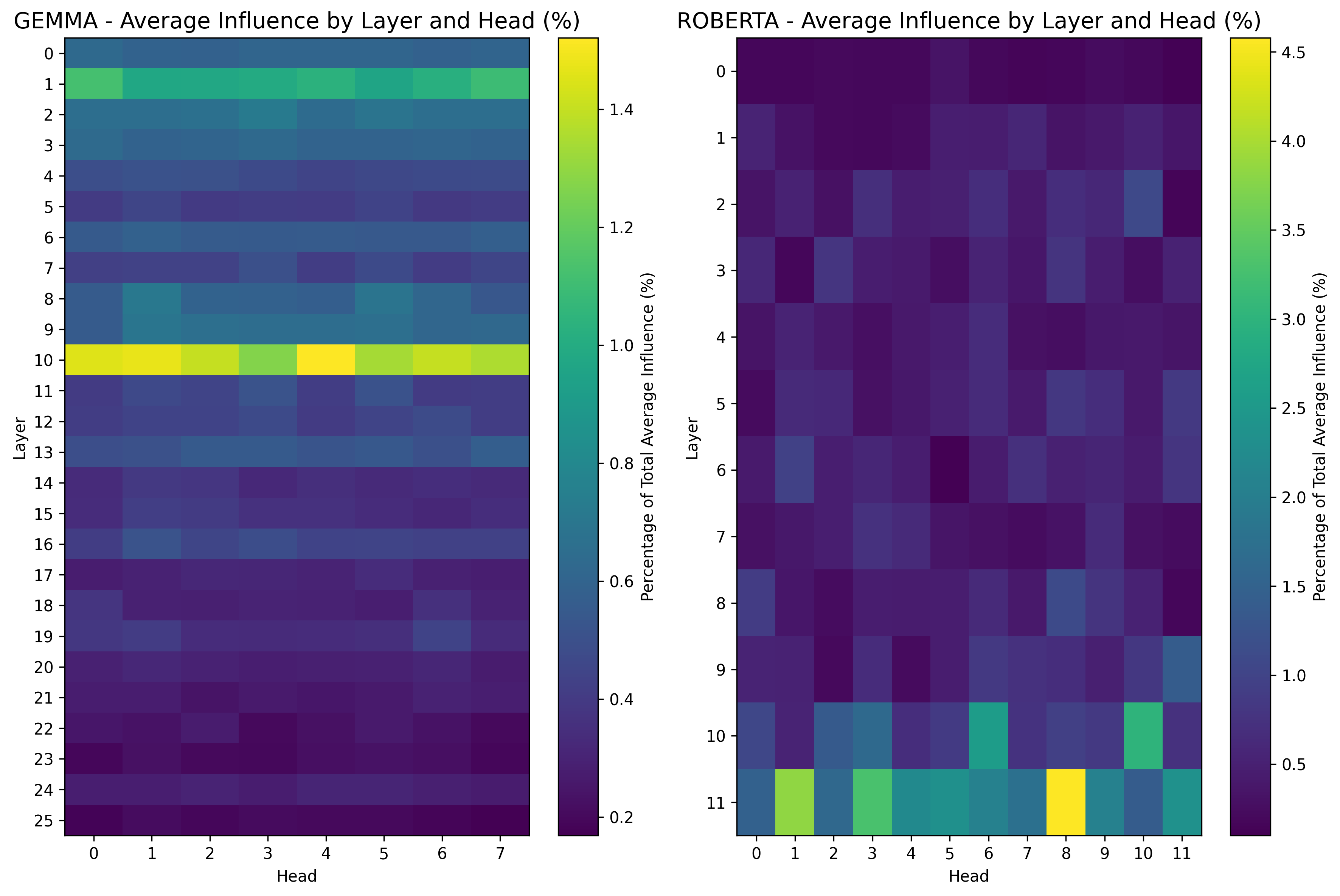}
    \end{subfigure}
    \begin{subfigure}[b]{0.49\textwidth}
        \includegraphics[width=\textwidth]{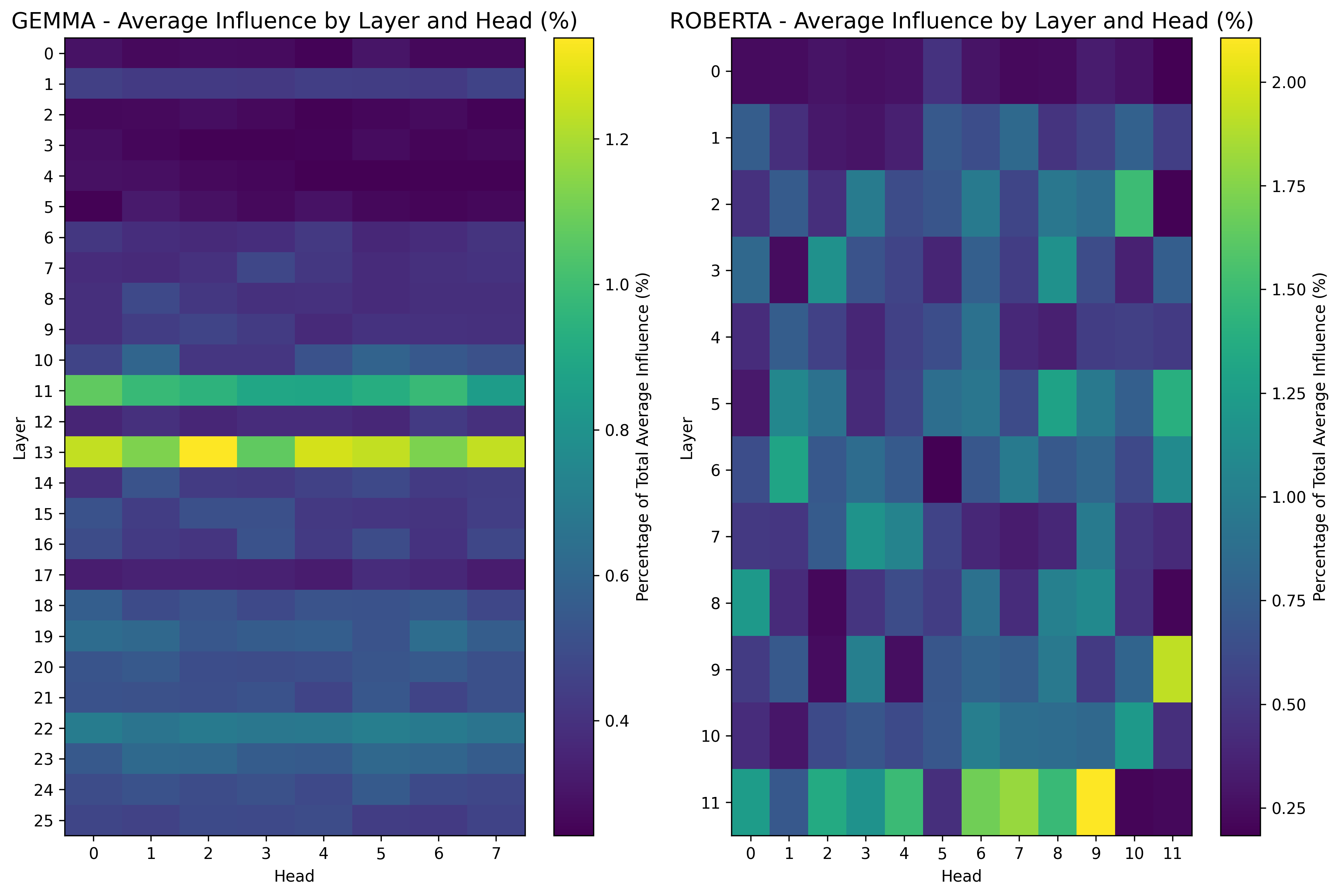}
    \end{subfigure}
    \begin{subfigure}[b]{0.49\textwidth}
        \vspace{2mm}
        \includegraphics[width=\textwidth]{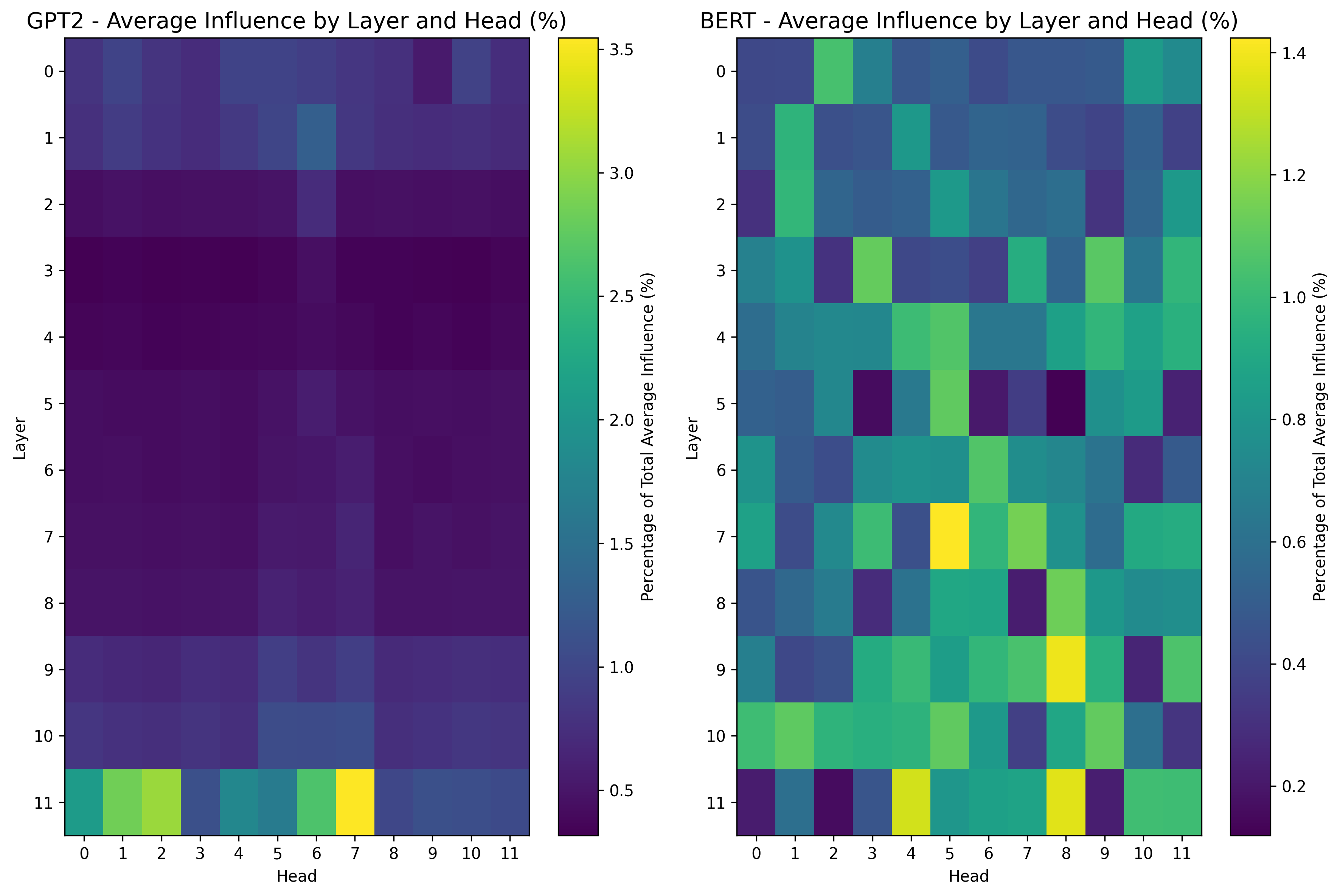}
        \label{fig:subfigA}
        \vspace{-4mm}
        \caption{Model proxy Distribution}
    \end{subfigure}
    \begin{subfigure}[b]{0.49\textwidth}
        \includegraphics[width=\textwidth]{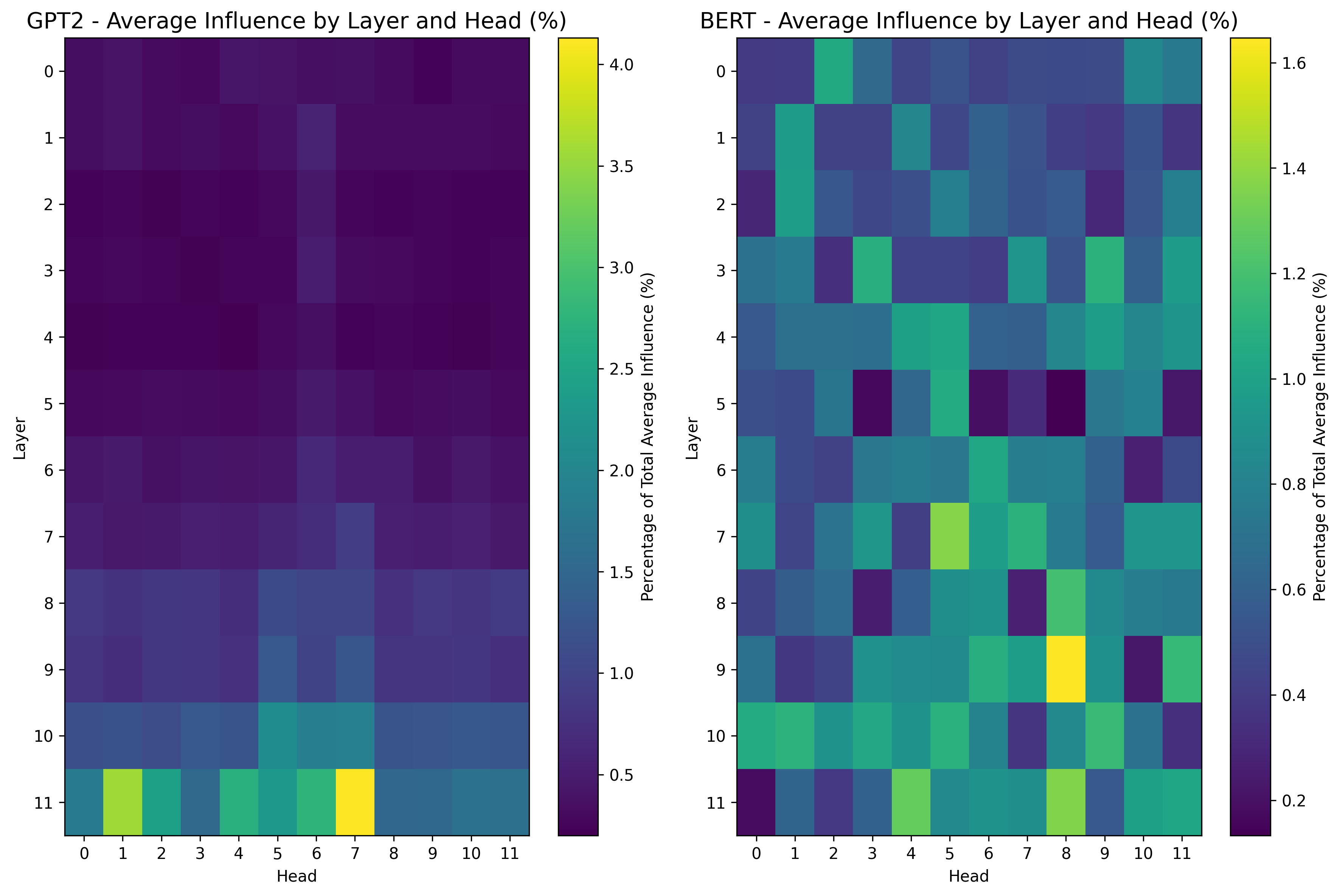}
        \label{fig:subfigB}
        \vspace{-4mm}
        \caption{Uniform distribution}
    \end{subfigure}
    
    \caption{Per Layer Geometric Influence Heatmaps for 4 different models: We observe that \textsc{GEMMA-2-b} has a very influential middle layer, while for \textsc{ROBERTA} and \textsc{GPT2} the few layers are more influential.}
    \label{fig:mainfig}
\end{figure}
\begin{figure}
    \vspace{-2mm}
    \centering
     \begin{subfigure}[b]{0.49\textwidth}
        \includegraphics[width=\textwidth]{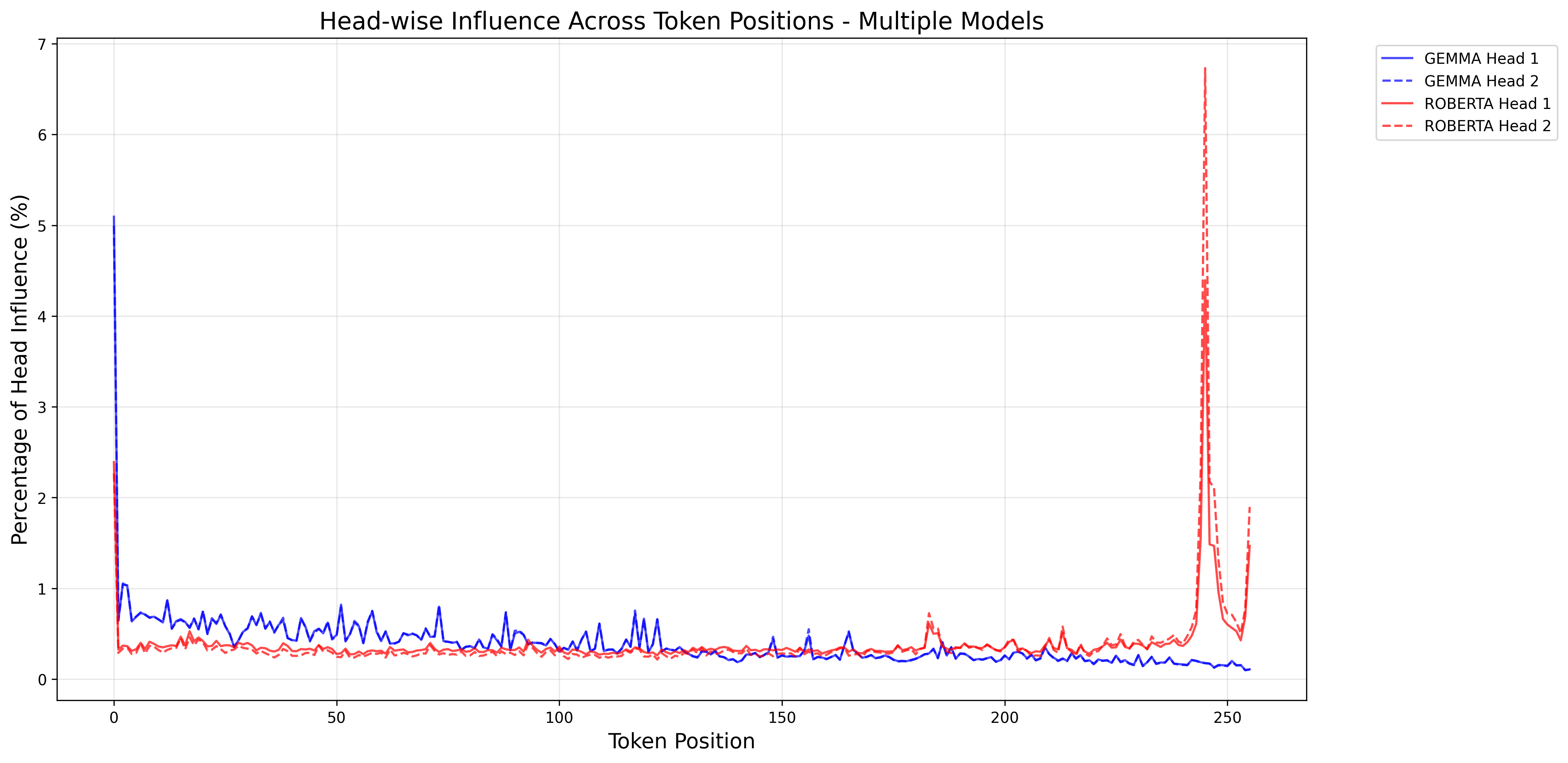}
    \end{subfigure}
    \begin{subfigure}[b]{0.49\textwidth}
        \includegraphics[width=\textwidth]{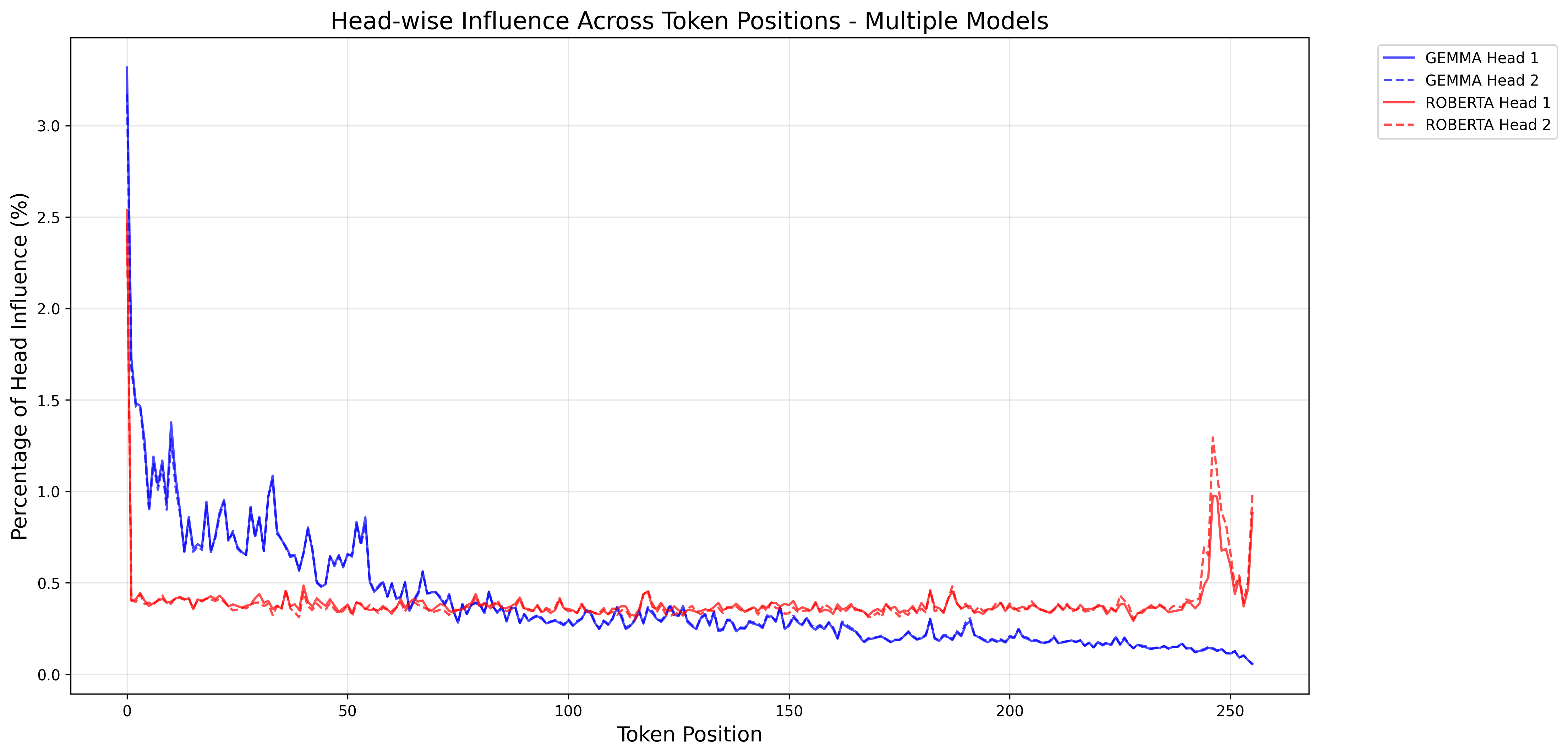}
    \end{subfigure}
    \begin{subfigure}[b]{0.49\textwidth}
        \vspace{2mm}
        \includegraphics[width=\textwidth]{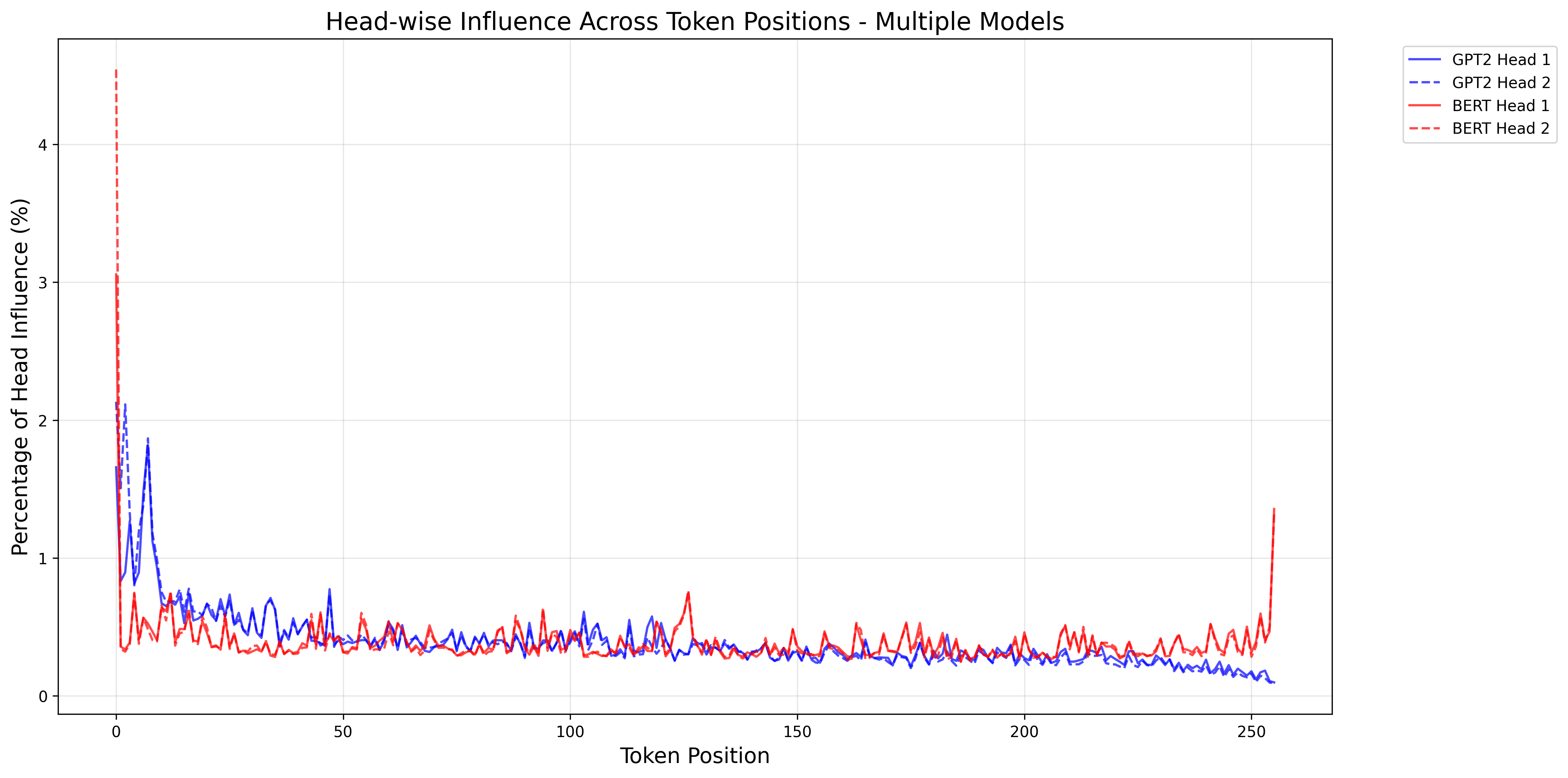}
        \label{fig:subfigA}
        \vspace{-3mm}
        \caption{Model proxy Distribution}
    \end{subfigure}
    \begin{subfigure}[b]{0.49\textwidth}
        \includegraphics[width=\textwidth]{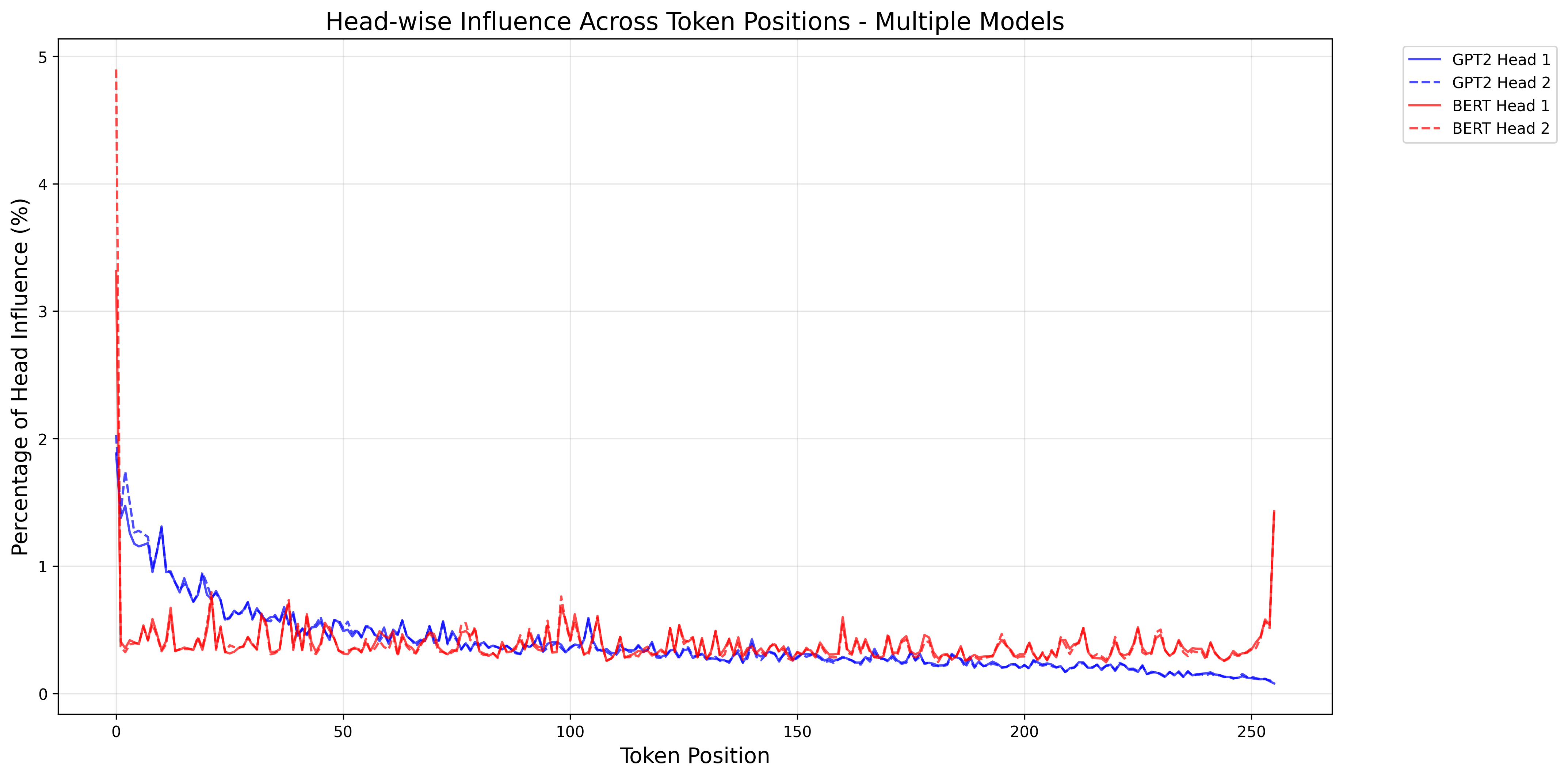}
        \label{fig:subfigB}
        \vspace{-3mm}
        \caption{Uniform distribution}
    \end{subfigure}
    
    \caption{Geometric Influence Plots for two attention heads in 4 different models. We again observe a junta-like spectral concentration phenomenon in every model. The first and last token positions are consistently the most influential.}
    \label{fig:mainfig}
\end{figure}

\section{Basics of Ornstein-Uhlenbeck Theory}
\label{sec:ornstein-uhlenbeck-theory}
We provide a self-contained review of the Ornstein-Uhlenbeck (OU) theory, culminating in the derivation of noise stability under the Gaussian measure. We encourage the interested reader to consult the excellent monograph of \cite{andersson2012ornstein} for more details.
\subsection{Hermite Polynomial Basis and its Properties}
The Ornstein-Uhlenbeck theory starts by considering the Gaussian measure $d\gamma(x) = \frac{1}{\pi^{d/2}}e^{-|x|^2}dx$ in $\mathbb{R}^d$. Let $L^2(\gamma)$ be the space of square-integrable functions with respect to this measure. Then, the \textit{Physicist's Hermite Polynomials} turn out to be an invaluable way to spectrally decompose functions in this space.

\begin{definition}[Physicist's Hermite Polynomials]{}
We define $H_0 = 1$ and:
$$
H_n(x) = (-1)^n e^{x^2}\frac{d^n}{dx^n}e^{-x^2}
$$
\end{definition}
Some important properties of $H_n$ are captured by the Lemma below:
\begin{lemma}[Properties of $H_n$]{}
The following identities hold:
\begin{align*}
    \frac{d}{dx}H_n(x) &= 2nH_{n-1}(x)\\
    H_{n+1}(x)&=2xH_n(x)-2n H_{n-1}(x)
\end{align*}
Also $H_n(x)$ is a degree $n$ polynomial with leading coefficient $2^n$. 
\end{lemma}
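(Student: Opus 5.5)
We prove the two identities separately, both by induction or direct differentiation starting from the Rodrigues formula $H_n(x) = (-1)^n e^{x^2}\frac{d^n}{dx^n}e^{-x^2}$. The cleanest route is to first establish the three-term recurrence directly from the definition, then the derivative identity, and finally read off the degree and leading coefficient from the recurrence.

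\textbf{Recurrence.} Write $D^n := \frac{d^n}{dx^n}$ and note $D e^{-x^2} = -2x e^{-x^2}$. Apply $D^n$ to this identity and use Leibniz's rule; since $D^k(-2x)=0$ for $k\ge 2$, we get
\begin{align*}
D^{n+1}e^{-x^2} = -2x\, D^n e^{-x^2} - 2n\, D^{n-1}e^{-x^2}.
\end{align*}
Multiplying through by $(-1)^{n+1}e^{x^2}$ converts the three terms into $H_{n+1}$, $2xH_n$ and $-2nH_{n-1}$ respectively, giving $H_{n+1} = 2xH_n - 2nH_{n-1}$. The only care needed is tracking the signs $(-1)^{n+1} = -(-1)^n = (-1)^{n-1}$.

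\textbf{Derivative identity.} Differentiate the definition directly:
\begin{align*}
H_n'(x) = (-1)^n\left(2x e^{x^2} D^n e^{-x^2} + e^{x^2}D^{n+1}e^{-x^2}\right) = 2xH_n(x) - H_{n+1}(x).
\end{align*}
Combined with the recurrence just proved, $2xH_n - H_{n+1} = 2nH_{n-1}$, which yields $H_n' = 2nH_{n-1}$. (Alternatively one can induct on $n$, but this substitution avoids any induction.)

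\textbf{Degree and leading coefficient.} We induct using the recurrence, with base cases $H_0=1$ and $H_1 = 2x$ (computed from the definition). If $H_n$ has degree $n$ with leading coefficient $2^n$ and $H_{n-1}$ has degree $n-1$, then $2xH_n$ has degree $n+1$ with leading coefficient $2^{n+1}$, while $2nH_{n-1}$ has lower degree and cannot cancel it. That $H_n$ is a polynomial at all also follows from this induction. No step is a real obstacle; the only delicate point is the Leibniz expansion in the recurrence, where one must confirm the binomial coefficient $\binom{n}{1}=n$ produces exactly the factor $2n$.
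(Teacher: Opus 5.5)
Your proof is correct and is essentially the paper's argument: both rest on the Leibniz expansion $D^{n+1}e^{-x^2} = D^n(-2x e^{-x^2}) = -2x D^n e^{-x^2} - 2n D^{n-1} e^{-x^2}$ and the product-rule identity $H_n' = 2xH_n - H_{n+1}$. The paper combines them to obtain $H_n' = 2nH_{n-1}$ first and then the recurrence, whereas you take the recurrence first and deduce the derivative identity; both then get the leading coefficient $2^n$ from the recurrence by induction.
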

\begin{proof}
Let $D = \frac{d}{dx}$. We have by the product rule that:
$$
DH_n(x) = (-1)^n e^{x^2}2xD^n e^{-x^2} + (-1)^n e^{x^2} D(D^n e^{-x^2})
$$
Then the generalized Leibniz formula gives:
$$
DH_n(x) = (-1)^n e^{x^2 }2x D^n e^{-x^2} + (-1)^n e^{x^2}\sum\limits_{k=0}^n{n\choose k} D^k(-2x) \cdot D^{n-k} e^{-x^2}
$$
Only the terms where $k=0,1$ are non-zero, so we have:
$$
DH_n(x) = (-1)^n e^{x^2 }2x D^n e^{-x^2} + (-1)^n e^{x^2}(-2xD^n e^{-x^2} + (-2)nD^{n-1}e^{-x^2} = 2nH_{n-1}(x)
$$
Now, again by expanding $DH_n(x)$ we get:
\begin{align*}
DH_n(x) &= (-1)^n e^{x^2} \frac{d^{n+1}}{dx^{n+1}} e^{-x^2} + (-1)^n e^{x^2} 2x \frac{d^n}{dx^n}e^{-x^2}\\
&= -H_{n+1}(x) + 2xH_n(x)
\end{align*}
And this gives:
$$
H_{n+1}(x) = 2xH_n(x) - 2nH_{n-1}(x)
$$
This allows us, by simple induction, to show that the leading coefficient of $H_n(x)$ is equal to $2^n$.
\end{proof}
The most important statement in the Ornstein-Uhlenbeck theory is the following theorem:
\begin{theorem}[Hermite Orthogonal Basis]{}
The set $B = \{H_n\}_{n=0}^\infty$ forms a complete orthogonal basis of $L^2(\gamma)$ with $||H_n||_{L^2(\gamma)} = 2^{n/2}\sqrt{n!}$. This motivates the definition and occasional use of the normalized Hermite basis:
$$
h_n(x) = \frac{1}{\sqrt{2^n n!}} H_n(x)
$$
which is orthonormal in $L^2(\gamma)$.
\end{theorem}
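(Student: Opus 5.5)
We treat the one-dimensional case first, with $d\gamma(x) = \pi^{-1/2} e^{-x^2}\,dx$; the multivariate statement then follows by taking tensor products $H_\alpha(x) = \prod_i H_{\alpha_i}(x_i)$. The plan has three parts: orthogonality, the norm computation, and completeness. The first two rest on the Rodrigues form $H_n(x)e^{-x^2} = (-1)^n D^n e^{-x^2}$ from the definition. The third uses a Fourier-analytic uniqueness argument.

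\textbf{Orthogonality and norm.} Fix $m \le n$. We write
$$\int H_m H_n\, e^{-x^2}\,dx = (-1)^n \int H_m(x)\, D^n e^{-x^2}\,dx$$
and integrate by parts $n$ times. Boundary terms vanish because every term is a polynomial times $e^{-x^2}$. This yields $\int (D^n H_m)\, e^{-x^2}\,dx$.

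By the lemma, $H_m$ has degree $m$ with leading coefficient $2^m$. Hence $D^n H_m = 0$ when $m < n$, which gives orthogonality. When $m = n$, we get $D^n H_n = 2^n n!$, and since $\int e^{-x^2}\,dx = \sqrt\pi$ we obtain
$$\|H_n\|_{L^2(\gamma)}^2 = \pi^{-1/2}\cdot 2^n n!\cdot \sqrt{\pi} = 2^n n!.$$
As a cross-check, one could iterate the identity $DH_n = 2nH_{n-1}$. The orthonormality of $h_n$ is then immediate.

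\textbf{Completeness (main obstacle).} Since $H_n$ has exact degree $n$, the span of $\{H_0,\dots,H_N\}$ is exactly the polynomials of degree at most $N$. So it suffices to show that polynomials are dense in $L^2(\gamma)$. Suppose $f \in L^2(\gamma)$ satisfies $\int f(x)\, x^k e^{-x^2}\,dx = 0$ for all $k$. Set $g = f e^{-x^2}$ and consider its Fourier transform
$$\hat g(\xi) = \int f(x)\, e^{-i\xi x} e^{-x^2}\,dx.$$

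The plan is to expand $e^{-i\xi x} = \sum_k (-i\xi x)^k/k!$ and exchange sum and integral. To justify the exchange by dominated convergence, we bound the partial sums by $e^{|\xi||x|}$ and use Cauchy--Schwarz:
$$\int |f|\, e^{|\xi||x|} e^{-x^2}\,dx \le \|f\|_{L^2(\gamma)} \Big(\int e^{2|\xi||x|} e^{-x^2}\,dx\Big)^{1/2} < \infty.$$
Every term of the exchanged series is a moment of $f$ against $e^{-x^2}$, so each vanishes and $\hat g \equiv 0$.

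Next, $g \in L^1$, again by Cauchy--Schwarz against $e^{-x^2}$. Fourier uniqueness for $L^1$ functions then gives $g = 0$ almost everywhere, so $f = 0$ in $L^2(\gamma)$. This step, which controls the interchange via the Gaussian moment bound, is the only delicate point.

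An alternative route is to show that $F(z) = \int f(x)\, e^{zx} e^{-x^2}\,dx$ is entire with all derivatives at $0$ equal to zero, so $F \equiv 0$, and then restrict to the imaginary axis. Finally, the $d$-dimensional tensor-product basis is complete by Fubini together with the one-dimensional result, applied one coordinate at a time.
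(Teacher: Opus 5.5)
Your proposal is correct. The orthogonality and norm computation is the paper's argument: Rodrigues' formula, $n$ integrations by parts onto $H_m$, and the leading coefficient $2^m$. Your sign bookkeeping is cleaner, since after $n$ integrations the factor is $(-1)^{2n}=1$, which the paper's display leaves as $(-1)^n$.

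The genuine difference is completeness. The paper argues that if $\langle f,H_n\rangle=0$ for all $n$, then Parseval's identity $\|f\|^2=\sum_n|\hat f(n)|^2/(2^n n!)$ gives $f=0$. For an orthogonal system, Parseval's identity is equivalent to completeness (in general one only has Bessel's inequality), so that step assumes the conclusion. It also uses no property of the Gaussian weight. The same reasoning would ``prove'' completeness of the orthogonal polynomials of the log-normal law, which fails: by Stieltjes' example, $\sin(2\pi\log x)$ is orthogonal to every polynomial there.

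Your route supplies exactly the missing analytic input. You note that the span of $H_0,\dots,H_N$ is all polynomials of degree at most $N$. You then use $\int e^{2|\xi||x|}e^{-x^2}\,dx<\infty$ to justify termwise integration of $e^{-i\xi x}$. Vanishing moments therefore force $\widehat{fe^{-x^2}}\equiv 0$, and $L^1$ Fourier uniqueness gives $f=0$. The price is importing Fourier uniqueness, or the identity theorem in your entire-function variant; what it buys is an actual proof.

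One cosmetic fix: since $\|f\|_{L^2(\gamma)}^2=\pi^{-1/2}\int|f|^2e^{-x^2}\,dx$, your Cauchy--Schwarz bound should carry a factor $\pi^{1/4}$, which is irrelevant for finiteness. For the multivariate remark, the same Fourier argument with $e^{-i\xi\cdot x}$ and the bound $e^{|\xi||x|}$ handles $\mathbb{R}^d$ directly. That is simpler than the coordinate-by-coordinate Fubini induction, though that also works.
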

\begin{proof}
To establish orthogonality, we have by repeated integration by parts that:
\begin{align*}
\int_{-\infty}^\infty H_n(x)H_m(x)d\gamma(x) &= \frac{(-1)^n}{\sqrt{\pi}} H_m(x) e^{x^2}(D^n e^{-x^2}) e^{-x^2}dx \\
&= \frac{(-1)^n}{\sqrt{\pi}}\int_{-\infty}^{\infty} H_m(x) \cdot D^n(e^{-x^2}) dx\\
&=\frac{(-1)^n}{\sqrt{\pi}} \int_{-\infty}^{\infty} D^n H_m(x) e^{-x^2} dx
\end{align*}
If $m < n$ then $D^n H_m(x) = 0$ and that gives orthogonality. For $m=n$, because the leading coefficient is $2^n$ we have:
$$
||H_n||_{L^2(\gamma)} = 2^{n/2}\sqrt{n!}
$$
as desired. 

Finally, we have to establish completeness. Recall from analysis that completeness means for every $f \in L^2(\gamma)$ and every $\varepsilon > 0$ there must exist some $g \in \text{span}(B)$ such that $||f-g||_{L^2(\gamma)} \leq \varepsilon$. In other words, we seek to show that the linear span of $B$ is dense in $L^2(\gamma)$, or, in even different words, that the closure of the linear span is the whole space $L^2(\gamma)$. It suffices to show that if $f \in L^2(\gamma)$ is such that $\langle f, H_n \rangle = 0$ for all $n \in \mathbb{N}$ then $f = 0$. To see this, let us consider the spectral expansion of $f$ with the following coefficients:
$$
\hat{f}(n) := \int f(x) H_n(x)d\gamma(x) = 0
$$
By Parseval's identity, we have that:
$$
||f||^2 = \sum\limits_{n=0}^{\infty} \frac{|\hat{f}(n)|^2}{2^n n!} = 0
$$
And thus $f = 0$, which concludes the proof.
\end{proof}
When considering $d > 1$, we define the Hermite basis as a tensor product over multi-indices:
\begin{align*}
    \boxed{H_a(x) = \prod_{i=1}^d H_{a_i}(x), a\in \mathbb{N}^d}
\end{align*}
The same theorem about orthogonality and completeness holds in high dimensions as well.

\subsection{The Ornstein-Uhlenbeck Operator and Semigroup}

Next, we will define the Ornstein-Uhlenbeck operator. This operator is an analog to the Laplacian in the $L^2(\gamma)$ space. It uses the \textit{adjoint}\footnote{Recall the adjoint of a linear operator is an operator that moves to the other side of the inner product: $\langle Tx, y\rangle = \langle x,T^* y\rangle$. } of the partial derivative operator $\partial_i := \frac{\partial}{\partial x_i}$:
\begin{align*}
    \langle \partial_i f, g\rangle &= \frac{1}{\pi^{d/2}}\int_{-\infty}^\infty \partial_i f(x)g(x)e^{-|x|^2}dx\\
    &=\frac{1}{\pi^{d/2}}\int_{-\infty}^\infty f(x)[2x_ig(x)-\partial_i g(x)]e^{-|x|^2}dx\tag{Integration by parts}\\
    &= \langle f,(2x_i-\partial_i)g\rangle_{L^2(\gamma)}
\end{align*}
Thus we arrive at the following definition:
\begin{definition}[Ornstein-Uhlenbeck Operator]{}
The \textbf{OU Operator} is defined as:
$$
L = \frac{1}{2}\sum\limits_{i=1}^d \partial_i^* \partial_i = -\frac{1}{2}\nabla + x\Delta
$$
\end{definition}
This operator is symmetric and positive and it has the wonderful property that the Hermite basis are actually its eigenfunctions:
\begin{theorem}[Eigenfunctions of the OU operator]{}
The set $\{H_n\}_{n=0}^{\infty}$ are eigenfunctions of $L$:
\begin{align*}
    LH_\alpha = |a|H_\alpha
\end{align*}
where $|a| = \sum\limits_{i=1}^d \alpha_i$
\end{theorem}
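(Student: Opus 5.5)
The plan is to work directly from the definition $L = \frac{1}{2}\sum_{i=1}^d \partial_i^*\partial_i$, where the integration-by-parts computation above gives $\partial_i^* = 2x_i - \partial_i$ on $L^2(\gamma)$. I will not use the displayed closed form $-\frac{1}{2}\nabla + x\Delta$, which appears garbled. The argument reduces to the one-dimensional identities for $H_n$ from the Lemma on properties of $H_n$, and then lifts to $d$ dimensions by the tensor-product structure $H_\alpha(x) = \prod_{i=1}^d H_{\alpha_i}(x_i)$.

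\textbf{Step 1 (one dimension).} Let $D = \frac{d}{dx}$, so the one-dimensional operator is $L_1 = \frac{1}{2}(2x - D)D$. The first identity of the Lemma gives $DH_n = 2nH_{n-1}$. The key observation is that the adjoint $2x - D$ acts as a raising operator:
\[
(2x - D)H_{n-1} = H_n .
\]
To check this, expand $DH_{n-1} = 2(n-1)H_{n-2}$. The expression $(2x - D)H_{n-1} = 2xH_{n-1} - 2(n-1)H_{n-2}$ is then exactly the three-term recurrence $H_{n} = 2xH_{n-1} - 2(n-1)H_{n-2}$, obtained by shifting $n \mapsto n-1$ in the Lemma. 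Combining the two facts,
\[
L_1 H_n = \tfrac{1}{2}(2x - D)(2nH_{n-1}) = nH_n .
\]
The case $n=0$ is trivial, since $DH_0 = 0$. For $n=1$ I will read $H_{-1}$ as $0$, or simply check $(2x - D)H_0 = 2x = H_1$ directly.

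\textbf{Step 2 (tensorization).} For a multi-index $\alpha$, the operator $\partial_i$ acts only on the factor $H_{\alpha_i}(x_i)$. Multiplication by $x_i$ likewise touches only the $i$-th coordinate. Hence $\partial_i^*\partial_i$ acts on $H_\alpha$ as $2L_1$ in the variable $x_i$, with all other factors held fixed. Step 1 then gives $\frac{1}{2}\partial_i^*\partial_i H_\alpha = \alpha_i H_\alpha$. Summing over $i$ yields $LH_\alpha = |\alpha|\, H_\alpha$.

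The only step needing care is Step 1, specifically the index shift in the recurrence and the convention at $n = 0, 1$. Everything else is bookkeeping. I would also note that Step 1 relies on the adjoint formula $\partial_i^* = 2x_i - \partial_i$. This formula is specific to the measure $e^{-|x|^2}$ used here, which is consistent with the physicist's Hermite polynomials; with the probabilist's normalization the factor $2$ would change.
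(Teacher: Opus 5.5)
Your proposal is correct and follows the same route as the paper. Both arguments show that $\partial_i$ lowers ($\partial_i H_\alpha = 2\alpha_i H_{\alpha-e_i}$) and $\partial_i^*$ raises ($\partial_i^* H_{\alpha-e_i} = H_\alpha$), then sum over $i$. The only difference is how the raising identity is checked: the paper pairs $D^*H_{n-1}$ against each $H_j$ and uses orthogonality plus completeness (writing $2nn!$ where $2^n n!$ is meant), while you get it directly from $D^* = 2x - D$ and the three-term recurrence, which is purely algebraic and avoids completeness.
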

\begin{proof}
Consider $d=1$. We have for $j \neq n$:
$$
\langle D^* H_{n-1},H_j\rangle = 2j\langle H_{n-1},H_{j-1}\rangle = 0
$$
And if $j = n$ we have $\langle D^* H_{n-1}, H_n\rangle = 2n n!$, so $D^* H_{n-1} = H_n$. Thus we have showed that:
$$
D^* H_{\alpha-e_i} = H_\alpha
$$
for all $d\geq 1$ and $i \in [d]$. So we know how the adjoint partial operator acts on the Hermite polynomials. Thus, we can figure out how the OU operator acts on $H_\alpha$:
\begin{align*}
    LH_\alpha = \frac{1}{2}\sum\limits_{i=1}^d \partial_i^* \partial_i H_\alpha = \frac{1}{2}\sum\limits_{i=1}^d \partial_i^* 2 \alpha_i H_{\alpha-e_i} = \sum\limits_{i=1}^d \alpha_i H_{\alpha} = |a|H_\alpha
\end{align*}
as claimed.
\end{proof}
Now we can define the OU Semigroup. A semigroup is a sequence of operators $T_t$ that describe the evolution of a process such that $T_0 = I$ and $T(t+s) = T(t) \cdot T(s)$\footnote{Technically there are also some analytic continuity requirements, but we will not dive into them here. Please refer to \citep{andersson2012ornstein} for more details.} 
\begin{definition}[The OU Semigroup]{}
The \textbf{OU Semigroup} is defined as:
\begin{align*}
    (T_t)_{t\geq 0} = e^{-{tL}}
\end{align*}
where if $f = \sum_\alpha \hat{f}(\alpha)H_\alpha$, $T_t$ acts on $f$ as:
$$
e^{-tL}f = \sum\limits_{\alpha \in \mathbb{N}^d} e^{-t|\alpha|} \hat{f}(\alpha) H_\alpha
$$
\end{definition}
\subsection{The Mehler Kernel}
It is easy to verify that $(T_t)$ is indeed a semigroup. A very useful tool for us to analyze properties of certain semigroups is \textbf{kernels}. If a semigroup is written via a kernel it will be very easy to prove powerful properties on it. A kernel is just the analogs of matrix multiplication. When operator $T$ acts on function $f$, imagine there being some kind of function $K(x,y)$ that for each $f(y)$ tells us how much that value ``contributes'' to $Tf(x)$:
\begin{align*}
    Tf(x) = \int_{\mathbb{R}^d}K(x,y) f(y)dy
\end{align*}
We want to find a kernel for the OU semigroup:
\begin{align*}
    T_t f(x) = \int_{\mathbb{R}^d} M_t^{\gamma}(x,y) f(y) d\gamma(y)
\end{align*}
Mehler already found this kernel \citep{fuchs1866journal} in 1866!
\begin{theorem}[The Mehler Kernel]{}
The Kernel for the OU semigroup is:
\begin{align*}
    M_t^\gamma (x,y) = \sum\limits_{\alpha \in \mathbb{N}^d} e^{-t|\alpha|} h_\alpha(x) h_{\alpha}(y)
\end{align*}
\end{theorem}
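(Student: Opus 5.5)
We need to prove the Mehler kernel statement: $T_t f(x) = \int M_t^\gamma(x,y) f(y)\,d\gamma(y)$ with $M_t^\gamma(x,y)=\sum_\alpha e^{-t|\alpha|}h_\alpha(x)h_\alpha(y)$. Here $h_\alpha=\prod_i h_{\alpha_i}$ is the normalized tensor Hermite basis.

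The plan is to verify the kernel identity directly from the spectral definition of $T_t$, using the completeness and orthonormality of $\{h_\alpha\}$ established in the Hermite Orthogonal Basis theorem. First expand $f \in L^2(\gamma)$ in the orthonormal basis as $f=\sum_\alpha \langle f,h_\alpha\rangle h_\alpha$. Since $h_\alpha$ is a scalar multiple of $H_\alpha$, each eigenvalue relation $LH_\alpha=|\alpha|H_\alpha$ transfers to $h_\alpha$. The definition of the semigroup then gives
\[
T_tf=\sum_\alpha e^{-t|\alpha|}\langle f,h_\alpha\rangle h_\alpha .
\]
Next, write $\langle f,h_\alpha\rangle=\int f(y)h_\alpha(y)\,d\gamma(y)$ and interchange the sum with the integral. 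This formally yields $T_tf(x)=\int \big(\sum_\alpha e^{-t|\alpha|}h_\alpha(x)h_\alpha(y)\big)f(y)\,d\gamma(y)$, which is the claim.

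The main obstacle is justifying the interchange and the meaning of the kernel series. For fixed $t>0$ the series converges in $L^2(\gamma\otimes\gamma)$, because $\sum_\alpha e^{-2t|\alpha|}=(1-e^{-2t})^{-d}<\infty$ and the functions $h_\alpha(x)h_\alpha(y)$ are orthonormal in the product space. Hence $M_t^\gamma\in L^2(\gamma\otimes\gamma)$ and the integral operator it defines is Hilbert–Schmidt. It therefore suffices to check that this operator and $T_t$ agree on each basis element. By orthonormality, $\int M_t^\gamma(x,y)h_\beta(y)\,d\gamma(y)=e^{-t|\beta|}h_\beta(x)$, where the termwise integration is legitimate by $L^2$ convergence in $y$ for a.e. $x$, via Fubini on the partial sums. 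Two bounded operators that agree on a complete orthonormal basis agree everywhere, which proves the identity. At $t=0$ both sides are the identity, interpreted as a distributional kernel, so that case is excluded or treated separately.

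As a sanity check, and to connect to noise stability, I would also identify the closed form. Set $\rho=e^{-t}$ and use Mehler's formula $\sum_n \rho^n h_n(x)h_n(y)$, which in the paper's $e^{-x^2}$ convention equals
\[
(1-\rho^2)^{-1/2}\exp\!\left(\frac{2\rho xy-\rho^2(x^2+y^2)}{1-\rho^2}\right).
\]
Taking the product over coordinates gives the $d$-dimensional kernel. The 1D formula would be proved from the generating function $\sum_n H_n(x)s^n/n!=e^{2xs-s^2}$ by a Gaussian integral computation. This shows that $T_tf(x)=\mathbb{E}[f(\rho x+\sqrt{1-\rho^2}Z)]$ with $Z$ Gaussian in the paper's normalization. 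Combining this with $\langle f,T_tf\rangle=\sum_\alpha\rho^{|\alpha|}\langle f,h_\alpha\rangle^2$ yields the stability formula used in Lemma~\ref{lemma:tail-bound-via-stability}.
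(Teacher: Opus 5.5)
Your proposal is correct and takes essentially the same route as the paper: check that the kernel sends each Hermite basis element $h_\beta$ to $e^{-t|\beta|}h_\beta$ by orthonormality, then pass to general $f$. What you add is the justification for that last step, which the paper only gestures at with ``take $N\to\infty$'': $M_t^\gamma\in L^2(\gamma\otimes\gamma)$ because $\sum_\alpha e^{-2t|\alpha|}=(1-e^{-2t})^{-d}$, so the operator is Hilbert--Schmidt and agreement on a complete basis extends to all of $L^2(\gamma)$.
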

\begin{proof}
We shall just verify this. Choose some $\beta$ and see how the kernel acts on $H_\beta$:
\begin{align*}
    \int_{y \in \mathbb{R}^d} \sum\limits_{|\alpha| < N} e^{-t|\alpha|} h_\alpha(x) h_\alpha(y) H_\beta(y) d\gamma(y) &= \sum\limits_{|\alpha| < N}e^{-t|\alpha|} h_\alpha(x) \int_{y \in \mathbb{R}^d} h_\alpha(y) H_\beta(y) d\gamma(y)\\
    &=e^{-t|\beta|} \langle h_\beta, H_\beta\rangle h_\alpha(y) \tag{only $\alpha = \beta$ survives}\\
    &= e^{-t|\beta|} ||H_\beta|| h_\beta(x)\\
    &= T_t H_\beta
\end{align*}
Now take $N \to \infty$ and we arrive at the correct result. 
\end{proof}
The Mehler kernel has a really nice analytical expression. Starting from $H_n(y) = (-1)^n e^{-y^2} D^n e^{-y^2}$ and considering the Fourier Transform of the Gaussian: $\mathcal{F}(e^{-\xi^2})(x) = \sqrt{pi}\cdot e^{-x^2/4}$ we have:
$$
H_n(y) = (-1)^n e^{y^2}\frac{2^n i^n}{\sqrt{\pi}}\int_{-\infty}^{\infty} \xi^n e^{2iy\xi-\xi^2} d\xi 
$$
And so the Mehler kernel can be written\footnote{Formally, we also need to justify why switching infinite summation and integration is possible by using dominated convergence, but we will skip this here.} as:
\begin{align*}
M_t^\gamma(x,y) &= \sum\limits_{n=0}^\infty e^{-tn} h_n(x) h_n(y) \\
&=\sum\limits_{n=0}^\infty e^{-tn} h_n(x) (-1)^n e^{y^2}\frac{2^n i^n}{\sqrt{\pi}}\int_{-\infty}^{\infty} \xi^n e^{2iy\xi-\xi^2} d\xi\\
&= \frac{e^{y^2}}{\sqrt{\pi}} \int_{-\infty}^{\infty} \sum\limits_{n=0}^{\infty}\frac{1}{n!}(-i\xi e^{-t})^n H_n(x) e^{2iy\xi -\xi^2} d\xi\tag{Switch $\sum$ and $\int$}\\
&= \frac{e^{y^2}}{\sqrt{\pi}}\int_{-\infty}^{\infty} e^{2i\xi (y-e^{-t}x)-\xi^2(1-e^{-2t})}d\xi
\end{align*}
Letting $\xi' = \xi\sqrt{1-e^{-2t}}$ and taking the inverse Fourier Transform we get that:
$$
M_t^\gamma(x,y) = \frac{1}{\sqrt{\pi}}\frac{e^{y^2}}{\sqrt{1-e^{-2t}}}\int_{-\infty}^{\infty} e^{2i\xi'\frac{y-e^{-t}x}{\sqrt{1-e^{-2t}}}-|\xi'|^2}d\xi
$$
And this brings us to the following important theorem:
\begin{theorem}[Analytical form of Mehler's Kernel]{}
We have that:
$$
M_t^\gamma(x,y) = \frac{1}{\pi^{d/2}(1-e^{-2t})^{d/2}}e^{-\frac{|y-e^t x|^2}{1-e^{-2t}}}
$$
\end{theorem}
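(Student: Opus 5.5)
We will prove the formula by reducing to $d=1$ via the tensor structure, evaluating the one-dimensional series in closed form with the Fourier-integral representation of $H_n$, and then checking which reference measure the resulting density refers to. Throughout we write $r:=e^{-t}\in(0,1)$ for $t>0$, and we read the exponent as $|y-e^{-t}x|^2$ (the $e^{t}$ in the display is presumably a typo). We also read the formula as the kernel with respect to Lebesgue measure $dy$, i.e.
$$T_tf(x)=\int M_t^\gamma(x,y)\,\frac{e^{-|y|^2}}{\pi^{d/2}}\,dy ,$$
so the claimed expression should equal $M_t^\gamma(x,y)\,\pi^{-d/2}e^{-|y|^2}$. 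Pinning down this normalization is part of the plan.

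\textbf{Reduction to $d=1$.} Since $h_\alpha(x)=\prod_i h_{\alpha_i}(x_i)$ and $e^{-t|\alpha|}=\prod_i r^{\alpha_i}$, the sum over $\alpha\in\mathbb{N}^d$ factorizes into a product of $d$ one-dimensional sums. The right-hand side also factorizes coordinatewise, because $|y-rx|^2=\sum_i (y_i-rx_i)^2$. So it suffices to prove
$$\sum_{n\ge0} r^n h_n(x)h_n(y)\,\frac{e^{-y^2}}{\sqrt\pi}=\frac{1}{\sqrt{\pi(1-r^2)}}\,e^{-\frac{(y-rx)^2}{1-r^2}}.$$

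\textbf{Closed form in one dimension.} Following the computation already sketched above, we substitute $H_n(y)=(-1)^n e^{y^2}\frac{(2i)^n}{\sqrt\pi}\int \xi^n e^{2iy\xi-\xi^2}\,d\xi$ and $h_n=H_n/\sqrt{2^n n!}$. Pulling the sum inside the integral, the summand becomes $\frac{1}{n!}(-i\xi r)^n H_n(x)$. We then apply the generating function $\sum_n H_n(x)s^n/n!=e^{2xs-s^2}$, which follows from the recurrence in the Lemma on properties of $H_n$, with $s=-i\xi r$. This gives
$$\sum_n r^n h_n(x)h_n(y)=\frac{e^{y^2}}{\sqrt\pi}\int e^{2i\xi(y-rx)-\xi^2(1-r^2)}\,d\xi .$$
Next we use the Gaussian integral $\int e^{2i\xi a-b\xi^2}d\xi=\sqrt{\pi/b}\,e^{-a^2/b}$ with $a=y-rx$ and $b=1-r^2>0$. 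This yields
$$\sum_n r^n h_n(x)h_n(y)=\frac{e^{y^2}}{\sqrt{1-r^2}}\,e^{-\frac{(y-rx)^2}{1-r^2}} .$$
Multiplying by the Gaussian density $e^{-y^2}/\sqrt\pi$ cancels the $e^{y^2}$ and produces exactly the claimed one-dimensional formula. Taking the product over the $d$ coordinates then completes the proof.

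\textbf{Main obstacle.} The delicate step is justifying the interchange of $\sum_n$ and $\int d\xi$. We will bound $|H_n(x)|$, for fixed $x$, by $C_x\, 2^{n/2}\sqrt{n!}\,e^{x^2/2}$ (Cramér's inequality). Then
$$\sum_n \frac{|\xi r|^n |H_n(x)|}{n!}\,e^{-\xi^2}$$
is dominated, up to constants, by $\sum_n (\sqrt2\, r|\xi|)^n/\sqrt{n!}\; e^{-\xi^2}$. This is at most $C e^{c r^2\xi^2-\xi^2}$, and it is integrable for $r<1$ once $c$ is chosen appropriately. If the constant turns out to be awkward, the fallback is to bypass the interchange entirely. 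For that we would show that the candidate Gaussian kernel maps $H_\beta$ to $r^{|\beta|}H_\beta$. It suffices to check this on the generating function $e^{2ys-s^2}$, where it reduces to a single Gaussian integral returning $e^{2xrs-r^2s^2}$. Completeness of the Hermite basis then identifies the two operators, and hence their kernels.
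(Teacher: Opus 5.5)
Your proposal is correct and follows the same route as the paper: the Fourier-integral representation of $H_n$, summing the Hermite generating function under the $\xi$-integral, and evaluating the resulting Gaussian integral. Your reading of the display matches what the paper's own derivation produces: the exponent should be $|y-e^{-t}x|^2$, and the formula is the Lebesgue-measure kernel $M_t^\gamma(x,y)\,\pi^{-d/2}e^{-|y|^2}$, which is what the paper's intermediate expression $\frac{e^{y^2}}{\sqrt{\pi}}\int e^{2i\xi(y-e^{-t}x)-\xi^2(1-e^{-2t})}\,d\xi$ evaluates to; your Cram\'er-inequality domination also supplies the sum--integral interchange that the paper explicitly skips.
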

Some important consequences of this treatment is the easy, via Hölder's inequality, proof of the non-expansiveness of $T_t$:
\begin{lemma}[Non-expansiveness of OU operator]{}
For any $p \geq 1$ we have that:
$$
||T_t||_{L^p(\gamma)}^p \leq ||T_t |f|^p ||_{L^p(\gamma)} \leq ||f||^p_{L^p(\gamma)}
$$
\end{lemma}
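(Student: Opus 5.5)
This lemma is the non-expansiveness of $T_t$ on $L^p(\gamma)$ (the first expression should read $\|T_t f\|_{L^p(\gamma)}^p$). The plan is to write $T_t$ as an average against a probability kernel and then apply Jensen's (equivalently Hölder's) inequality pointwise. The only structure needed from the Mehler kernel is that, for each fixed $x$, $y\mapsto M_t^\gamma(x,y)$ is a probability density with respect to $d\gamma(y)$. By the analytical form of Mehler's kernel, $M_t^\gamma\ge 0$. By the series representation, $\int M_t^\gamma(x,y)\,d\gamma(y)=T_t\mathbf 1(x)=1$, since $\mathbf 1=H_0$ has $|\alpha|=0$ and is fixed by $e^{-tL}$.

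Probabilistically, $T_tf(x)=\mathbb E[f(e^{-t}x+\sqrt{1-e^{-2t}}\,Z)]$ with $Z\sim\gamma$. Fix $x$ and apply Jensen's inequality to the convex map $u\mapsto|u|^p$, $p\ge1$, under the probability measure $M_t^\gamma(x,y)\,d\gamma(y)$. This gives the pointwise bound
$$|T_tf(x)|^p\le \int M_t^\gamma(x,y)|f(y)|^p\,d\gamma(y)=T_t(|f|^p)(x).$$
For $p>1$ one can equivalently use Hölder's inequality with exponents $p$ and $p/(p-1)$ applied to $M^{1/p}|f|\cdot M^{1-1/p}$. Integrating this pointwise bound in $x$ against $\gamma$ yields the first inequality, read as $\|T_tf\|_p^p\le\|T_t|f|^p\|_{L^1(\gamma)}$.

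For the second inequality, use the invariance of $\gamma$ under $T_t$: $\int T_tg\,d\gamma=\int g\,d\gamma$ for $g\ge0$. One route is symmetry of the kernel together with the normalization already shown:
$$\iint M_t^\gamma(x,y)\,g(y)\,d\gamma(y)\,d\gamma(x)=\int g(y)\Big(\int M_t^\gamma(x,y)\,d\gamma(x)\Big)d\gamma(y)=\int g\,d\gamma,$$
with the interchange justified by Tonelli since everything is nonnegative. Another route is the spectral one: $\langle T_tg,\mathbf 1\rangle=\langle g,T_t\mathbf 1\rangle=\langle g,\mathbf 1\rangle$, using self-adjointness. Applying this with $g=|f|^p$ gives $\|T_tf\|_p^p\le\|f\|_p^p$.

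The main obstacle is rigor rather than the inequality itself. The paper defines $T_t$ spectrally on $L^2$, so for $f\in L^p$ with $p<2$, or $p>2$, one must either take the kernel formula as the definition or first prove the bound for $f$ in the dense class of polynomials or $L^2\cap L^p$ and extend by continuity. I also expect to check the paper's normalization of the kernel (its exponent $|y-e^tx|^2$ should read $|y-e^{-t}x|^2$, relative to $d\gamma$ with $e^{-|x|^2}$ weight). Normalization and positivity are all that matter, and positivity plus total mass one can be verified directly by the Gaussian integral.
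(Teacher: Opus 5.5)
Your proposal is correct and is essentially the paper's argument. The paper only remarks that non-expansiveness follows from the Mehler kernel representation ``via H\"older's inequality.'' Your pointwise Jensen/H\"older bound $|T_tf|^p\le T_t(|f|^p)$ against the probability kernel $M_t^\gamma(x,\cdot)\,d\gamma$, followed by invariance of $\gamma$ under $T_t$ (a step the paper leaves implicit), is that argument written out.

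Your reading of the middle term as an $L^1(\gamma)$ norm is the right one. With the literal $L^p(\gamma)$ norm the second inequality fails for small $t$: for $f\in L^{p^2}(\gamma)$ the middle term tends to $\|f\|_{L^{p^2}(\gamma)}^p$, which is strictly larger than $\|f\|_{L^p(\gamma)}^p$ when $p>1$ and $|f|$ is non-constant.
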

We also easily get \textit{Mehler's formula}, which will be the starting point in our stability argument. Just perform the change of variables 
$$
z = \frac{y-e^{-t}x}{\sqrt{1-e^{-2t}}}
$$
to get:
$$
T_t f(X) = \int f\left(\rho X + Z\sqrt{1-\rho^2}\right) d\gamma(Z)
$$
where $\rho := e^{-t}$ for $t \geq 0$. In other words, 
$$
T_tf(X) = \E_{Z\sim\mathcal{N}(0,1)}[f(\rho X + Z\sqrt{1-\rho^2})]
$$
which allows us to view $T_t$ as the expected outcome of a random process: take $X$, add some noise to it to get $Y = \rho X + Z\sqrt{1-\rho^2}$ and see the expected value of $f(Y)$.

\subsection{From OU Theory to Stability}
We are finally ready to define stability in familiar terms:
\begin{definition}[Gaussian Noise Stability]
Let $f \in L^2(\gamma)$ and $X \sim \gamma^n$ where $\gamma \equiv \mathcal{N}(0,1)$. Let $\rho \in (0,1)$ and $Y = \rho X + Z\sqrt{1-\rho^2}$ for $Z \sim \mathcal{N}(0,I_n)$ independent from $X$. We define the stability of $f$ as:
$$
\stab_\rho(f) := \mathop{\E}\limits_{(X,Y)}[f(X)f(Y)]
$$
\end{definition}
We note the immediate connection between stability and the OU semigroup:
$$
\stab_\rho(f) = \langle T_\rho f, f\rangle_{L^2(\gamma)}
$$
Now we know from the spectral expansion of the OU Semigroup action that:
$$
T_\rho f = \sum\limits_{\alpha \in \mathbb{N}^d} \rho^{|\alpha|} \hat{f}(\alpha) H_\alpha
$$
By orthogonality of the Hermite polynomials we have:
$$
\stab_\rho(f) = \left\langle \sum\limits_{\alpha \in \mathbb{N}^d} \rho^{|\alpha|} \hat{f}(\alpha) H_\alpha, \sum\limits_{\alpha \in \mathbb{N}^d} \rho^{|\alpha|} \hat{f}(\alpha) H_\alpha\right\rangle = \sum\limits_{\alpha \in \mathbb{N}^d} \rho^{|\alpha|} 2^{|\alpha|}\cdot \alpha!\cdot \hat{f}(\alpha)^2
$$
If we define $\tilde{f}(a) = \sqrt{2^{|\alpha|}{\alpha!}} \hat{f}(\alpha)$ we finally get:
\begin{align*}
    \stab_\rho(f) = \sum\limits_{\alpha \in \mathbb{N}^d } \rho^{|\alpha|} \tilde{f}(\alpha)^2
\end{align*}

\section{Proof of \Cref{lemma:tail-bound-via-stability}}
\label{sec:proof-of-tail-bound-via-stability}
\begin{lemma}[Spectral tail bound via stability]
Let $f \in L^2(\gamma^n)$ be a square-integrable function under the standard Gaussian measure $\gamma^n$. Suppose that
\[
\stab_\rho(f) := \E[f(X)f(Y)] \geq (1 - \delta) ||f||_{2}^2
\]
for some $\rho \in (0,1)$ and $0 < \delta < \varepsilon < 1$, where $(X, Y)$ are standard Gaussian vectors with correlation $\rho$, i.e., $Y = \rho X + \sqrt{1 - \rho^2} Z$ for $Z \sim \gamma^n$ independent of $X$.

Then $f$ is $(\varepsilon, T)$-concentrated for
\[
\boxed{T \geq \frac{\log\left(1 - \delta/\varepsilon\right)}{\log \rho}}
\]
in the sense that:
\[
\sum_{k \geq T} \sum_{|\alpha| = k} \hat{f}(\alpha)^2 \cdot 2^{|\alpha|} \alpha! \leq \varepsilon 
||f||_2^2
\]
where $f(x) = \sum_{\alpha \in \N^n} \hat{f}(\alpha) H_\alpha(x)$ is the expansion of $f$ in the multivariate physicists' Hermite polynomial basis.
\end{lemma}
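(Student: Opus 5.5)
The plan is a direct spectral comparison using the Hermite expansion from \Cref{sec:ornstein-uhlenbeck-theory}. Group the Hermite mass by degree: set $W_k := \sum_{|\alpha|=k} \tilde f(\alpha)^2$ with $\tilde f(\alpha)^2 = 2^{|\alpha|}\alpha!\,\hat f(\alpha)^2$, so the quantity to bound is $\sum_{k\ge T} W_k$.

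I will use two identities, both with the same normalization of the Hermite basis. The first is Parseval, $\|f\|_2^2 = \sum_{k\ge 0} W_k$, which follows from orthogonality and completeness (Hermite Orthogonal Basis theorem). The second is the spectral formula derived at the end of \Cref{sec:ornstein-uhlenbeck-theory}, $\stab_\rho(f) = \sum_{k\ge 0}\rho^k W_k$. Subtracting them and using the hypothesis gives
\begin{align*}
\delta\|f\|_2^2 \;\ge\; \|f\|_2^2 - \stab_\rho(f) \;=\; \sum_{k\ge 0} (1-\rho^k)\, W_k .
\end{align*}

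Every summand is nonnegative, since $\rho\in(0,1)$ and $W_k\ge 0$. So I may drop the terms with $k<T$. For $k\ge T$, the monotonicity of $k\mapsto \rho^k$ gives $1-\rho^k \ge 1-\rho^T$, and this is positive when $T>0$. Hence
\begin{align*}
\sum_{k\ge T} W_k \;\le\; \frac{\delta}{1-\rho^T}\,\|f\|_2^2 .
\end{align*}
It remains to make the right-hand side at most $\varepsilon\|f\|_2^2$. This holds iff $\rho^T \le 1-\delta/\varepsilon$, and the right-hand side of that inequality lies in $(0,1)$ because $0<\delta<\varepsilon$. Taking logarithms and dividing by $\log\rho<0$, which reverses the inequality, gives exactly $T \ge \log(1-\delta/\varepsilon)/\log\rho$. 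This threshold is positive, so the requirement $T>0$ used above is automatic. $T$ need not be an integer, since the sum over $k\ge T$ only involves integer $k\ge\lceil T\rceil$.

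The argument itself is elementary. The step I expect to need the most care is checking that the two identities use consistent normalizations. The appendix builds the Hermite basis with respect to $e^{-|x|^2}$, while $\stab_\rho$ is defined with the standard Gaussian. I will therefore use the paper's normalized coefficients $\tilde f(\alpha)$ in both Parseval and the stability formula; under a rescaling $x\mapsto x/\sqrt2$ both identities hold with the same coefficients. Finally, I note that the main-text expression $\log_{1/\rho}(1-\delta/\varepsilon)$ equals $-\log(1-\delta/\varepsilon)/\log\rho$, the negative of the correct threshold. I will prove the boxed form stated in the appendix, which is the correct one.
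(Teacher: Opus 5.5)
Your proposal is correct and follows essentially the same argument as the paper: Parseval together with $\stab_\rho(f)=\sum_k \rho^k W_k$, then $\rho^k\le 1$ below degree $T$ and $\rho^k\le\rho^T$ on the tail to get $\sum_{k\ge T}W_k\le \delta\|f\|_2^2/(1-\rho^T)$, and solving for $T$. Your side remarks are also accurate: the main-text threshold $\log_{1/\rho}(1-\delta/\varepsilon)$ has the wrong sign, and the paper's proof silently relies on the $e^{-|x|^2}$ normalization of the physicists' Hermite basis even though $X$ is standard Gaussian, which your rescaling $x\mapsto x/\sqrt{2}$ resolves.
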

\begin{proof}
Let us write the Hermite expansion of $f$ as
\[
f(x) = \sum_{\alpha \in \N^n} \hat{f}(\alpha) \Hermite_\alpha(x)
\]
where $\Hermite_\alpha(x) = \prod_{i=1}^n H_{\alpha_i}(x_i)$, and $H_k$ is the physicists' Hermite polynomial of degree $k$. The Hermite polynomials are orthogonal in $L^2(\gauss^n)$ with squared norm $\| \Hermite_\alpha \|^2 = 2^{|\alpha|} \alpha!$.

Parseval’s identity in this basis reads:
\[
\norm{f}_2^2 = \sum_{\alpha} \hat{f}(\alpha)^2 \cdot 2^{|\alpha|} \alpha!
\]
We define the Hermite level-$k$ weight of $f$ as:
\[
\Wk{k}(f) := \sum_{|\alpha| = k} \hat{f}(\alpha)^2 \cdot 2^{|\alpha|} \alpha!
\]
With this notation, Parseval's identity is simply $\norm{f}_2^2 = \sum_{k=0}^\infty \Wk{k}(f)$.

The Ornstein–Uhlenbeck semigroup $T_\rho$ acts on the Hermite expansion by 
$$
T_\rho f(x) = \sum_{\alpha} \rho^{|\alpha|} \hat{f}(\alpha) \Hermite_\alpha(x)
$$
The noise stability can therefore be written as:
\[
\begin{aligned}
\stab_\rho(f) &= \inner{f}{T_\rho f} \\
&= \sum_{\alpha} \rho^{|\alpha|} \hat{f}(\alpha)^2 \cdot 2^{|\alpha|} \alpha! \\
&= \sum_{k=0}^\infty \rho^k \Wk{k}(f)
\end{aligned}
\]
Fix a threshold $T \in \N$. We can split the sum at degree $T$:
\[
\stab_\rho(f) = \sum_{k < T} \rho^k \Wk{k}(f) + \sum_{k \geq T} \rho^k \Wk{k}(f)
\]
Since $\rho \in (0,1)$, we can upper-bound the terms in the sums by $\rho^k \leq 1$ for $k<T$ and $\rho^k \leq \rho^T$ for $k \geq T$. This gives:
\[
\begin{aligned}
\stab_\rho(f) &\leq \sum_{k < T} \Wk{k}(f) + \rho^T \sum_{k \geq T} \Wk{k}(f) \\
&= \left( \norm{f}_2^2 - \sum_{k \geq T} \Wk{k}(f) \right) + \rho^T \sum_{k \geq T} \Wk{k}(f) \\
&= \norm{f}_2^2 - (1 - \rho^T) \sum_{k \geq T} \Wk{k}(f)
\end{aligned}
\]
Rearranging the inequality to isolate the tail sum, we have:
\[
\sum_{k \geq T} \Wk{k}(f) \leq \frac{\norm{f}_2^2 - \stab_\rho(f)}{1 - \rho^T}
\]
Using the assumption that $\stab_\rho(f) \geq (1 - \delta) \norm{f}_2^2$, we can further bound the numerator:
\[
\sum_{k \geq T} \Wk{k}(f) \leq \frac{\norm{f}_2^2 - (1 - \delta) \norm{f}_2^2}{1 - \rho^T} = \frac{\delta \norm{f}_2^2}{1 - \rho^T}
\]
To ensure this tail sum is at most $\varepsilon \norm{f}_2^2$, it suffices to have:
\[
\frac{\delta \norm{f}_2^2}{1 - \rho^T} \leq \varepsilon \norm{f}_2^2
\]
Assuming $f$ is not the zero function, we can cancel $\norm{f}_2^2$ and solve for $T$:
\[
T \log \rho \leq \log\left(1 - \frac{\delta}{\varepsilon}\right)
\]
\[
T \geq \frac{\log\left(1 - \delta/\varepsilon\right)}{\log \rho}
\]
This bound is well-defined provided the argument of the logarithm is positive, which holds due to the assumption that $\delta < \varepsilon$.
\end{proof}

\section{Proof of \Cref{thm:stability-of-mlp}}
\label{sec:proof-of-mlp-stability}
\begin{theorem}[Stability of ReLU]
Let $(X, Y) \sim \mathcal{N}\left(0, \begin{bmatrix} 1 & \rho \\ \rho & 1 \end{bmatrix}\right)$ for $\rho \in (-1, 1)$. Then:
\[
\mathbb{E}[\text{ReLU}(X)\text{ReLU}(Y)] = \frac{1}{2\pi} \left( \sqrt{1 - \rho^2} + \rho(\pi - \arccos \rho) \right)
\]
\end{theorem}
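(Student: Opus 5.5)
We will compute the expectation by direct integration after rotating to polar coordinates. First we write the correlated pair in terms of independent standard Gaussians. Set $\rho = \cos\theta_0$ with $\theta_0 = \arccos\rho \in (0,\pi)$. Take $U = (U_1,U_2) \sim \mathcal{N}(0,I_2)$ and put $X = \inner{U}{a}$ and $Y = \inner{U}{b}$, where $a,b$ are unit vectors at angle $\theta_0$. Then $\mathrm{ReLU}(X)\mathrm{ReLU}(Y) = \inner{U}{a}\inner{U}{b}\,\1[\inner{U}{a}>0,\inner{U}{b}>0]$.

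Next we pass to polar coordinates $U = r(\cos\varphi,\sin\varphi)$, so that the radial and angular parts factor. The radial integral is $\int_0^\infty r^2\cdot r\, e^{-r^2/2}\,dr/(2\pi) = 2/(2\pi) = 1/\pi$. Taking $a = (1,0)$ and $b = (\cos\theta_0,\sin\theta_0)$, the angular integrand is $\cos\varphi\cos(\varphi-\theta_0)$. The region where both factors are positive is the arc $\varphi \in (\theta_0 - \pi/2,\ \pi/2)$, which has length $\pi-\theta_0$.

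We then evaluate the angular integral with the product-to-sum identity $\cos\varphi\cos(\varphi-\theta_0) = \tfrac12[\cos\theta_0 + \cos(2\varphi-\theta_0)]$. The constant term gives $\tfrac12(\pi-\theta_0)\cos\theta_0$. The oscillating term gives $\tfrac14[\sin(2\varphi-\theta_0)]$ evaluated between the endpoints, which equals $\tfrac14[\sin(\pi-\theta_0) - \sin(\theta_0-\pi)] = \tfrac12\sin\theta_0$.

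Combining the radial and angular parts, the expectation is $\frac1\pi\cdot\frac12\left(\sin\theta_0 + (\pi-\theta_0)\cos\theta_0\right)$. Substituting $\sin\theta_0 = \sqrt{1-\rho^2}$ and $\cos\theta_0 = \rho$ gives exactly the claimed formula. This holds for all $\rho\in(-1,1)$, since the arc $(\theta_0-\pi/2,\pi/2)$ is nonempty whenever $\theta_0 < \pi$.

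The only delicate step is identifying the correct angular region and its endpoints. As a sanity check we will verify the edge cases. At $\rho = 1$ the formula gives $\E[\mathrm{ReLU}(X)^2] = 1/2$, as it should. At $\rho = 0$ it gives $1/(2\pi) = (\E\,\mathrm{ReLU}(X))^2$, as it should for independent inputs.
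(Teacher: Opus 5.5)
Your proof is correct, and it takes a genuinely different route from the paper's.

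The paper integrates the correlated density directly. It passes to polar coordinates in the original $(x,y)$ plane over the first quadrant. There the quadratic form becomes $r^2(1-\rho\sin 2\theta)$, so the radial integral depends on the angle. What remains is $\frac{(1-\rho^2)^{3/2}}{\pi}\int_0^{\pi/2}\frac{\cos\theta\sin\theta}{(1-\rho\sin 2\theta)^2}\,d\theta$. The paper then substitutes $u=\tan\theta$, completes the square, and splits the result into two rational integrals with arctangent antiderivatives. It closes with the identity $\arctan(\rho/\sqrt{1-\rho^2})=\frac{\pi}{2}-\arccos\rho$.

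You whiten first, writing $(X,Y)$ as projections of an isotropic Gaussian onto two unit vectors at angle $\arccos\rho$. Rotational invariance then makes the radial factor the angle-independent constant $1/\pi$. All the $\rho$-dependence is confined to the angular part: the sector of length $\pi-\arccos\rho$ where both units are active, and the elementary integrand $\cos\varphi\cos(\varphi-\theta_0)$.

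What your route buys:
\begin{itemize}
\item It is much shorter, with no partial-fraction or arctangent bookkeeping.
\item It explains geometrically where the term $\rho(\pi-\arccos\rho)$ comes from, namely the angular measure of the cone on which both ReLUs are positive.
\item It extends immediately to any positively homogeneous activation, such as $\text{ReLU}(x)^k$ or the step function. Only the power of $r$ in the radial constant and the angular integrand change, which gives the arc-cosine kernel family the paper alludes to.
\end{itemize}

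The paper's computation needs no auxiliary construction and works with the stated density as given, but at the cost of considerably heavier calculus.
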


\begin{proof}
We compute the expectation:
\begin{align*}
I(\rho):=\mathbb{E}[\text{ReLU}(X)\text{ReLU}(Y)] = \mathbb{E}[\max(0,X)\max(0,Y)].
\end{align*}

The joint density function of $(X, Y)$ is
\[
f(x, y) = \frac{1}{2\pi \sqrt{1 - \rho^2}} \exp\left( -\frac{x^2 - 2\rho xy + y^2}{2(1 - \rho^2)} \right).
\]

Thus, we can compute:
\[
I(\rho) = \int_0^\infty \int_0^\infty xy \cdot f(x, y) \, dy \, dx.
\]

We perform a change to polar coordinates $x = r \cos \theta, \quad y = r \sin \theta, \quad r \in [0, \infty), \theta \in [0, \pi/2]$ so that $x, y \geq 0$. Then the Jacobian is $dx\,dy = r \, dr\, d\theta$ and the exponent becomes:
\begin{align*}
\frac{x^2 - 2\rho xy + y^2}{2(1 - \rho^2)}
&= \frac{r^2 (\cos^2 \theta - 2\rho \cos \theta \sin \theta + \sin^2 \theta)}{2(1 - \rho^2)} \tag{Substitute polar coordinates} \\
&= \frac{r^2 (1 - \rho \sin(2\theta))}{2(1 - \rho^2)} \tag{$\sin(2\theta) = 2\sin\theta \cos\theta$}
\end{align*}

So the integral becomes:
\begin{align*}
I(\rho)
&= \int_0^{\pi/2} \int_0^\infty r^2 \cos \theta \sin \theta \cdot \frac{1}{2\pi \sqrt{1 - \rho^2}} \exp\left( -\frac{r^2 (1 - \rho \sin(2\theta))}{2(1 - \rho^2)} \right) r \, dr\, d\theta\\
&= \frac{1}{2\pi \sqrt{1 - \rho^2}} \int_0^{\pi/2} \cos \theta \sin \theta \int_0^\infty r^3 \exp\left( -\frac{r^2 (1 - \rho \sin(2\theta))}{2(1 - \rho^2)} \right) dr\, d\theta.
\end{align*}

Now let
\[
a(\theta) = \frac{1 - \rho \sin(2\theta)}{2(1 - \rho^2)}.
\]

Use the substitution \( u = r^2 \Rightarrow du = 2r\,dr \), so:
\begin{align*}
\int_0^\infty r^3 e^{-a(\theta) r^2} \, dr
&= \frac{1}{2} \int_0^\infty u e^{-a(\theta) u} \, du
= \frac{1}{2} \cdot \frac{1}{a(\theta)^2} \tag{Standard integral: $\int_0^\infty u e^{-a u} du = \frac{1}{a^2}$}
\end{align*}

So we obtain:
\begin{align*}
I(\rho)
&= \frac{1}{2\pi \sqrt{1 - \rho^2}} \int_0^{\pi/2} \cos \theta \sin \theta \cdot \frac{1}{2 a(\theta)^2} \, d\theta \\
&= \frac{1}{4\pi \sqrt{1 - \rho^2}} \int_0^{\pi/2} \frac{\cos \theta \sin \theta}{a(\theta)^2} \, d\theta.
\end{align*}

Now recall that:
\begin{align*}
a(\theta) = \frac{1 - \rho \sin(2\theta)}{2(1 - \rho^2)} \Rightarrow a(\theta)^2 = \frac{(1 - \rho \sin(2\theta))^2}{4(1 - \rho^2)^2}
\end{align*}

So:
\begin{align*}
\frac{1}{a(\theta)^2} = \frac{4(1 - \rho^2)^2}{(1 - \rho \sin(2\theta))^2}
\end{align*}

Plugging in:
\begin{align*}
I(\rho)
&= \frac{1}{4\pi \sqrt{1 - \rho^2}} \int_0^{\pi/2} \cos \theta \sin \theta \cdot \frac{4(1 - \rho^2)^2}{(1 - \rho \sin(2\theta))^2} \, d\theta \\
&= \frac{(1 - \rho^2)^{3/2}}{\pi} \int_0^{\pi/2} \frac{\cos \theta \sin \theta}{(1 - \rho \sin(2\theta))^2} \, d\theta.
\end{align*}

Now make the substitution \( u = \tan \theta \), so:
\[
\sin(2\theta) = \frac{2u}{1 + u^2}, \quad \cos \theta \sin \theta \, d\theta = \frac{u}{(1 + u^2)^2} \, du
\]

Thus:
\[
\int_0^{\pi/2} \frac{\cos \theta \sin \theta}{(1 - \rho \sin(2\theta))^2} \, d\theta
= \int_0^\infty \frac{u}{(1 + u^2)^2 \left(1 - \rho \cdot \frac{2u}{1 + u^2}\right)^2} \, du
= \int_0^\infty \frac{u}{\left( u^2 + 1 - 2\rho u \right)^2} \, du.
\]

This integral can be evaluated by completing the square:
\[
u^2 - 2\rho u + 1 = (u - \rho)^2 + (1 - \rho^2)
\]

So, now consider the integral
\[
I := \int_0^\infty \frac{u}{\big((u - \rho)^2 + (1 - \rho^2)\big)^2} \, du,
\]
where we define
\[
a := \sqrt{1 - \rho^2}.
\]

We rewrite the numerator as
\begin{align*}
I &= \int_0^\infty \frac{(u - \rho) + \rho}{\big((u - \rho)^2 + a^2\big)^2} \, du \\
&= \int_0^\infty \frac{u - \rho}{\big((u - \rho)^2 + a^2\big)^2} \, du + \rho \int_0^\infty \frac{1}{\big((u - \rho)^2 + a^2\big)^2} \, du \\
&= I_1 + \rho I_2.
\end{align*}

\paragraph{Integral \(I_1\):}
Substitute \( v = u - \rho \), then \( du = dv \), and the integration limits become \( v = -\rho \) to \( \infty \):
\[
I_1 = \int_{-\rho}^\infty \frac{v}{(v^2 + a^2)^2} \, dv.
\]

Using the antiderivative
\[
\frac{d}{dv}\left(\frac{1}{v^2 + a^2}\right) = -\frac{2v}{(v^2 + a^2)^2},
\]
we have
\[
\int \frac{v}{(v^2 + a^2)^2} \, dv = -\frac{1}{2(v^2 + a^2)} + C.
\]

Evaluating at the limits:
\begin{align*}
I_1 &= \left[-\frac{1}{2(v^2 + a^2)}\right]_{v=-\rho}^{v \to \infty} = \lim_{M \to \infty} \left(-\frac{1}{2(M^2 + a^2)} + \frac{1}{2(\rho^2 + a^2)}\right) \\
&= 0 + \frac{1}{2(\rho^2 + a^2)} = \frac{1}{2 \cdot 1} = \frac{1}{2},
\end{align*}
since \( \rho^2 + a^2 = \rho^2 + (1 - \rho^2) = 1 \).

\paragraph{Integral \(I_2\):}
Again substitute \( v = u - \rho \), so
\[
I_2 = \int_{-\rho}^\infty \frac{1}{(v^2 + a^2)^2} \, dv.
\]

The antiderivative is known:
\[
\int \frac{dv}{(v^2 + a^2)^2} = \frac{v}{2 a^2 (v^2 + a^2)} + \frac{1}{2 a^3} \arctan\left(\frac{v}{a}\right) + C.
\]

Evaluating at the limits:
\begin{align*}
I_2 &= \lim_{M \to \infty} \left( \frac{M}{2 a^2 (M^2 + a^2)} + \frac{1}{2 a^3} \arctan\left(\frac{M}{a}\right) \right) \\
&\quad - \left( \frac{-\rho}{2 a^2 (\rho^2 + a^2)} + \frac{1}{2 a^3} \arctan\left(\frac{-\rho}{a}\right) \right).
\end{align*}

Since
\[
\lim_{M \to \infty} \frac{M}{2 a^2 (M^2 + a^2)} = 0, \quad \text{and} \quad \lim_{M \to \infty} \arctan\left(\frac{M}{a}\right) = \frac{\pi}{2},
\]
we get
\[
I_2 = \frac{\pi}{4 a^3} + \frac{\rho}{2 a^2} - \frac{1}{2 a^3} \arctan\left(-\frac{\rho}{a}\right).
\]

Using the oddness of arctangent,
\[
\arctan(-x) = -\arctan(x),
\]
we rewrite
\[
I_2 = \frac{\pi}{4 a^3} + \frac{\rho}{2 a^2} + \frac{1}{2 a^3} \arctan\left(\frac{\rho}{a}\right).
\]

\paragraph{Combining \(I_1\) and \(I_2\):}
\[
I = I_1 + \rho I_2 = \frac{1}{2} + \rho \left( \frac{\pi}{4 a^3} + \frac{\rho}{2 a^2} + \frac{1}{2 a^3} \arctan\left(\frac{\rho}{a}\right) \right).
\]

Finally, recall the trigonometric identity:
\[
\arctan\left(\frac{\rho}{a}\right) = \frac{\pi}{2} - \arccos \rho,
\]
which allows for the final form after multiplying back by the outer constant \( (1 - \rho^2)^{3/2}/\pi \), we get:
\begin{align*}
\mathbb{E}[\text{ReLU}(X)\text{ReLU}(Y)] = \frac{1}{2\pi} \left( \sqrt{1 - \rho^2} + \rho(\pi - \arccos \rho) \right).
\end{align*}
\end{proof}

\subsection{Approximating with a quadratic}
The expression we gave above for the stability propagation is hard to evaluate. We can approximate it with a quadratic really well:

\begin{lemma}[Quadratic Approximation of $f(\rho)$]
Define the function
\[
s(\rho) := \frac{1}{2\pi} \left( \sqrt{1 - \rho^2} + \rho \big(\pi - \arccos \rho \big) \right)
\]
for \(\rho \in (-1,1)\). Then the quadratic approximation of \(s\) around \(\rho = 0\) is
\[
s(\rho) = \frac{1}{2\pi} + \frac{1}{4} \rho + \frac{1}{4\pi} \rho^2 + o(\rho^2).
\]
\end{lemma}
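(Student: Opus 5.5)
The plan is to Taylor expand each of the two pieces of $s(\rho)$ separately around $\rho = 0$ and then add the results. Since $s$ is smooth (indeed real-analytic) on $(-1,1)$, this gives the expansion with remainder $o(\rho^2)$, in fact $O(\rho^3)$.

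\textbf{First term.} For the square root, the binomial series gives
\[
\sqrt{1-\rho^2} = 1 - \tfrac{1}{2}\rho^2 + O(\rho^4).
\]

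\textbf{Second term.} For $\rho(\pi - \arccos\rho)$, use the identity $\arccos\rho = \tfrac{\pi}{2} - \arcsin\rho$, which the paper already used in the proof of \Cref{thm:stability-of-mlp} in the equivalent form $\arctan(\rho/\sqrt{1-\rho^2}) = \tfrac{\pi}{2} - \arccos\rho$. This gives $\pi - \arccos\rho = \tfrac{\pi}{2} + \arcsin\rho$. Since $\arcsin\rho = \rho + O(\rho^3)$, we get
\[
\rho(\pi - \arccos\rho) = \tfrac{\pi}{2}\rho + \rho^2 + O(\rho^4).
\]

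\textbf{Combining.} Adding the two expansions and multiplying by $\tfrac{1}{2\pi}$:
\[
s(\rho) = \tfrac{1}{2\pi}\Bigl(1 + \tfrac{\pi}{2}\rho + \tfrac{1}{2}\rho^2\Bigr) + O(\rho^3) = \tfrac{1}{2\pi} + \tfrac{1}{4}\rho + \tfrac{1}{4\pi}\rho^2 + O(\rho^3).
\]
This matches the claimed coefficients.

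\textbf{Cross-check via derivatives.} One can instead differentiate directly. We have
\[
s'(\rho) = \tfrac{1}{2\pi}\Bigl(-\tfrac{\rho}{\sqrt{1-\rho^2}} + \pi - \arccos\rho + \tfrac{\rho}{\sqrt{1-\rho^2}}\Bigr) = \tfrac{1}{2\pi}(\pi - \arccos\rho).
\]
This gives $s'(0) = \tfrac{1}{4}$ and $s''(\rho) = \tfrac{1}{2\pi\sqrt{1-\rho^2}}$, so $s''(0)/2 = \tfrac{1}{4\pi}$. The remainder is controlled by the bounded third derivative near $0$. The cancellation in $s'$ also shows that $s$ is the arc-cosine kernel whose derivative is the ReLU step kernel.

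\textbf{Main obstacle.} There is no substantive difficulty. The only care needed is using the correct branch identity for $\arccos$ and tracking the sign of the $\rho^2$ contributions: the $-\tfrac{1}{2}\rho^2$ from the square root combines with the $+\rho^2$ from the arcsine term to give the net $+\tfrac{1}{2}\rho^2$.
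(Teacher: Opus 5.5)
Your proof is correct. Your derivative ``cross-check'' is exactly the paper's argument: the paper computes $s(0)=\frac{1}{2\pi}$, uses the same cancellation to get $s'(\rho)=\frac{1}{2\pi}(\pi-\arccos\rho)$ and $s''(0)=\frac{1}{2\pi}$, and reads off the Taylor coefficients.

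Your primary route is a mild but genuine variant. You rewrite $\pi-\arccos\rho=\frac{\pi}{2}+\arcsin\rho$ and expand $\sqrt{1-\rho^2}$ and $\rho\arcsin\rho$ from their known Maclaurin series. What this buys is an explicit remainder, which the paper leaves implicit (it only writes $\approx$). In fact you can sharpen your stated $O(\rho^3)$: since $s(\rho)-\frac{\rho}{4}=\frac{1}{2\pi}\bigl(\sqrt{1-\rho^2}+\rho\arcsin\rho\bigr)$ is even in $\rho$, the error after the quadratic term is $O(\rho^4)$. This is consistent with $s'''(\rho)=\frac{\rho}{2\pi(1-\rho^2)^{3/2}}$ vanishing at $0$.
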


\begin{proof}
First, evaluate the function at zero:
\[
s(0) = \frac{1}{2\pi} \left( \sqrt{1 - 0^2} + 0 \cdot (\pi - \arccos 0) \right) = \frac{1}{2\pi}.
\]

Next, compute the first derivative:
\[
s'(\rho) = \frac{1}{2\pi} \left( - \frac{\rho}{\sqrt{1-\rho^2}} + \pi - \arccos \rho - \rho \cdot \frac{d}{d\rho} \arccos \rho \right).
\]
Using \(\frac{d}{d\rho} \arccos \rho = -\frac{1}{\sqrt{1-\rho^2}}\), this simplifies to
\[
s'(\rho) = \frac{1}{2\pi} \left( \pi - \arccos \rho \right).
\]
Evaluating at \(\rho=0\):
\[
s'(0) = \frac{1}{2\pi} \left( \pi - \frac{\pi}{2} \right) = \frac{1}{4}.
\]

Now compute the second derivative:
\[
s''(\rho) = \frac{1}{2\pi} \frac{d}{d\rho} \left( \pi - \arccos \rho \right) = \frac{1}{2\pi} \frac{1}{\sqrt{1-\rho^2}},
\]
and at \(\rho=0\):
\[
s''(0) = \frac{1}{2\pi}.
\]

Therefore, the quadratic approximation at \(\rho=0\) is
\[
s(\rho) \approx s(0) + S'(0) \rho + \frac{s''(0)}{2} \rho^2 = \frac{1}{2\pi} + \frac{1}{4} \rho + \frac{1}{4\pi} \rho^2.
\]
\end{proof}

\begin{figure}[h]
    \centering
    \includegraphics[width=0.75\linewidth]{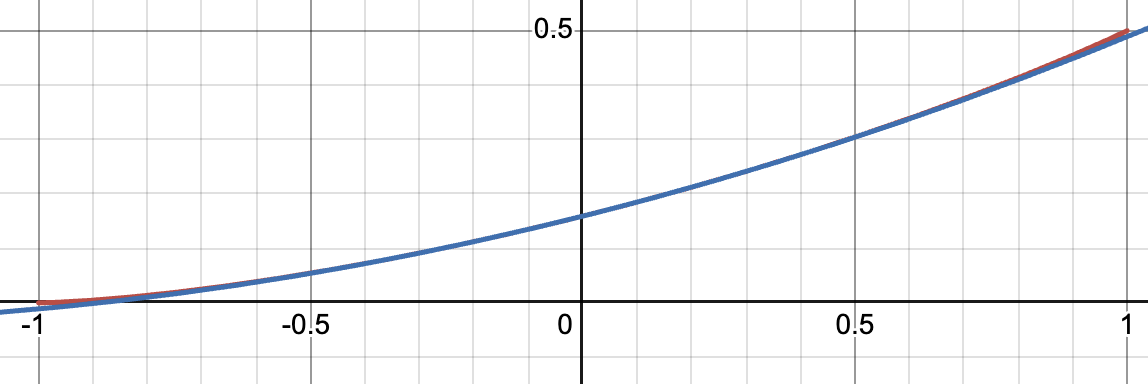}
    \caption{Approximating the stability of MLP with a quadratic}
\end{figure}

\section{Proof of \Cref{thm:stability-of-attention-linear-case}}
\subsection{Preliminaries}
We will make use of the following mathematical tools in the proofs below:
\begin{lemma}[Laurant-Massart Concentration Bounds for $\chi^2_d$ Random Variables]
    \label{lemma:laurent-massart}
    Let $Z$ be a chi-squared random variable with $d$ degrees of freedom. Then:
    $$
        \Pr\left[|Z-d| \geq 2\sqrt{du}+2u\right] \leq e^{-u}
    $$
\end{lemma}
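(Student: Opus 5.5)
The plan is to reduce the two-sided statement to the classical one-sided Laurent--Massart inequality for the upper tail, and to dispose of the lower tail by a support argument rather than a union bound. Avoiding the union bound is what keeps the constant at $e^{-u}$ instead of $2e^{-u}$.

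\emph{Step 1 (upper tail).} Write $Z=\sum_{i=1}^d g_i^2$ with $g_i$ i.i.d.\ standard normal. For $0<\lambda<1/2$ the log-moment generating function of $Z-d$ is $-\tfrac d2\log(1-2\lambda)-\lambda d$, and it is bounded by $\tfrac{d\lambda^2}{1-2\lambda}$. Chernoff's bound, optimized over $\lambda$, then gives $\Pr[Z-d\ge 2\sqrt{du}+2u]\le e^{-u}$. Equivalently, with $s=\sqrt{u/d}$, one checks $\tfrac d2\bigl(y-\log(1+y)\bigr)\ge u$ for $y=2s+2s^2$.

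\emph{Step 2 (lower tail, vacuous regime).} Since $Z\ge 0$ we always have $d-Z\le d$. So the event $d-Z\ge 2\sqrt{du}+2u$ has probability zero as soon as $2\sqrt{du}+2u>d$. Solving the quadratic in $\sqrt u$, this happens exactly when $u>\tfrac{(\sqrt3-1)^2}{4}\,d\approx 0.134\,d$. In that regime $\{|Z-d|\ge t\}$ coincides with the upper-tail event up to a null set, and Step 1 yields the stated bound $e^{-u}$ with no loss.

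\emph{Step 3 (main obstacle: $u\lesssim 0.134\,d$).} Here both tails are live. The sharp Chernoff exponents are $\psi_+=u+\tfrac23 u^{3/2}d^{-1/2}+\dots$ and $\psi_-=u+2u^{3/2}d^{-1/2}+\dots$. Their sum $e^{-\psi_+}+e^{-\psi_-}$ does not fall below $e^{-u}$ when $u^{3/2}/\sqrt d$ is small. I would first try to beat this with a direct Gaussian comparison. However, the CLT shows the obstruction is genuine: for fixed small $u$ and $d\to\infty$,
\[
\Pr\bigl[|Z-d|\ge 2\sqrt{du}\bigr]\to\Pr\bigl[|N(0,1)|\ge\sqrt{2u}\bigr]=1-\Theta(\sqrt u),
\]
which exceeds $e^{-u}=1-\Theta(u)$.

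So the plan proves the lemma exactly as worded in the regime $u\ge\tfrac{(\sqrt3-1)^2}{4}d$. For smaller $u$, the argument will instead deliver the one-sided form $\Pr[Z-d\ge 2\sqrt{du}+2u]\le e^{-u}$ together with $\Pr[d-Z\ge 2\sqrt{du}]\le e^{-u}$. Any application of the lemma in the $d\to\infty$ analysis of \Cref{thm:stability-of-attention-linear-case} would have to use that form.
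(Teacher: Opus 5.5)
Your Steps 1 and 2 are correct. The sub-gamma Chernoff bound gives the upper tail exactly as in Laurent--Massart. For $u\ge(1-\sqrt3/2)d$ the lower-tail event is null, so the two-sided bound holds there. The genuine gap is Step 3: the claimed obstruction rests on a reversed inequality.

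For small $u$ one has $\sqrt u\gg u$, so $1-\Theta(\sqrt u)$ lies \emph{below} $1-\Theta(u)$. Precisely, $\Pr[|\mathcal{N}(0,1)|\ge\sqrt{2u}]=1-\tfrac{2}{\sqrt\pi}\sqrt u+O(u^{3/2})$, while $e^{-u}=1-u+O(u^2)$. In fact the Gaussian limit obeys the lemma for every $u\ge 0$. By the Mills-ratio bound $x(1-\Phi(x))\le\phi(x)$, the derivative of $2(1-\Phi(x))e^{x^2/2}$ equals $2e^{x^2/2}\bigl(x(1-\Phi(x))-\phi(x)\bigr)\le 0$. So this function decreases from its value $1$ at $x=0$, which gives $\Pr[|\mathcal{N}(0,1)|\ge\sqrt{2u}]\le e^{-u}$.

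Finite-$d$ numerics show the same slack. For $d=1,u=0.05$ the two-sided probability is about $0.71$, against $e^{-0.05}\approx 0.95$. For $d=4,u=0.2$ it is about $0.42$, against $e^{-0.2}\approx 0.82$. So the CLT furnishes no counterexample. In the main regime $u<(1-\sqrt3/2)d$ your proposal neither proves nor refutes the displayed inequality.

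What your exponent comparison actually shows is narrower: a union of two Chernoff bounds is too lossy in that regime. Each Chernoff bound is close to $e^{-u}$, whereas the true tails carry an extra polynomial factor (roughly of order $1/\sqrt u$ in the CLT regime). For small $u$, the anti-concentration $\Pr[|Z-d|<t]\gtrsim\sqrt u$ beats $1-e^{-u}\le u$. A complete argument has to capture these effects non-asymptotically. One route would be sharp Mills-ratio-type tail bounds for the Gamma law, together with a separate density estimate near $u=0$. Your proposal contains no such ingredient.

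The paper gives no proof of this lemma; it simply quotes it as a known tool attributed to Laurent--Massart. Their published result is exactly the pair of one-sided bounds you fall back on: upper tail at $2\sqrt{du}+2u$, lower tail at $2\sqrt{du}$. That pair only yields $2e^{-u}$ for the two-sided event. The paper's sole application is $\Pr[23d/32\le\|x_i\|_2^2\le 41d/32]\ge 1-2e^{-d/64}$ with $u=d/64$, which already carries that factor $2$. So your one-sided version is all the paper actually needs. As a proof of the displayed inequality, however, your proposal is incomplete, and the assertion that the lemma fails for small $u$ should be withdrawn.
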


\begin{lemma}[Concentration of Gaussian Random Variable]
    \label{lemma:gaussian-concentration}
    Let $Z \sim \mathcal{N}(0,\sigma^2)$. Then:
    $$
        \Pr\left[Z \geq \alpha\right] \leq e^{-\alpha^2/(2\sigma^2)}
    $$
\end{lemma}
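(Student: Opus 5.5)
We would prove \Cref{lemma:gaussian-concentration} by the standard Chernoff (exponential moment) argument, with the implicit hypothesis $\alpha \geq 0$ made explicit. For $\alpha<0$ the claim is false in general: then $\Pr[Z\ge\alpha]\ge 1/2$, while $e^{-\alpha^2/(2\sigma^2)}$ can be arbitrarily small. This is harmless for the paper, since the lemma is only invoked for positive thresholds in the concentration arguments below. We also assume $\sigma>0$; the degenerate case $\sigma = 0$ is trivial, with $Z\equiv 0$ and the right-hand side read as $0$ for $\alpha>0$ and $1$ for $\alpha = 0$.

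First we compute the moment generating function of $Z$. Completing the square in the Gaussian integral gives, for every $t\in\R$,
$$\E[e^{tZ}] = \frac{1}{\sqrt{2\pi}\sigma}\int_{\R} e^{tz - z^2/(2\sigma^2)}\,dz = e^{t^2\sigma^2/2}.$$
Next, for any $t>0$ the map $z\mapsto e^{tz}$ is increasing and positive, so Markov's inequality yields
$$\Pr[Z\ge\alpha] = \Pr\left[e^{tZ}\ge e^{t\alpha}\right] \le e^{-t\alpha}\,\E[e^{tZ}] = \exp\left(\frac{t^2\sigma^2}{2} - t\alpha\right).$$

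Finally we optimize over $t>0$. The exponent is a convex quadratic in $t$ minimized at $t^\ast=\alpha/\sigma^2$, which is admissible when $\alpha>0$, and its minimum value is $-\alpha^2/(2\sigma^2)$. This gives the claimed bound. The boundary case $\alpha=0$ is immediate, since the right-hand side equals $1$.

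There is no real obstacle here. The only point needing care is the sign restriction on $\alpha$ noted above, which ensures the optimizing $t^\ast$ is positive, so that the monotonicity step in Markov's inequality is valid.
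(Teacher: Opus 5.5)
Your proof is correct. The paper gives no proof of this lemma; it lists it as a standard tool, and your Chernoff/MGF argument is exactly the standard derivation it implicitly relies on.

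Your added hypothesis $\alpha \ge 0$ is a genuine correction, since the statement as written fails for sufficiently negative $\alpha$ (e.g.\ $\alpha = -2\sigma$). It is harmless for the paper: the lemma's only use, in the identity-case attention proof, applies it to $\langle x_i, x_j\rangle \mid x_i \sim \mathcal{N}(0,\|x_i\|_2^2)$ with threshold $\alpha = \|x_i\|_2^2/2 > 0$.
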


\begin{proposition}[Weighted Cauchy-Schwarz Inequality]
    Let $a_1,...,a_n$ be non-negative constants and $y_1,...,y_n \in \R^d$. It is true that:
    \begin{align*}
        \left|\left|\sum\limits_{j=1}^n a_j y_j\right|\right|_2 \leq \left(\sum\limits_{j=1}^n a_j\right)^{1/2}\cdot \left(\sum\limits_{j=1}^n a_j ||y_j||_2^2\right)^{1/2}
    \end{align*}
\end{proposition}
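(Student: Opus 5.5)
The weighted Cauchy--Schwarz inequality follows from the ordinary Cauchy--Schwarz inequality once each term is split as $a_j y_j = \sqrt{a_j}\cdot(\sqrt{a_j}\,y_j)$; the nonnegativity of the $a_j$ is exactly what makes the square roots legitimate.

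First apply the triangle inequality for $\|\cdot\|_2$ on $\R^d$:
\begin{align*}
\left\|\sum_{j=1}^n a_j y_j\right\|_2 \le \sum_{j=1}^n a_j \|y_j\|_2 ,
\end{align*}
which uses $a_j\ge 0$, so that $\|a_j y_j\|_2 = a_j\|y_j\|_2$.

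Next, write each scalar term as $a_j\|y_j\|_2 = \sqrt{a_j}\cdot\bigl(\sqrt{a_j}\,\|y_j\|_2\bigr)$. Apply the scalar Cauchy--Schwarz inequality in $\R^n$ to the vectors $u=(\sqrt{a_1},\dots,\sqrt{a_n})$ and $v=(\sqrt{a_1}\|y_1\|_2,\dots,\sqrt{a_n}\|y_n\|_2)$. This gives
\begin{align*}
\sum_{j=1}^n a_j\|y_j\|_2 \le \Bigl(\sum_{j=1}^n a_j\Bigr)^{1/2}\Bigl(\sum_{j=1}^n a_j\|y_j\|_2^2\Bigr)^{1/2}.
\end{align*}
Chaining the two displays yields the claim.

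There is no real obstacle; the only point requiring care is the nonnegativity of the $a_j$. An alternative route is Jensen's inequality for the convex function $\|\cdot\|_2^2$ applied to the probability weights $a_j/\sum_k a_k$, treating the degenerate case $\sum_k a_k = 0$ separately (both sides are then zero). This gives the squared form of the inequality directly, and taking square roots gives the statement.
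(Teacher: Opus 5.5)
Your proof is correct and follows the same route as the paper: the triangle inequality, then scalar Cauchy--Schwarz with the split $a_j\|y_j\|_2 = \sqrt{a_j}\cdot(\sqrt{a_j}\|y_j\|_2)$. The only difference is that the paper carries out these steps on the squared norm.
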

\begin{proof} 
    First, by the triangle inequality for the $\ell_2$-norm and the fact that $a_j \geq 0$, we have:
\[
    \left\|\sum_{j=1}^n a_j y_j\right\|_2^2 \leq \left(\sum_{j=1}^n \|a_j y_j\|_2\right)^2 = \left(\sum_{j=1}^n a_j \|y_j\|_2\right)^2 = \left(\sum_{j=1}^n \sqrt{a_j}\cdot\sqrt{a_j}\cdot  \|y_j\|_2\right)^2
\]
Next, we apply the Cauchy-Schwarz inequality to the term on the right-hand side, to obtain:
\begin{align*}
    \left(\sum_{j=1}^n \sqrt{a_j} \cdot (\sqrt{a_j} \|y_j\|_2)\right)^2 \leq \left(\sum_{j=1}^n a_j\right) \left(\sum_{j=1}^n a_j \|y_j\|_2^2\right)
\end{align*}
which finalizes the proof.
\end{proof}

\subsection{Proving the Theorem}
\label{sec:proof-of-linear-case}
\begin{theorem}[Stability of Attention Layer, $W_Q W_K^T = I_d$]
Let $X \sim \mathcal{N}(0,I_{n\times d})$ and $Y = \rho X + Z\sqrt{1-\rho^2}$ for $Z \sim \mathcal{N}(0,I_{n\times d})\perp X$. 
Let $f:\R^{n\times d} \to \R^{n\times d}$ where $f(X) = \sigma(XX^T)\cdot XW_V$, $W_V \in \R^{d\times d}$ and $\sigma$ is the row-softmax function. 
Then for all $i \in [n]$ and $j \in [d]$ we have that:
$$
\lim\limits_{d\to\infty}\mathbb{E}[f(X)_{ij}f(Y)_{ij}] = \rho ||(W_V)_{:,j}||_2^2
$$
\end{theorem}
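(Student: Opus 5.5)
The plan is to show that, as $d\to\infty$, the attention matrix $A_X:=\sigma(XX^T)$ converges to $I_n$ fast enough that $f(X)$ can be replaced by the purely linear map $X\mapsto XW_V$. For that linear map the stability is an exact computation. Write $x_k, y_k\in\R^d$ for the rows of $X,Y$ and $w:=(W_V)_{:,j}$. Then $(XW_V)_{ij}=\inner{x_i}{w}$ and $(YW_V)_{ij}=\inner{y_i}{w}$. Since $y_i=\rho x_i+\sqrt{1-\rho^2}z_i$ with $z_i$ independent of $x_i$, we get $\E[\inner{x_i}{w}\inner{y_i}{w}]=\rho\,\E[\inner{x_i}{w}^2]=\rho\norm{w}_2^2$. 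So it suffices to show
\[
\E\bigl[f(X)_{ij}f(Y)_{ij}\bigr]-\E\bigl[\inner{x_i}{w}\inner{y_i}{w}\bigr]\to 0 .
\]
I treat $n$ and $\norm{w}_2$ as fixed while $d$ grows, so the columns of $W_V$ are normalized independently of $d$.

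\textbf{Step 1: $A_X\to I_n$ with high probability.} Row $i$ of $A_X$ has entries $e^{\inner{x_i}{x_k}}/\sum_l e^{\inner{x_i}{x_l}}$. The diagonal logit $\norm{x_i}_2^2$ is $\chi^2_d$. By \Cref{lemma:laurent-massart} with $u=\log d$, it exceeds $d-2\sqrt{d\log d}-2\log d$ with probability $1-1/d$. For $k\neq i$, conditionally on $x_i$ the logit $\inner{x_i}{x_k}$ is $\mathcal N(0,\norm{x_i}^2)$. By \Cref{lemma:gaussian-concentration}, it is at most $2\sqrt{d\log d}$ (say) with probability $1-O(d^{-1})$ on the event $\norm{x_i}^2\le 2d$. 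A union bound over the $n$ entries of row $i$ gives an event $E_d$ with $\Pr[E_d^c]=O(n/d)$. On $E_d$,
\[
1-(A_X)_{ii}\le \sum_{k\ne i}(A_X)_{ik}\le (n-1)\,e^{-d+O(\sqrt{d\log d})}.
\]
The same holds for $A_Y$, because $Y$ is itself distributed as $\mathcal N(0,I_{n\times d})$.

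\textbf{Step 2: $L^2$ control of the error.} Write $f(X)_{ij}=\inner{x_i}{w}+\Delta_X$ with
\[
\Delta_X=\sum_{k}\bigl((A_X)_{ik}-\delta_{ik}\bigr)\inner{x_k}{w}.
\]
The rows of $A_X$ are probability vectors, so the weighted Cauchy--Schwarz proposition gives $|f(X)_{ij}|\le \max_k|\inner{x_k}{w}|$ deterministically. Hence $|\Delta_X|\le 2\max_k|\inner{x_k}{w}|=:2M_X$. Here each $\inner{x_k}{w}\sim\mathcal N(0,\norm{w}^2)$, so every moment of $M_X$ is bounded uniformly in $d$.

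Expand $f(X)_{ij}f(Y)_{ij}-\inner{x_i}{w}\inner{y_i}{w}$ into three cross terms, each a product involving $\Delta_X$ or $\Delta_Y$. I bound each by Cauchy--Schwarz, e.g. $|\E[\Delta_X \inner{y_i}{w}]|\le \norm{\Delta_X}_{L^2}\norm{w}$, so it is enough to show $\E[\Delta_X^2]\to 0$. Split on $E_d$. On $E_d$, $|\Delta_X|\le 2(n-1)e^{-d/2}M_X$, so that contribution vanishes. On $E_d^c$, Cauchy--Schwarz again gives $\E[\Delta_X^2\mathbf 1_{E_d^c}]\le 4\,\E[M_X^4]^{1/2}\Pr[E_d^c]^{1/2}=O(\sqrt{n/d})\to 0$.

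\textbf{Main obstacle.} The delicate part is Step 2, not Step 1. Convergence $A_X\to I_n$ in probability alone does not yield convergence of expectations. The deterministic bound $|f(X)_{ij}|\le\max_k|\inner{x_k}{w}|$ supplies the uniform integrability needed, and it relies on $\norm{w}_2$ not growing with $d$. If the normalization of $W_V$ is instead allowed to scale with $d$, the plan still works provided every statement is divided through by $\norm{w}_2^2$, since all bounds are homogeneous in $w$.
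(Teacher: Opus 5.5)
Your proposal is correct and follows the same route as the paper: show $\sigma(XX^T)\to I_n$ with high probability using the $\chi^2_d$ bound on the diagonal logit and a Gaussian tail plus union bound on the off-diagonal logits, control the $L^2$ error of replacing $f(X)_{ij}$ by $\inner{x_i}{w}$, bound the three cross terms by Cauchy--Schwarz, and finish with the linear computation $\rho\norm{w}_2^2$. Your version is tidier in two places: your $\Delta_X$ correctly includes the diagonal term $((A_X)_{ii}-1)\inner{x_i}{w}$, which the paper's error term $E_X$ drops, and your deterministic bound $|f(X)_{ij}|\le\max_k|\inner{x_k}{w}|$, combined with the split on $E_d$, gives the needed uniform integrability while only requiring the single column norm $\norm{w}_2$ to stay bounded, rather than a full-matrix quantity like $\E\norm{x_jW_V}_2^2$.
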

\begin{proof}
    Let $x_1,...,x_n \in \R^d$ be the rows of $X$. We first show that $W := \sigma(XX^T)$ converges to $I_n$ in probability as $d \to \infty$ with exponential tails in $d$.

    First, fix some $i \in [n]$ and let $\sigma^2 := ||x_i||_2^2$. We know that $\sigma^2 \sim \chi_d^2$ and so by \Cref{lemma:laurent-massart} we have that:
    \begin{align*}
        \Pr\left[\frac{23d}{32}\leq \sigma^2 \leq \frac{41d}{32}\right] \geq 1-2e^{-d/64}
    \end{align*}
    Let $A$ be the likely event defined above. Let us condition on $A$ happening.

    Now, condition on $x_i$ and let $V_j := \langle x_i, x_j \rangle -\sigma^2$. Since $x_i \perp x_j$ we have that $V_j \sim \mathcal{N}(-\sigma^2,\sigma^2) \mid x_i$ and so by \Cref{lemma:gaussian-concentration} we have:
    $$
        \Pr\left[V_j \geq -\frac{\sigma^2}{2}\mid x_i\right] = \Pr\left[\langle x_i, x_j\rangle \geq \frac{\sigma^2}{2}\mid x_i\right] \leq e^{-\frac{\sigma^4/4}{2\sigma^2}} = e^{-\sigma^2/8} \leq e^{-23d/256}
    $$

    By a union bound, we have that:
    \begin{align*}
        \Pr\left[\exists j\neq i\,\text{ s.t. }\, V_j \geq -\frac{\sigma^2}{2}\mid x_i,A\right] \leq (n-1)e^{-\frac{23}{256}d}
    \end{align*}
    Removing the conditioning on $x_i$ and $A$, we get that:
    \begin{align*}
        \Pr\left[\exists j\neq i\,\text{ s.t. }\, V_j \geq -\frac{\sigma^2}{2}\right] \leq \Pr[\neg A] + (n-1)e^{-\frac{23}{256}d} \leq 2e^{-d/64} + (n-1)e^{-\frac{23}{256}d}
    \end{align*}
    Therefore, with probability at least $1-e^{-\Theta(d)}$ it holds that $V_j \leq -\frac{\sigma^2}{2} \leq -\frac{23d}{64}$. Let us call this event $B$ and condition on it. 

    Now, we focus on the $i$-th row of $W$. For $j \neq i$, we have:
    $$
    W_{ij} = \frac{\exp(V_j)}{1+\sum_{k \neq i} \exp(V_k)} \leq \exp(V_j) \leq \exp\left(-\frac{23}{64}d\right)
    $$
    Conditioning on $H_d$, we have proven that:
    \begin{align*}
        \Pr\left[\max\limits_{j\neq i}W_{ij} > \exp\left(-\frac{41d}{64}\right)\right] \leq O\left(ne^{-\Theta(d)}\right)
    \end{align*}
    In other words, $\boxed{W_{ij} \overset{p}{\to} 0$ as $d\to \infty}$ with an exponential tail.

    Similarly, when $i = j$, we have that:
    $$
    W_{ii} = \frac{1}{1+\sum_{k \neq i} \exp(V_k)} \leq \frac{1}{1-(n-1)\exp(-\frac{23}{64}d)}
    $$
    meaning that $\boxed{W_{ii} \overset{p}{\to} 1}$. 

    Now that we have shown that $W \overset{p}{\to} I_n$ as $d\to \infty$, let us consider the product $W_{i,:}\cdot XW_V$. We want to treat this product as $e_iXA$, so we show that the error goes to $0$:
    $$
    W_{i,:}\cdot XA - e_1\cdot XW_V = \sum\limits_{j\neq i} w_{ij} x_j W_V =:E_X
    $$
    We have by the weighted Cauchy-Schwatz inequality that:
    \begin{align*}
    \mathbb{E}\left[||E_X||_2^2\right] &\leq \left(\sum\limits_{j\neq i}W_{ij}\right)\cdot \left(\sum\limits_{j\neq i} W_{ij} \mathbb{E} ||x_j W_V||_2^2\right)\\
    &\leq (n-1)\sum\limits_{j\neq i}\E[M_d \cdot ||x_j W_V||_2^2]
    \end{align*}
    where $M_d := \max_{j\neq i}W_{ij} \in [0,1]$. We know that $M_d \to 0$ in probability, so by the dominated convergence theorem we get that $\E[M_d] \to 0$ as $d \to \infty$. Also, $\E\left[\sum\limits_{j\neq i} ||x_j W_V||_2^2\right]$ is a constant with respect to $d$. Applying Cauchy-Schwartz again, we get:
    \begin{align*}
        \E[||E_X||_2^2] &\leq (n-1) \sqrt{\E[M_d^2]\cdot \E\left[\sum\limits_{j\neq i} ||x_j W_V||_2^2\right]}\underset{d\to\infty}{\to} 0
    \end{align*}
    Similarly, we can also show that $E[||E_Y||_2^2] \to 0$ as $d\to \infty$.

    With that, if we fix $j \in [d]$, let $\varepsilon_{X,i,j} := f(X)_{i,j} - (x_i A)_j$. We have:
    $$
    \boxed{\E[\varepsilon_{X,i,j}^2] \leq \E[||E_X||_2^2] \to 0,\quad\quad \E[\varepsilon_{Y,i,j}^2] \leq \E[||E_Y||_2^2] \to 0}
    $$

    We can therefore claim that:
    \begin{align*}
        \boxed{\E[f(X)_{i,j}\cdot f(Y)_{i,j}] = \E[(x_i W_V)_j \cdot (y_i W_V)_j] + o(1)}
    \end{align*}
    To see this, observe that:
    \begin{align*}
    f(X)_{i,j}f(Y)_{i,j} - (x_i W_V)_j(y_i W_V)_j  = (x_i W_V)_j \cdot\varepsilon_{X,i,j} + (y_i W_V)_j\cdot \varepsilon_{Y,i,j} + \varepsilon_{X,i,j}\cdot \varepsilon_{Y,i,j}
    \end{align*}
    Each term has expectation $o(1)$:
    \begin{itemize}
        \item $\E[(x_i W_V)_j \cdot \varepsilon_{Y, i,j}] \leq \sqrt{\E[(x_i W_V)_j^2]}\cdot \sqrt{\E[\varepsilon_{Y,i,j}^2]} = ||(W_V)_{:,j}||_2\cdot o(1) = o(1)$
        \item $\E[(y_i W_V)_j \cdot \varepsilon_{X, i,j }] = o(1)$ symmetrically.
        \item $\E[\varepsilon_{X,i,j}\cdot \varepsilon_{Y,i,j}] \leq \sqrt{\E[\varepsilon_{X,i,j}^2]}\cdot \sqrt{\E[\varepsilon_{Y,i,j}^2]} = o(1) \cdot o(1) = o(1)$.
    \end{itemize}
    Finally, we can obtain the final calculation:
    \begin{align*}
        \lim\limits_{d\to \infty}\E[f(X)_{i,j}f(Y)_{i,j}] &= \E[(x_i W_V)_j \cdot (y_i W_V)_j] + o(1) \\
        &=\E[\langle x_i, (W_V)_{:,j}\rangle \cdot \langle y_i, (W_V)_{:,j}\rangle] + o(1) \\
        &= \sum\limits_{k,\ell} (W_V)_{k,j} (W_V)_{\ell,j} \cdot \E[x_{i,k}\cdot y_{i,\ell}] + o(1)\\
        &= \rho\cdot \sum\limits_{\ell} (W_V)_{\ell,j}^2 + o(1)\tag{By definition of $(X,Y)$}\\
        &= \rho \cdot ||(W_V)_{:,j}||_2^2 + o(1)
    \end{align*}
    as claimed. 
\end{proof}

\section{Noise Stability Propagation in the Unstructured Case}
\label{sec:proof-unstructured}
In this section we prove the following theorem:
\begin{theorem} 
The noise stability of an attention layer in the unstructured case is:
\begin{align*}
\lim_{d\to\infty}\mathbb{E}[f(X)_{ij}f(Y)_{ij}] &\overset{p}{=} \rho \cdot s(\rho) \cdot ||(W_V)_{:,j}||_2^2+o(1),\text{ with:}\\
s(\rho) &= n\int\limits_{-\infty}^{\infty}\int\limits_{-\infty}^{\infty} \Phi_{\rho^2}(x,y)^{n-1}f_{\rho^2}(x,y)dxdy
\end{align*}
where $\Phi_c$ is the joint CDF of a bivariate normal distribution with correlation $c$ and $f_c$ is the respective PDF.
\end{theorem}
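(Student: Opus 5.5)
The plan mirrors the proof of \Cref{thm:stability-of-attention-linear-case}. Write $W := W_QW_K^T$ with i.i.d.\ Gaussian entries, and let $A_X = \sigma(XWX^T)$, $A_Y = \sigma(YWY^T)$.

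\textbf{Step 1: the attention matrices are one-hot.} I would first show that, as $d\to\infty$, each row of $A_X$ becomes one-hot with probability $1-o(1)$. Condition on $W$ and fix row $i$. The logits $\ell_k := x_i^T W x_k$ have fluctuations of order $d$ or larger; for $k\neq i$ their typical scale is $\|W^Tx_i\|_2 \approx d$. Gaps between distinct logits are therefore of order $d$, and the softmax saturates. The formal tool is an anti-concentration bound: $\Pr[|\ell_k-\ell_{k'}| < t]$ is small for $t=o(d)$. Together with a union bound over the $n^2$ pairs, this gives $A_X \to P_X$ in probability, where $P_X$ is the argmax selection matrix. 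The same holds for $A_Y$. As in the identity case, the residual error $\sum_{k\neq k^*} (A_X)_{ik} x_k W_V$ is controlled by the weighted Cauchy--Schwarz inequality and dominated convergence. This yields
$$
\E[f(X)_{ij}f(Y)_{ij}] = \E\big[(x_{k}W_V)_j (y_{k'}W_V)_j\big] + o(1),
$$
where $k = \arg\max_m \ell_m(X)$ and $k'$ is the corresponding argmax for $Y$.

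\textbf{Step 2: split on whether the selections agree.} I would decompose the expectation as
$$
\E\big[(x_kW_V)_j(y_{k'}W_V)_j\mathbf{1}[k=k']\big] + \E\big[\cdots\mathbf{1}[k\neq k']\big].
$$
On $\{k\neq k'\}$, the rows $x_k$ and $y_{k'}$ come from different token positions and are independent. The selection events depend on $x_k$ only through its $d$-dimensional inner products, so conditioning on them affects a fixed coordinate projection $\langle x_k,(W_V)_{:,j}\rangle$ only by $o(1)$. The second term is therefore $o(1)$. On $\{k=k'\}$, the pair $(x_k,y_k)$ is asymptotically still $\rho$-correlated Gaussian in that fixed projection. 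The first term then equals $\rho\,\|(W_V)_{:,j}\|_2^2\cdot\Pr[k=k'] + o(1)$. Making both approximate-independence claims rigorous is the main obstacle. I would try to handle it by writing $x_k = P x_k + P^\perp x_k$, where $P$ projects onto the span of $(W_V)_{:,j}$. I would then show that the argmax is asymptotically measurable with respect to $P^\perp$-components, since a single direction changes the logits by $O(\sqrt d)$, while the gaps are of order $d$.

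\textbf{Step 3: compute $s(\rho)=\Pr[k=k']$.} The logit vectors $(\ell_m(X))_m$ and $(\ell_m(Y))_m$ are bilinear in the rows. Conditionally on the $i$-th rows and on $W$, a CLT in $d$ makes the normalized logit pairs $(\ell_m(X),\ell_m(Y))$, for $m\neq i$, i.i.d.\ across $m$. Each pair is bivariate normal with correlation $\langle W^Tx_i, W^Ty_i\rangle\rho/(\|W^Tx_i\|\|W^Ty_i\|)\to\rho\cdot\rho=\rho^2$: one factor $\rho$ comes from $x_m$ versus $y_m$, and the other from $x_i$ versus $y_i$ after the random projection by $W$. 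Treating all $n$ positions exchangeably (the diagonal term $x_i^TWx_i$ has mean zero and the same leading-order scale, so I would argue it is asymptotically exchangeable too), I would write
$$
\Pr[k=k'] = n\,\Pr[\text{index }1\text{ is the argmax in both coordinates}].
$$
Conditioning on the value $(x,y)$ of pair $1$, every other pair must satisfy $U\le x$ and $V\le y$, which has probability $\Phi_{\rho^2}(x,y)$, independently across the $n-1$ pairs. Integrating against $f_{\rho^2}(x,y)$ gives exactly the stated $s(\rho)$. Finally, the convergence holds in probability over $W$, which is why the statement uses $\overset{p}{=}$.
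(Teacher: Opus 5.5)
Your proposal is correct. It shares the paper's skeleton: softmax rows collapse onto the argmax, you split on whether the two argmaxes agree, and you read off $s(\rho)$ from $n$ exchangeable bivariate normal pairs with correlation $\rho^2$. The difference is where the Gaussian structure comes from.

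The paper conditions on the inputs and uses the randomness of $W$. Given $X,Y$, each logit $x_i^TWx_m$, $y_i^TWy_m$ is linear in $W$. So all $2n$ logits, the diagonal one included, are exactly jointly Gaussian, with covariances $\|x_i\|_2^2\langle x_m,x_{m'}\rangle$ and $\langle x_i,y_i\rangle\langle x_m,y_{m'}\rangle$. The correlation $\rho^2$ and the asymptotic independence across positions then follow from concentration of these Gram entries, with no special case for $m=i$. This conditioning also makes the de-conditioning step (which the paper leaves implicit) easy. The law of $(k,k')$ depends on $(X,Y)$ only through these concentrating inner products, so Cauchy--Schwarz decouples it from the values $\langle x_m,(W_V)_{:,j}\rangle$. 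The price is that the result is naturally averaged over $W$.

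You condition on $W$ and on the $i$-th rows instead. This gives the result for a typical fixed $W$, which is arguably the more natural reading of the stability of a given layer and of $\overset{p}{=}$. It also makes the off-diagonal pairs exactly i.i.d.\ bivariate normal, so there you need only concentration of $\langle W^Tx_i,W^Ty_i\rangle/(\|W^Tx_i\|_2\|W^Ty_i\|_2)$, not a CLT.

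In exchange, you genuinely need the two ingredients you flag. First, a CLT for the quadratic forms $x_i^TWx_i, y_i^TWy_i$ given $W$. Their correlation is exactly $\rho^2$, and asymptotic normality follows from $\|W+W^T\|_{op}=O(\sqrt d)=o(\|W+W^T\|_F)$. Second, your projection argument, which works as you describe: removing the $(W_V)_{:,j}$-components shifts every logit by $O(\sqrt d)$, against $\Theta(d)$ anti-concentrated gaps, and those components are independent of the orthogonal parts and of $W$. This argument supplies rigor that the paper's proof omits.
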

\begin{proof}
    We first prove that each row of the softmax matrix is a permutation:
    \begin{lemma}
    Let $P_{i,j} := x_i^T W x_j$. Then:
    $$
    \sigma(P_{i,1},...,P_{i,n})\overset{p}{\to} e_k,\,\text{ where }\, k=\mathop{\arg\max}\limits_{j \in [n]} P_{i,j}
    $$
    A similar statement also holds for $Y$.
    \end{lemma}
    \begin{proof}
        The proof of this statement is very similar to our argument from \Cref{thm:stability-of-attention-linear-case}. We have that:
        $$
        P_{ij} = \sum\limits_{a,b\in[d]}x_{i,a}W_{a,b}x_{j,b}
        $$
        Conditioning on $X$, $P_{ij}$ is normal with mean $0$ and variance $||x_i||_2^2\cdot ||x_j||_2^2 = \Theta(d^2)$ (\Cref{lemma:laurent-massart}). So we can write $P_{ij} = d\cdot Y_{ij}$ where $Y_{ij} \sim \mathcal{N}(0,1)$ conditioned on $X$. Let $k^* = \mathop{\arg\max}\limits_{j \in [n]} P_{i,j} = \mathop{\arg\max}\limits_{j \in [n]} Y_{i,j}$. Let $\Delta_k = P_{ik^*}-P_{ik} = d(Y_{ik^*}-Y_{ik})$. We know that for $k \neq k^*$ the quantity $\Delta_{k}$ converges to $\infty$ in probability as $d\to\infty$, so:
        $$
        \frac{e^{P_{ik}}}{\sum_{s}e^{P_{is}}} = \frac{e^{-\Delta_{k}}}{1+\sum_{s\neq k^*} e^{-\Delta_s}}\overset{p}{\to} 0
        $$
        as $d\to \infty$. For $k = k^*$ however, we have:
        $$
        \frac{e^{P_{ik^*}}}{\sum_{s}e^{P_{is}}} = \frac{1}{1+\sum_{s\neq k^*} e^{-\Delta_s}} \overset{p}{\to} 1
        $$
        This establishes the lemma.
    \end{proof}
    Now, let $L_{ij} = y_i^TWy_j$ and let $s = \Pr[\mathop{\arg\max}\limits_{j \in [n]} P_{i,j} = \mathop{\arg\max}\limits_{j \in [n]} L_{i,j}]$. We know that $P_{ij}$ and $L_{ij}$ have correlation $\rho^2$ as:
    \begin{align*}
        \mathbb{E}\left[\left(\sum\limits_{a,b\in[d]}x_{i,a}W_{a,b}x_{j,b}\right)\cdot\left(\sum\limits_{a,b\in[d]}y_{i,a}W_{a,b}y_{j,b}\right)\right] = \sum\limits_{a,b,c,d\in[n]} \mathbb{E}[x_{i,a}x_{j,b}y_{i,c}y_{j,c}W_{a,b}W_{c,d}] = d\rho^2
    \end{align*}
    The joint conditional distribution of both $P_{i,j}$ and $L_{i,j}$ is a standard bivariate normal with correlation $\rho^2$:
    $$
    f_{\rho^2}(x,y) = \frac{1}{2\pi\sqrt{1-\rho^4}}\exp\left(-\frac{x^2-2\rho^2 xy + y^2}{2(1-\rho^4)}\right)
    $$
    It's CDF is:
    $$
    \Phi_{\rho^2}(x,y) = \int\limits_{-\infty}^x \int\limits_{-\infty}^y f_{\rho^2}(u,v)\,du\,dv
    $$
    Thus, we can calculate:
    \begin{align*}
        s=s(\rho)=n\int\limits_{-\infty}^{\infty}\int\limits_{-\infty}^{\infty} \Phi_{\rho^2}(x,y)^{n-1}f_{\rho^2}(x,y)dxdy
    \end{align*}
    The remainder of the proof concludes as in the main text by considering the events of the maxima matching or not, taking the expectation and de-conditioning.
\end{proof}

\section{Full Noise Stability Dampening in Multi-Layer Transformers}
\label{sec:multi-layer-transformer-dampening}
Let $W_Q W_K = I$ and $||(W_V)_{:,j}||_2 = \gamma \leq 1$. Ignoring distributional shifts, we can combine \Cref{thm:stability-of-mlp} and \Cref{thm:stability-of-attention-linear-case} to get the following recurrence:
\begin{align}
\rho_L = \frac{1}{2\pi} \left( \sqrt{1 - \gamma^4\rho_{L-1}^2} + \gamma^2\rho_{L-1}(\pi - \arccos (\gamma^2\rho_{L-1})) \right)
\end{align}
Substituting the linear approximation of \Cref{eq:taylor-approximation} and setting $\rho_1 = \frac{1}{2}$, we obtain:
$$
    \rho_L = \frac{1}{2\pi} + \frac{\gamma^2}{4}\rho_{L-1} \implies 
\rho_L = \frac{2}{\pi(4-\gamma^2)} + \left(\frac{1}{2}-\frac{2}{\pi(4-\gamma^2)}\right)\cdot \left(\frac{\gamma^2}{4}\right)^{L-1}
$$
This suggests that for $\gamma \leq 1$ the noise stability propagation through a multi-layer transformer converges to $\frac{2}{\pi(4-\gamma^2)}$. 

However, empirical verification suggests this is not the case. \Cref{fig:multi-layer-transformer-dampening} shows that for $\gamma < 1$ we observe full dampening, while for $\gamma = 1$ weak dampening remains. This is due to the fact that for $\gamma \neq 1$ the output of the Transformer layer decreases exponentially with $\gamma$, which also affects the noise stability. 
\begin{figure}[h]
    \centering
    \includegraphics[width=0.8\linewidth]{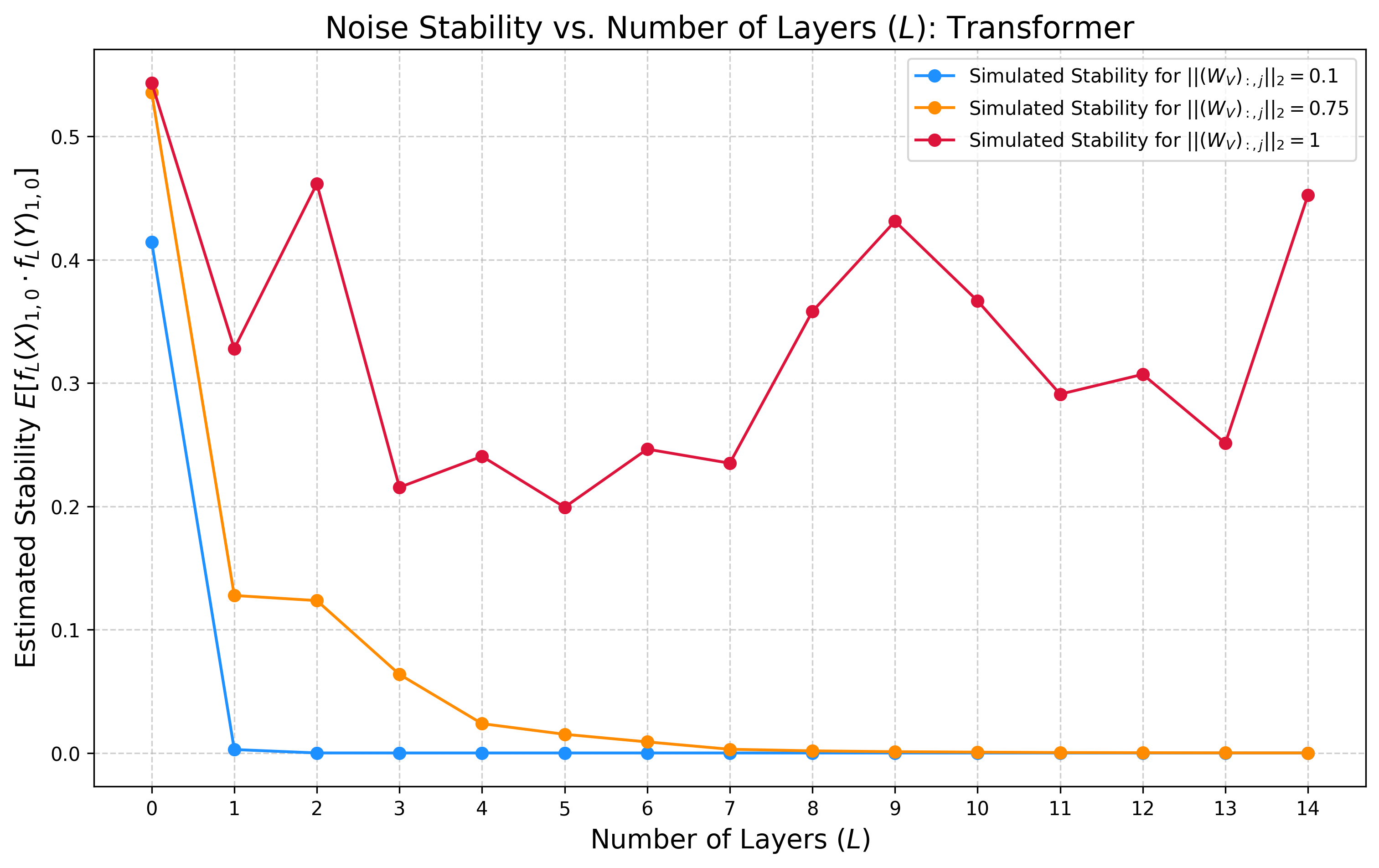}
    \caption{Transformer exhibits full dampening in the multi-layer setting.}
    \label{fig:multi-layer-transformer-dampening}
\end{figure}

\section{Noise Stability Interval Propagation}
In this section we present some analytical results on the propagation of noise stability intervals. These are lemmata that provide more flexibility towards the task of analyzing noise stability throughout a deep Transformer, though distributional assumptions still need to be made. Such assumptions can be quite detrimental to the validity of the analysis, as we have already seen. 
\subsection{MLP Stability Under Bonami-Beckner Gaussians}
\label{section:stability-intervals-mlp}
\begin{lemma}[Propagation of Stability in MLP Layer]
\label{lemma:mlp-bonami-beckner}
Let $X \in \mathbb{R}^{n\times d}$ be a random variable where $X_{ij} \sim \mathcal{N}(\mu_{ij}, \sigma_{ij})$ (not necessarily independent of each other). Consider a random variable $Y$ generated by a scaled Bonami-Beckner noise process:
$$
Y_{ij} = \begin{cases}
    \alpha X_{ij},&\text{ with probability $\rho_{ij} \in [0,1]$}\\
    \sim \mathcal{N}(\mu_{ij},\sigma_{ij}),&\text{ iid, otherwise}
\end{cases}
$$
where $\alpha > 0$. Let $\phi:\mathbb{R}\to\mathbb{R}$ be the element-wise ReLU function. Then we have that:
$$
\mathbb{E}[\phi(X_{ij})\cdot \phi(Y_{ij})] = \rho_{ij} \alpha E_1 + (1-\rho_{ij})E_2^2
$$
for all $(i,j) \in [n]\times [d]$, where:
\begin{align*}
E_1 &= \sigma_{ij}f\left(\frac{\mu_{ij}}{\sigma_{ij}}\right)+\mu_{ij}\Phi\left(\frac{\mu_{ij}}{\sigma_{ij}}\right)\\
E_2 &= (\mu_{ij}^2+\sigma_{ij}^2)\Phi\left(\frac{\mu_{ij}}{\sigma_{ij}}\right) + \sigma_{ij}\mu_{ij} f\left(\frac{\mu_{ij}}{\sigma_{ij}}\right)
\end{align*}
and $f,\Phi$ is the PDF and CDF of the standard $\mathcal{N}(0,1)$ Gaussian distribution. 
\label{lm:stability-prop-through-mlp}
\end{lemma}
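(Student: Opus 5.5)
The plan is to condition on which branch of the noise process generates $Y_{ij}$, and then reduce each branch to a one-dimensional truncated-Gaussian moment. Fix $(i,j)$ and write $\mu=\mu_{ij}$, $\sigma=\sigma_{ij}$, where I read $\sigma$ as the standard deviation. Let $B$ be the indicator that the first branch is taken, so $\Pr[B=1]=\rho_{ij}$, and $B$ is independent of $X$ and of the fresh sample. Only the marginal of $X_{ij}$ enters, so the possible dependence between different entries of $X$ is irrelevant. By the tower rule,
$$
\E[\phi(X_{ij})\phi(Y_{ij})] = \rho_{ij}\,\E[\phi(X_{ij})\phi(\alpha X_{ij})] + (1-\rho_{ij})\,\E[\phi(X_{ij})\phi(X'_{ij})],
$$
where $X'_{ij}$ is an independent copy with law $\mathcal{N}(\mu,\sigma^2)$.

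\textbf{First branch.} Since $\alpha>0$, ReLU is positively homogeneous, so $\phi(\alpha x)=\alpha\phi(x)$. The term therefore equals $\alpha\,\E[\phi(X_{ij})^2]$.

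\textbf{Second branch.} Independence factorizes the term into $\E[\phi(X_{ij})]\,\E[\phi(X'_{ij})] = \E[\phi(X_{ij})]^2$.

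\textbf{The two moments.} Substitute $X_{ij}=\mu+\sigma Z$ with $Z\sim\mathcal{N}(0,1)$; then $X_{ij}>0$ exactly when $Z>-\mu/\sigma$. For the first moment I will use $\int_{-c}^\infty z f(z)\,dz = f(c)$ together with $\Pr[Z>-c]=\Phi(c)$. This gives $\E[\phi(X_{ij})] = \sigma f(\mu/\sigma) + \mu\Phi(\mu/\sigma)$.

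For the second moment, expand $(\mu+\sigma Z)^2$ and integrate term by term over $\{Z>-c\}$ with $c=\mu/\sigma$. The needed integrals are $\int_{-c}^\infty z f\,dz=f(c)$ and $\int_{-c}^\infty z^2 f\,dz=\Phi(c)-c f(c)$; the latter follows by integrating by parts with $z f(z)=-f'(z)$. Collecting terms gives $\E[\phi(X_{ij})^2]=(\mu^2+\sigma^2)\Phi(c)+2\mu\sigma f(c)-\mu\sigma f(c)=(\mu^2+\sigma^2)\Phi(c)+\sigma\mu f(c)$.

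\textbf{Main obstacle: matching the stated labels.} The computation gives $\E[\phi(X_{ij})] = E_1$ and $\E[\phi(X_{ij})^2]=E_2$ in the notation of the lemma. The identity it produces is therefore $\rho_{ij}\alpha E_2 + (1-\rho_{ij})E_1^2$, with the roles of $E_1$ and $E_2$ interchanged relative to the displayed formula.

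As a sanity check, take $\mu=0$, $\sigma=1$, $\alpha=1$, $\rho_{ij}=1$, so that $Y_{ij}=X_{ij}$. The left side is then $\E[\phi(Z)^2]=1/2$, which equals $E_2$, whereas $E_1=1/\sqrt{2\pi}$. So the correct statement must pair $E_2$ with $\rho_{ij}\alpha$ and $E_1^2$ with $(1-\rho_{ij})$. The stated formula holds only after swapping the labels $E_1\leftrightarrow E_2$ in the final display, and I would present the proof in that corrected form.

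The two truncated-moment integrals are routine. The only genuine care needed is in this bookkeeping, and in interpreting $\sigma_{ij}$ as a standard deviation rather than a variance.
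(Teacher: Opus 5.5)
Your proof is correct and follows essentially the same route as the paper. Both condition on the noise branch, use $\phi(\alpha x)=\alpha\phi(x)$ and independence to get $\rho_{ij}\alpha\,\mathbb{E}[\phi(X_{ij})^2]+(1-\rho_{ij})\,\mathbb{E}[\phi(X_{ij})]^2$, and then compute the two truncated-Gaussian moments by the substitution $X_{ij}=\mu+\sigma Z$ and integration by parts. Your diagnosis of the labels is also right: the paper's own proof defines $E_1:=\mathbb{E}[\phi(X_{ij})^2]$ and $E_2:=\mathbb{E}[\phi(X_{ij})]$, so the displayed identity is what is actually proved, while the closed forms given for $E_1,E_2$ in the statement are interchanged, as your $\mu=0,\sigma=1$ check confirms.
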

\begin{proof}
Let us fix some $i,j$ and use the law of total expectation:
\begin{align*}
\mathbb{E}[\phi(X_{ij})\phi(Y_{ij})] = \rho_{ij}\alpha  \mathbb{E}[\phi(X_{ij})^2] + (1-\rho_{ij})\mathbb{E}[\phi(X_{ij})]^2 
\end{align*}
For the second moment, we have:
\begin{align*}
\mathbb{E}[\phi(X_{ij})^2] &= \int\limits_0^{\infty} z^2 \frac{1}{\sqrt{2\pi \sigma_{ij}^2}} e^{-\frac{(z-\mu_{ij})^2}{2\sigma_{ij}^2}}\\
&=\frac{1}{\sqrt{2\pi}} \int_{-\mu_{ij}/\sigma_{ij}}^\infty (u\sigma_{ij}+\mu_{ij})^2 e^{-\frac{u^2}{2}} du\tag{$u=\frac{z-\mu_{ij}}{\sigma_{ij}}$}\\
&=\frac{\sigma_{ij}^2}{\sqrt{2\pi}}\int_{-\mu_{ij}/\sigma_{ij}}^\infty u^2 e^{-\frac{u^2}{2}} du+
\frac{2\sigma_{ij}\mu_{ij}}{\sqrt{2\pi}}\int_{-\mu_{ij}/\sigma_{ij}}^\infty u e^{-\frac{u^2}{2}} du+
\frac{\mu_{ij}^2}{\sqrt{2\pi}}\int_{-\mu_{ij}/\sigma_{ij}}^\infty e^{-\frac{u^2}{2}} du\\
\end{align*}
For the third summand in this expression, we have that:
$$
\frac{\mu_{ij}^2}{\sqrt{2\pi}}\int_{-\mu_{ij}/\sigma_{ij}}^\infty e^{-\frac{u^2}{2}} du\\ = \mu_{ij}^2\left(1-\Phi\left(-\frac{\mu_{ij}}{\sigma_{ij}}\right)\right) = \mu_{ij}^2\Phi\left(\frac{\mu_{ij}}{\sigma_{ij}}\right)\\
$$
For the second summand, it is:
$$
\frac{2\sigma_{ij}\mu_{ij}}{\sqrt{2\pi}}\int_{-\mu_{ij}/\sigma_{ij}}^\infty u e^{-\frac{u^2}{2}} du = \frac{2\sigma_{ij}\mu_{ij}}{\sqrt{2\pi}}\left[-e^{u^2/2}\right]_{-\mu_{ij}/\sigma_{ij}}^{\infty} = 2\mu_{ij}\sigma_{ij}f\left(\frac{\mu_{j}}{\sigma_{ij}}\right)
$$
For the first summand, we can use integration by parts to get:
\begin{align*}
\frac{\sigma_{ij}^2}{\sqrt{2\pi}}\int_{-\mu_{ij}/\sigma_{ij}}^\infty u^2 e^{-\frac{u^2}{2}} du &= \frac{\sigma_{ij}^2}{\sqrt{2\pi}}\cdot\left( \left[-ue^{-u^2/2}\right]_{-\mu_{ij}/\sigma_{ij}}^{\infty} + \int_{-\mu_{ij}/\sigma_{ij}}^\infty e^{-\frac{u^2}{2}}du\right)\\
&=\sigma_{ij}^2 \frac{-\mu_{ij}}{\sigma_{ij}} f\left(\frac{\mu_{ij}}{\sigma_{ij}}\right) + \sigma_{ij}^2 \Phi\left(\frac{\mu_{ij}}{\sigma_{ij}}\right)
\end{align*}
Combining, we get an expression for the second moment:
\begin{align*}
E_1 := \E[\phi(X_{ij})^2] &= \sigma_{ij}^2 \frac{-\mu_{ij}}{\sigma_{ij}} f\left(\frac{\mu_{ij}}{\sigma_{ij}}\right) + \sigma_{ij}^2 \Phi\left(\frac{\mu_{ij}}{\sigma_{ij}}\right)+2\mu_{ij}\sigma_{ij}f\left(\frac{\mu_{j}}{\sigma_{ij}}\right)+\mu_{ij}^2\Phi\left(\frac{\mu_{ij}}{\sigma_{ij}}\right)\\
&=
(\sigma_{ij}^2+\mu_{ij}^2)\Phi\left(\frac{\mu_{ij}}{\sigma_{ij}}\right)+\mu_{ij}\sigma_{ij} f\left(\frac{\mu_{j}}{\sigma_{ij}}\right)
\end{align*}
Now for the mean, we have:
\begin{align*}
\E[\phi(X_{ij})] &= \int\limits_0^{\infty} z \frac{1}{\sqrt{2\pi \sigma_{ij}^2}} e^{-\frac{(z-\mu_{ij})^2}{2\sigma_{ij}^2}}\\
&=\frac{1}{\sqrt{2\pi}} \int_{-\mu_{ij}/\sigma_{ij}}^\infty (u\sigma_{ij}+\mu_{ij}) e^{-\frac{u^2}{2}} du\tag{$u=\frac{z-\mu_{ij}}{\sigma_{ij}}$}
\end{align*}
Splitting up this sum, we have:
\begin{align*}
    E_2 := \E[\phi(X_{ij})] &= \frac{\sigma_{ij}}{\sqrt{2\pi}}\int_{-\mu_{ij}/\sigma_{ij}}^\infty u e^{-\frac{u^2}{2}} du+
\frac{\mu_{ij}}{\sqrt{2\pi}}\int_{-\mu_{ij}/\sigma_{ij}}^\infty e^{-\frac{u^2}{2}} du\\
&=\sigma_{ij}f\left(\frac{\mu_{ij}}{\sigma_{ij}}\right) + \mu_{ij}\Phi\left(\frac{\mu_{ij}}{\sigma_{ij}}\right)
\end{align*}
This concludes the proof.
\end{proof}

\subsection{Stability Propagation through a single Attention Layer}
For convenience, the following result defines a \textbf{stability matrix}, which captures the correlation of each input token with the other. Assumming all the entries in that matrix are bounded in some interval, we analyze the noise stability propagation through an attention layer. We also assume some structural properties of the weight matrices to allow for stability propagation in our proof:
\label{sec:stability-intervals-attention}
\begin{lemma}[Stability Propagation through an Attention Layer]
Let $X,Y$ have stability matrix $\{C\}_{k\ell,k'\ell'} = \E[X_{k\ell}Y_{k'\ell'}] \in \mathbb{R}^{nd\times nd}$ with $0 < \rho_\ell \leq C_{k\ell,k'\ell'} \leq \rho_r$ for all $k,k' \in [n],\ell,\ell' \in [d]$, and suppose $||X||_\infty,||Y||_\infty \leq B$ with probability $1$. Consider an attention layer with matrices $W_K,W_Q,W_V \in \mathbb{R}^{d\times d}$ and denote, for two vectors $x,y \in \mathbb{R}^d$ the following quantities:
$$
S^+(x,y) = \sum\limits_{i,j\in[d]}\max(0,x_i y_j),\quad S^-(x,y) = \sum\limits_{i,j\in[d]}\min(0,x_i y_j)
$$
Suppose that $W_V$ is such that $\rho_\ell S^+(w_j,w_{j'})+\rho_r S^{-}(w_j,w_{j'}) > 0 $ for all $j,j' \in [d]$, where $w_j = (W_V)_{:,j}$ is the $j$-th column of $W_V$. Hence, let:
\begin{align*}
R_\ell&:=\inf\limits_{j,j'\in [d]} \{\rho_\ell S^+(w_j,w_{j'})+\rho_r S^{-}(w_j,w_{j'})\} > 0\\
R_r&:=\sup\limits_{j,j'\in [d]}\{\rho_r S^+(w_j,w_{j'})\}
\end{align*}
Now, let $A(X) \in \mathbb{R}^{n\times d}$ be the output of the attention layer. Then we have that:
\begin{align}
0< \frac{R_\ell}{E} \leq \E(A(X)_{ij}A(Y)_{i'j'}) \leq R_r\cdot E
\end{align}
for all $i,i'\in [n], j,j' \in [d]$, where $E := \exp(4d^2B^2||W_K||_\infty||W_Q||_\infty)$
\label{lm:stability-prop-through-attn}
\end{lemma}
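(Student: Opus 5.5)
We would write the attention output entrywise as $A(X)_{ij} = \sum_{k\in[n]} P^X_{ik}\,(x_k W_V)_j = \sum_{k}\sum_{\ell} P^X_{ik} X_{k\ell}(W_V)_{\ell j}$, where $P^X = \sigma(XW_QW_K^TX^T)$ is the row-softmax matrix, and similarly for $Y$. The plan is to decouple the softmax weights from the linear part by bounding the weights deterministically, and then use the entrywise bounds on the stability matrix $C$.

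\textbf{Step 1: deterministic bounds on softmax weights.} Since $\|X\|_\infty\le B$ almost surely, each logit satisfies $|(XW_QW_K^TX^T)_{ik}| \le d^2 B^2 \|W_Q\|_\infty\|W_K\|_\infty$ (a crude bound summing over the $d^2$ index pairs). Call this quantity $M$. Each softmax entry then lies in $[e^{-2M}/n,\, e^{2M}/n]$, and the same holds for $P^Y$. Consequently every product $P^X_{ik}P^Y_{i'k'}$ lies in $[e^{-4M}/n^2,\, e^{4M}/n^2]$. Note $e^{4M}=E$. Taking $M$ to be the bound above gives exactly the constant $E$ of the statement.

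\textbf{Step 2: expand the second moment.} We have
\[
\E[A(X)_{ij}A(Y)_{i'j'}] = \sum_{k,k'}\sum_{\ell,\ell'} (w_j)_\ell (w_{j'})_{\ell'}\,\E\big[P^X_{ik}P^Y_{i'k'} X_{k\ell}Y_{k'\ell'}\big].
\]
This is the main obstacle: the weights are random and correlated with $X_{k\ell}Y_{k'\ell'}$, so we cannot simply factor out the expectation. The first attempt is to sandwich the weights by their deterministic bounds, using $\E[\cdot]$ of the product of $X_{k\ell}Y_{k'\ell'}$ times the weight, together with the known range of $C_{k\ell,k'\ell'}$. Strictly, a sign-indefinite random variable multiplied by a bounded positive weight does not inherit the bounds of its mean. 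So we would make explicit, as the proof presumably implicitly does, a step treating $\E[P P' X Y]$ as lying between the weight bounds times $C$. One way to justify this is to condition on the weights, or to assume approximate independence of the weights from the values. Failing that, we would note this as the needed extra assumption.

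\textbf{Step 3: sign splitting.} Group terms by the sign of $(w_j)_\ell(w_{j'})_{\ell'}$.
\begin{itemize}
\item For the \textbf{lower bound}, positive coefficients get $C\ge\rho_\ell$ and negative ones get $C\le\rho_r$. Summing over $\ell,\ell'$ gives at least $\rho_\ell S^+(w_j,w_{j'})+\rho_r S^-(w_j,w_{j'})\ge R_\ell>0$, per pair $(k,k')$, times the minimal weight product. Summing over the $n^2$ pairs $(k,k')$ cancels the $1/n^2$ and yields $R_\ell/E$.
\item For the \textbf{upper bound}, drop the negative terms (which contribute nonpositively since $C>0$) and bound the positive ones by $\rho_r S^+\le R_r$. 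Multiplying by the maximal weight product and summing yields $R_r E$.
\end{itemize}
Positivity of $R_\ell/E$ follows from the hypothesis on $W_V$.
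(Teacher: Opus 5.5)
You follow the same route as the paper: bound every softmax entry deterministically in $[e^{-2M}/n,\,e^{2M}/n]$, then sign-split over $(w_j)_\ell(w_{j'})_{\ell'}$ using the entrywise bounds on $C$. The paper also takes your Step 2 without justification. It shows $\E[X_{k,:}w_j\,Y_{k',:}w_{j'}]\in[R_\ell,R_r]$ and then simply multiplies by the weight bounds. So you have identified exactly the right step.

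The gap is genuine and cannot be closed, because the lower bound in the statement is false. Take $n=2$, $d=1$, $B=1$, $W_V=W_K=1$, $W_Q=c$, so that $R_\ell=\rho_\ell$ and $R_r=\rho_r$. Set $\epsilon=1/10$ and $\eta^2=\epsilon/2$. Let the token values be:
\begin{itemize}
\item $x=(-\epsilon,1)$, $y=(\epsilon,1)$ with probability $1/4$;
\item $x=(\epsilon,1)$, $y=(-\epsilon,1)$ with probability $1/4$;
\item $x=y=(\eta,\eta)$ with probability $1/2$.
\end{itemize}
Writing $C_{kk'}$ for $C_{k1,k'1}$,
\[
C_{11}=\tfrac{\epsilon}{4}-\tfrac{\epsilon^2}{2},\qquad C_{12}=C_{21}=\tfrac{\epsilon}{4},\qquad C_{22}=\tfrac12+\tfrac{\epsilon}{4},
\]
all strictly positive. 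In row $1$, a query equal to $-\epsilon$ puts its weight on itself (logits $c\epsilon^2$ versus $-c\epsilon$). A query equal to $+\epsilon$ puts its weight on token $2$ (logits $c\epsilon^2$ versus $c\epsilon$). So as $c\to\infty$, the product $A(X)_{11}A(Y)_{11}$ tends to $-\epsilon$ in each of the first two cases and equals $\eta^2=\epsilon/2$ in the third. Hence $\E[A(X)_{11}A(Y)_{11}]\to-\epsilon/4<0$, already about $-0.025$ at $c=100$. This contradicts $\E[A(X)_{11}A(Y)_{11}]\ge R_\ell/E>0$. It is precisely the mechanism you worried about. The weight product tends to $1$ on the negative terms ($x_1y_2=-\epsilon$ in the first case, $x_2y_1=-\epsilon$ in the second) and to $0$ on the large positive term $x_2y_2=1$.

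Your suggested fixes therefore have to be read as changes to the lemma, not as justifications of it. Conditioning on the weights does not work under the given hypotheses. They only control the unconditional moments $\E[X_{k\ell}Y_{k'\ell'}]$, and conditional on $(P^X,P^Y)$ these can be negative, as the example shows. ``Approximate'' independence would only give approximate bounds.

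What does hold is this. Suppose either $\E[X_{k\ell}Y_{k'\ell'}\mid P^X,P^Y]\in[\rho_\ell,\rho_r]$ almost surely, or each weight product $P^X_{ik}P^Y_{i'k'}$ is uncorrelated with each $X_{k\ell}Y_{k'\ell'}$. Then your Step 3 goes through verbatim and gives exactly $R_\ell/E$ and $R_rE$. Your Step 3 is also done in the right order. You sum over $\ell,\ell'$ first to get a per-pair quantity in $[R_\ell,R_r]$, which is positive, and only then multiply by the extreme weight product.
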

\begin{proof}
Let $S^X = \sigma(XW_QW_K^TX^T)$ and $V^X:=XW_V$. We have that $A(X)_{ij} = \langle S_{i,:}^X , V_{:,j}^X\rangle$, so:
\begin{align*}
\E(A(X)_{ij}\cdot A(Y)_{i'j'}) &= \E\left[\langle S_{i,:}^X , V_{:,j}^X\rangle\cdot \langle S_{i,:}^Y , V_{:,j}^Y\rangle\right]\\
&=\E\left[\left(\sum\limits_{k=1}^n S_{ik}^X V_{kj}^X\right)\cdot \left(\sum\limits_{k'=1}^n S_{i'k'}^Y V_{k'j'}^Y\right)\right]\\
&=\E\left[\sum\limits_{k,k'}S_{ik}^X S_{i';k'}^Y V_{kj}^X V_{k'j'}^Y\right]
\end{align*}
Now let $q_i^X := X_{i,:}\cdot W_Q \in \mathbb{R}^d$, $k_i^X = X_{i,:}\cdot W_K$. We have:
$$
S_{ik}^X = \frac{\exp(q_{i}^X\cdot k_k)}{\sum\limits_{s=1}^n \exp(q_i^X\cdot k_s)}
$$
Therefore:
\begin{align*}
&\E(A(X)_{ij}\cdot A(Y)_{i'j'}) \\&=\E\left[\sum\limits_{k,k'}\frac{\exp(q_{i}^X k_k)}{\sum\limits_{s=1}^n \exp(q_{i}^X k_s)}\cdot \frac{\exp(q_{i'}^Y k_{k'})}{\sum\limits_{s=1}^n \exp(q_{i'}^Y k_s)}\cdot V_{kj}^X V_{k'j'}^Y\right]\\
&= \E\left[\sum\limits_{k,k'}\frac{\exp(X_{i,:}W_QW_K^T X_{k,:}^T )}{\sum\limits_{s=1}^n \exp(X_{i,:}W_QW_K^T X_{s,:}^T )}\cdot \frac{\exp(Y_{i',:}W_QW_K^T Y_{k',:}^T )}{\sum\limits_{s=1}^n \exp(Y_{i',:}W_QW_K^T Y_{s,:}^T )}\cdot X_{k,:}(W_V)_{:,j} Y_{k',:} (W_V)_{:,j'}\right]
\end{align*}
We will bound the softmax terms using the norms of $W_Q,W_K$ and $X$. For any $s_1,s_2$ we have:
\begin{align*}
    |X_{s_1,:}W_QW_K^T X_{s_2,:}^T | \leq d^2 B^2 ||W_Q||_\infty \cdot ||W_K||_\infty
\end{align*}
And this implies that:
\begin{align*}
\frac{1}{n}\exp(-2d^2B^2||W_K||_\infty||W_Q||_\infty)\leq \frac{\exp(X_{i,:}W_QW_K^T X_{k,:}^T )}{\sum\limits_{s=1}^n \exp(X_{i,:}W_QW_K^T X_{s,:}^T )} \leq \frac{1}{n}\exp(2d^2B^2||W_K||_\infty ||W_Q||_\infty)
\end{align*}
We now analyze the isolated terms
\begin{align*}
\E[X_{k,:}(W_V)_{:,j} Y_{k',:} (W_V)_{:,j}]
\end{align*}
We can use our prior information on the stability matrix $C$. Fix some $k,k'$ and we have that:
\begin{align*}
\E[X_{k,:}(W_V)_{:,j} Y_{k',:} (W_V)_{:,j}] &= \sum\limits_{\ell,\ell'}\E\left[X_{k\ell}(W_V)_{\ell j}Y_{k'\ell'}(W_V)_{\ell' j'}\right]\\
&=\sum\limits_{\ell,\ell'}(W_V)_{\ell j}(W_V)_{\ell' j'}\cdot \E\left[X_{k\ell}Y_{k'\ell'}\right]\\
&\in [R_\ell,R_r]\tag{as $\E\left[X_{k\ell}Y_{k'\ell'}\right] \in (r_\ell,r_r)$}
\end{align*}
Overall, if we let $E := \exp(4d^2B^2||W_K||_\infty||W_Q||_\infty)$,  we get that:
\begin{align*}
0< \frac{R_\ell}{E} \leq \E(A(X)_{ij} \cdot A(Y)_{i'j'}) \leq R_r\cdot E
\end{align*}
\end{proof}

\section{Noise Stability Regularization Experiments}
\label{sec:noise-stability-regularization-experiments}
\subsection{Architecture Layout, Hyperparameters and Training Details}
We present details of our architecture, training and hyperparameters. These can also be found in our codebase. Each Transformer layer uses multi-head self-attention followed by a position-wise feed-forward network, with residual connections around both sublayers. We use sinusoidal positional encodings, as well as binary attention masking \(M\). We also use dropout, applied to attention weights, attention output, and FFN hidden layer with rate \(p\). The activation function we use for our FFN is ReLU. To produce a classification label we use mean pooling over the label dimension. 

For initialization, linear layers are initialized with $\mathcal{N}(0,0.02^2)$ and zero biases. Every other learnable parameter is initialized via Xavier initialization. Positional encodings are fixed and not trainable.

For training, we use AdamW with learning rate $\eta$ and $\ell_2$ weight decay regularization $\lambda$. Our loss is the cross entropy loss. We use a learning rate scheduler that reduces $\eta$ on validation loss plateau. The patience and factor parameters of the scheduler are hyperparameters we set. Our codebase also provides support for multi-GPU and distributed training, though our models were too small to benefit from such augmentations.

\begin{longtable}{l l l}
\caption{Model Hyperparameters and Training Configuration for \textsc{Modular Addition}} \label{tab:hyperparams} \\

\toprule
\textbf{Name} & \textbf{Symbol} & \textbf{Default} \\
\midrule
\endfirsthead

\multicolumn{3}{c}%
{{\tablename\ \thetable{} -- continued from previous page}} \\
\toprule
\textbf{Name} & \textbf{Symbol} & \textbf{Default} \\
\midrule
\endhead

\midrule
\multicolumn{3}{r}{{Continued on next page}} \\
\endfoot

\bottomrule
\endlastfoot

Embedding dimension & \(d_{\text{model}}\) & 128 \\
Transformer layers & \(L\) & 2 \\
Attention heads & \(H\) & 2 \\
Max seq length (PE) & \(\texttt{max\_length}\) & 512 \\
Vocab size & \(|\mathcal{V}|\) & \(K+5\) \\
Num classes & \(C\) & \(K=113\) \\
Dropout rate & \(p\) & 0.1 \\
Batch size & \(B\) & 256 \\
Epochs & \(T\) & 7000 \\
Learning rate & \(\eta\) & 0.001 \\
Weight decay & \(\lambda\) & 0.001 \\
Label smoothing & -- & 0.0 \\
Scheduler patience & -- & 10 epochs \\
Scheduler factor & -- & 0.8 \\
Train samples & -- & 2000 \\
Val/Test samples & -- & 200 / 200 \\

\end{longtable}

\subsection{Evolution of Noise Stability During Training.}
\label{sec:noise-stability-evolution-training}
We also analyze how noise stability (with $\rho = 1/2$) evolves during training without regularization, focusing on the noisy sparse parity (NSP) task first because its target function has a known, low noise stability of $\rho^k$ \citep{o2021analysis}. We make three key observations:
\begin{itemize}
    \item A randomly initialized Transformer exhibits high noise stability, which decreases throughout training to converge toward the low stability level of the target function (\Cref{fig:noise-stability-generalization}).
    \item Noise stability is a leading indicator for generalization. \Cref{fig:noise-stability-generalization} shows that stability begins to decrease well before the sharp drop in validation loss, signaling that internal model adjustments precede performance improvements.
    \item Stability can serve as a secondary metric for model selection. Among models with similar validation loss, the one whose stability best aligns with the theoretical properties of the target function may be preferable.
\end{itemize}

\begin{figure}[t!]
    \centering
    \includegraphics[scale=0.39]{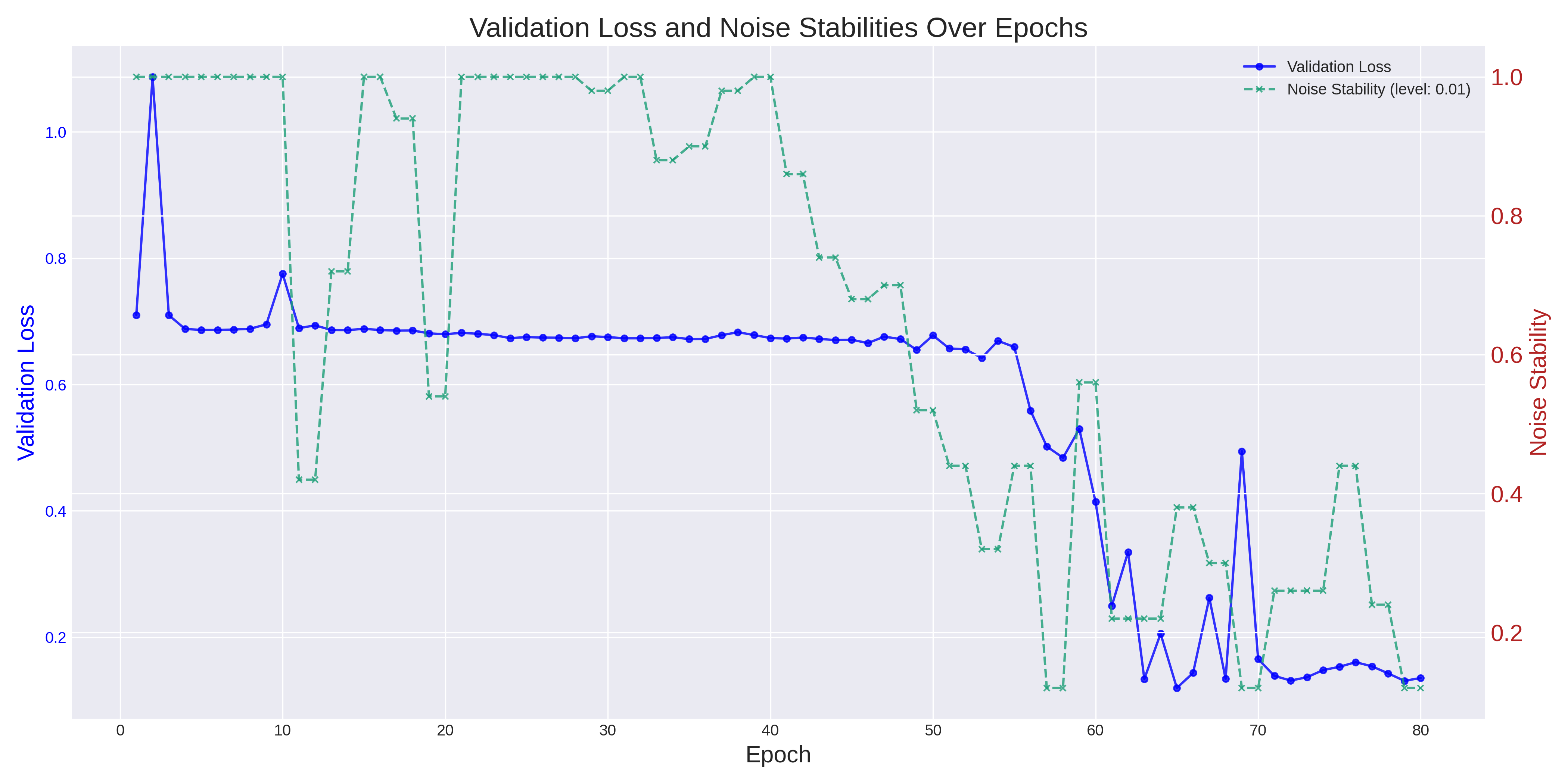}
    \caption{Evolution of noise stability and validation loss for noisy sparse parity. Stability (\textit{green}) begins to decrease before the validation loss (\textit{blue}) drops, acting as a leading indicator for generalization.}
    \label{fig:noise-stability-generalization}
\end{figure}

\subsection{Experiments on Language Generation}
\label{sec:wikitext-experiments}
We also tested noise stability regularization on a language generation task. We trained a 4 layer transformer model with $d_\text{model} = 30$ and $H =6$ on the next-token-prediction task. The dataset we used was \textit{WikiText-2-v1 (Small)}, with sequence length $N=20$, a vocabulary size of $500$, batch size of $200$ and $1000$ training examples\footnote{Our setup can also be found in full in our codebase.}. We trained our transformer without noise stability regularization, with weight decay $\lambda = 0.02$ and with numerous settings of $(\rho,\gamma)$. We tracked noise stability, validation loss and validation accuracy throughout training.

We first observe that even in this non-synthetic task noise stability regularization offers deep benefits for training. As shown in \Cref{fig:accuracies_wikitext}, the model climbs to $70\%$ accuracy within $1000$ iterations in the grokking phase. On the other hand, it takes the non-regularized model $4000$ iterations to do so, amounting to a $75\%$ speedup. The situation is similar for the validation losses: the regularized models exhibit more stability in training, while the non-regularized training fluctuates in validation loss. Noise stability regularization also causes the validation loss to decrease much faster and earlier than without it, while grokking.

Examining the noise stability at $\rho = 1/2$ for the regularized and non-regularized settings (\Cref{fig:noise_stability_wikitext}) we can see a fundamental difference between the models. The non-regularized model becomes less and less stable, which could explain its instability. Regularized models stay stable as the training dynamics force the model to improve while remaining robust. Understanding the benefits of this regularization better is definitely an interesting direction for future work.

Finally, we can see that noise stability regularization offers benefits for a variety of different settings of the hyperparameters $(\rho,\gamma)$. Ultimately however, the best performance is found via a thorough hyperparameter sweep, as \Cref{fig:losses_wikitext} shows.

\begin{figure}[h]
    \centering
    \includegraphics[width=0.85\linewidth]{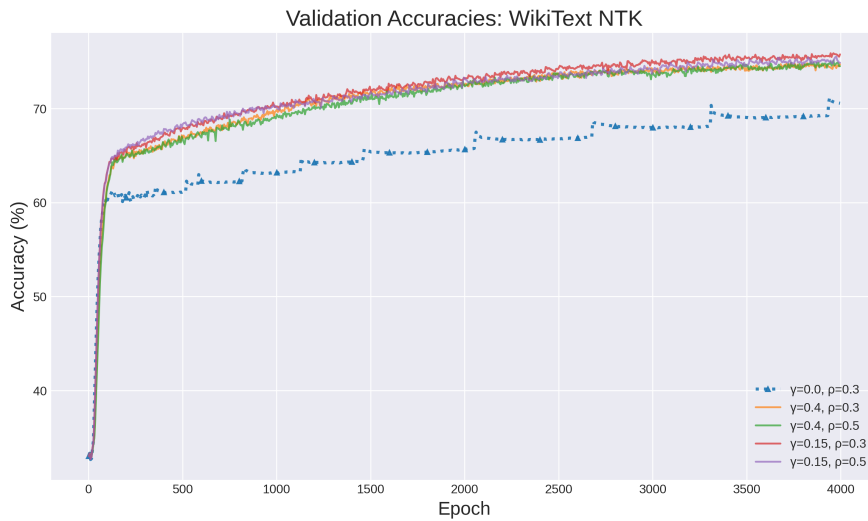}
    \caption{Accuracy Comparison on Next-Token-Prediction}
    \label{fig:accuracies_wikitext}
\end{figure}

\begin{figure}[h]
    \centering
    \includegraphics[width=0.85\linewidth]{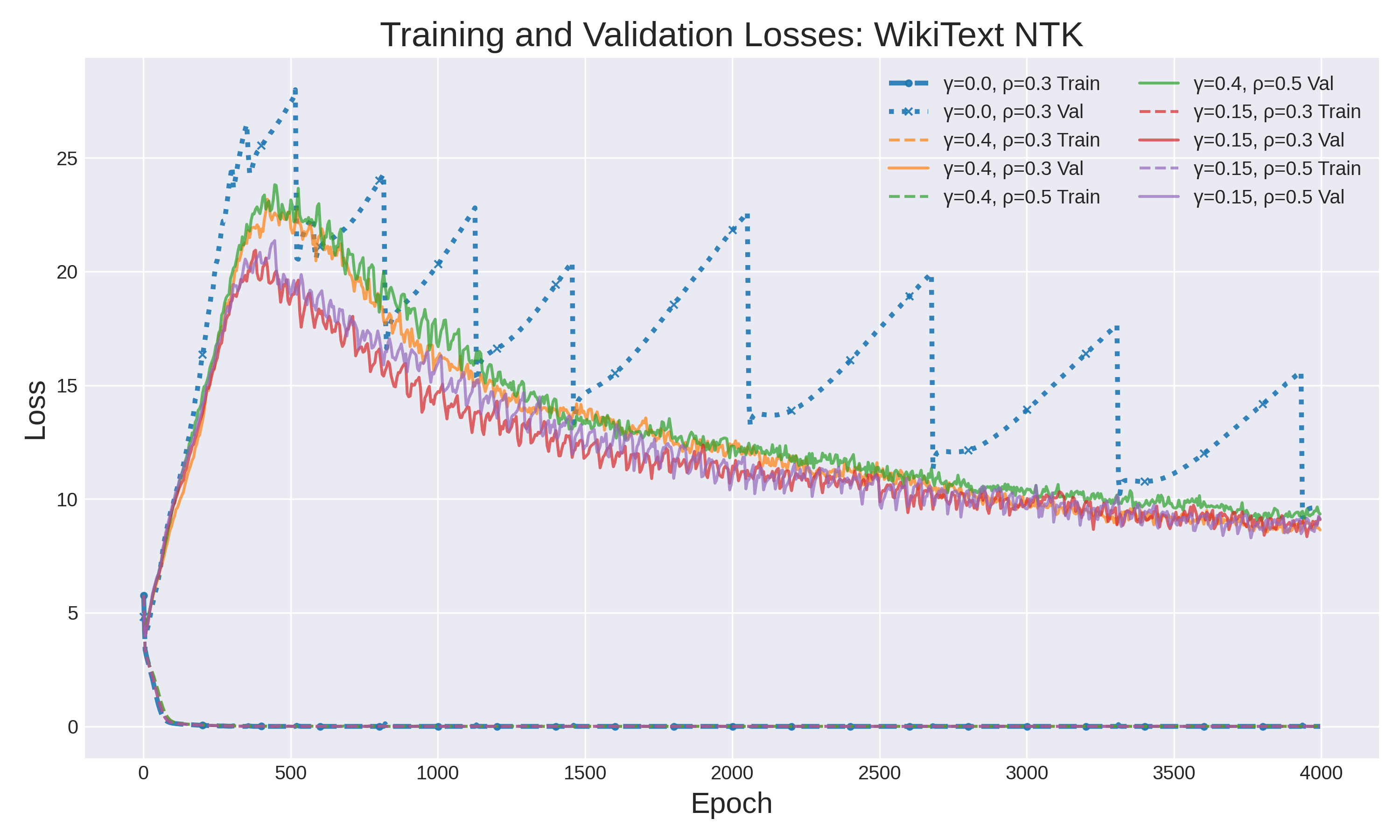}
    \caption{Training and Validation Losses for Next-Token Prediction}
    \label{fig:losses_wikitext}
\end{figure}

\begin{figure}[t]
    \centering
    \includegraphics[width=0.85\linewidth]{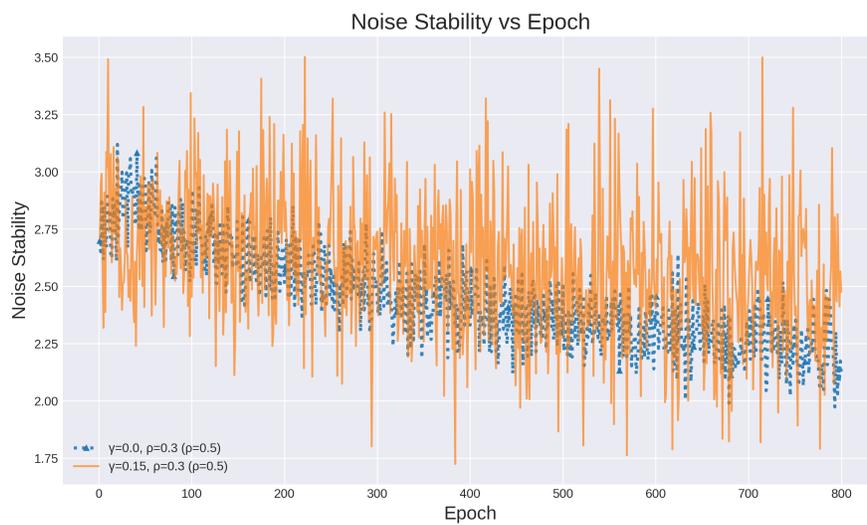}
    \caption{Noise Stability Comparison: Regularization vs Non-Regularization. We see that the non-regularized model tends to become more unstable, while regularization maintains stability.}
    \label{fig:noise_stability_wikitext}
\end{figure}

\end{document}